\theoremstyle{plain}
\newtheorem{theorem}{Theorem}
\newtheorem{proposition}{Proposition}
\newtheorem{lemma}{Lemma}
\newtheorem{corollary}{Corollary}
\theoremstyle{definition}
\theoremstyle{remark}
\newcommand{\Ac}{\mathcal{A}}
\newcommand{\Dc}{\mathcal{D}}
\newcommand{\Fc}{\mathcal{F}}
\newcommand{\Lc}{\mathcal{L}}
\newcommand{\Sc}{\mathcal{S}}
\newcommand{\Vc}{\mathcal{V}}
\newcommand{\Xc}{\mathcal{X}}
\newcommand{\Pb}{\mathbb{P}}
\newcommand{\Eb}{\mathbb{E}}
\newcommand{\Rb}{\mathbb{R}}
\newcommand{\argmax}{\arg\max}
\newcommand{\argmin}{\arg\min}
\newcommand{\diag}{\mathrm{diag}}
\title{Anchor-Changing Regularized Natural Policy Gradient for Multi-Objective Reinforcement Learning}
\author{%
  Ruida Zhou\thanks{The first two authors contributed equally.} \\
  Texas A\&M University\\
  \texttt{ruida@tamu.edu} \\
  \And
  Tao Liu$^*$\\
  Texas A\&M University\\
  \texttt{tliu@tamu.edu} \\
  \And
  Dileep Kalathil \\
  Texas A\&M University\\
  \texttt{dileep.kalathil@tamu.edu}\\
  \And
  P. R. Kumar \\
  Texas A\&M University\\
  \texttt{prk@tamu.edu} \\
  \And
  Chao Tian \\
  Texas A\&M University\\
  \texttt{chao.tian@tamu.edu}
}
\begin{document}

\maketitle

\begin{abstract}
We study policy optimization for Markov decision processes (MDPs) with multiple reward value functions, which are to be jointly optimized according to given criteria such as proportional fairness (smooth concave scalarization), hard constraints (constrained MDP), and max-min trade-off. We propose an Anchor-changing Regularized Natural Policy Gradient (ARNPG) framework, which can systematically incorporate ideas from well-performing first-order methods into the design of policy optimization algorithms for multi-objective MDP problems. Theoretically, the designed algorithms based on the ARNPG framework achieve $\tilde{O}(1/T)$ global convergence with exact gradients. Empirically, the ARNPG-guided algorithms also demonstrate superior performance compared to some existing policy gradient-based approaches in both exact gradients and sample-based scenarios. 
\end{abstract}

\section{Introduction}\label{intro}
In many sequential decision-making scenarios, agents usually face multiple objectives simultaneously. This motivates the study of reinforcement learning (RL) with multiple reward values $V^{\pi}_{1:m}(\rho)$. \footnote{The notations are formally defined in Section \ref{sec:prelim}.} Given the achievable region $\Vc = \{V^{\pi}_{1:m}(\rho)\}_{\pi \in \Pi}$ consisting of value vectors achieved by policies in policy class $\Pi$, the agent employs certain criteria to reflect the system requirement. For example, 
\begin{compactitem}[\hspace{0.5cm}]
    \item[1.] Proportional fairness \cite{kelly1998rate}: Given $a_{1:m} > 0$, find $v \in \Vc$ that $\sum_{i=1}^m a_i \frac{v_i' - v_i}{v_i} \leq 0,~\forall v' \in \Vc$.
    \item[2.] Hard constraints \cite{altman1999constrained}: Given $b_{2:m}$, $\text{maximize}_{v \in \Vc} \ v_1$, subject to $v_i \geq b_i, \forall i = 2,\ldots, m$.
    \item[3.] Max-min trade-off \cite{bertsekas2021data}: Given $c_{1:m} > 0$, $\text{maximize}_{v \in \Vc} \min_{i \in [m]} \ (v_i / c_i)$.
\end{compactitem}

We study policy gradient-based approaches that optimize over parameterized policies $\Pi = \{\pi_\theta: \theta \in \Theta\}$ through policy gradient. 
In general, the optimization problems above may not be convex in terms of $\theta$, not even for single-objective MDPs with direct parameterization by $\theta_{s,a} = \pi_\theta(a|s)$ \cite{agarwal2021theory}. Due to the non-convexity,  $O(1/T)$ global convergence of policy gradient-based methods was only established very recently for single-objective MDPs with exact gradients \cite{agarwal2021theory, mei2020global}. These breakthrough results have motivated the study of policy optimization for multi-objective MDPs, e.g., smooth concave scalarization \cite{bai2021joint}, constrained MDPs (CMDPs) \cite{ding2020natural, xu2021crpo}.

However, under the exact gradients scenario, the previous approaches for multi-objective MDPs, either suffer from slow provable $O(1/\sqrt{T})$ global convergence \cite{ding2020natural}, or require extra assumptions \cite{zhang2020variational,ying2021dual,li2021faster}. The compactness of $\Theta$ is assumed in \cite{zhang2020variational}, but this assumption forbids a very common softmax parameterization, where $\Theta = \Rb^{|\Sc||\Ac|}$. The NPG-based methods have been analyzed in \cite{ying2021dual,li2021faster} under an ergodicity assumption, but such an assumption is not required for NPG in single-objective MDPs \cite{agarwal2021theory}, and therefore appears artificial.

The above criteria for multi-objective MDPs could be viewed as convex optimization problems w.r.t. a value vector $v \in \Vc$, for which there are a wide array of  well-performing first-order methods for convex optimization problems in general. It is desirable to take full advantage of such efficient first-order methods in a unified and flexible manner when designing policy gradient-based algorithms for multi-objective MDPs.

\textbf{Main contributions} 
\begin{compactitem}[\hspace{0.5cm}]
    \item[1.] We propose an anchor-changing regularized natural policy gradient (ARNPG) framework in Section \ref{sec:ARNPG} that can exploit and integrate first-order methods for the design of policy gradient-based algorithms for multi-objective MDPs.
    \item[2.] We demonstrate the strength of the ARNPG framework by designing algorithms for three general criteria: smooth concave scalarization (Section \ref{sec:smooth-scaler}), constrained MDPs (Section \ref{sec:cmdp}), and max-min trade-off (Section \ref{sec:max-min}). 
    \item[3.] Under softmax parameterization with exact gradients, the proposed algorithms inherit the advantages of the integrated first-order methods, and are guaranteed to have $\tilde{O}(1/T)$ global convergence without further assumptions on the underlying MDP. 
    \item[4.] In addition to the theoretical advantages, we provide the results of extensive experimentation in Section \ref{sec:empirical} and Appendices \ref{sec:app_exp_cmdp} and \ref{sec:app_exp_momdp} which demonstrate that the ARNPG-guided algorithms provide superior performance in exact gradient and sample-based tabular scenarios, as well as actor-critic deep RL scenarios, compared to several existing policy gradient-based approaches.
\end{compactitem}

\subsection{Related works}
Policy gradient (PG)-based methods have drawn much attention recently \cite{agarwal2020optimality,mei2020global,cen2021fast,khodadadian2021linear} due to their simplicity as well as the potential to generalize to large scale problems. Despite their non-convex nature, PG-based methods have been shown to converge globally for single-objective MDPs \cite{agarwal2020optimality, mei2020global}. Their convergence may be further accelerated with appropriate regularization \cite{cen2021fast,lan2021policy}, e.g., entropy regularization, 
but the algorithms only converge to the optimum of the regularized problem instead of the desired (unregularized) problem.

This paper considers \textit{single-policy} multi-objective MDPs, including CMDPs where constraints are specified on some objectives.
Global convergence of PG-based approaches in the multi-objective MDPs has been previously studied. For smooth concave scalarization, Bai et al. \cite{bai2021joint} showed an $O(1/\epsilon^4)$ sample complexity (to achieve $\epsilon$-optimal in expectation) of the policy-gradient method under sample-based scenarios. However, with exact gradients, we are unaware of works with fast $\tilde{O}(1/T)$ convergence. For CMDPs, Ding et al. \cite{ding2020natural} have studied a primal-dual NPG algorithm achieving $O(1/\sqrt{T})$ global convergence for both the optimality gap and the constraint violation. Xu et al. \cite{xu2021crpo} have proposed a primal approach that reduces constraint violations with a higher priority than optimizing objective, and enjoys the same $O(1/\sqrt{T})$ global convergence. In work conducted concurrently with ours, \cite{ying2021dual} and \cite{li2021faster} have proposed algorithms that achieve $\tilde{O}(1/T)$ convergence but with extra ergodicity assumptions. 


A general setting of optimizing a concave function of the state-action visitation distribution has been considered in \cite{zhang2020variational}. Though the problem is more general, its gradient estimation is more complicated than the canonical policy gradient estimate. Zhang et al. \cite{zhang2020variational} showed that the gradient ascent achieves $O(1/T)$ global convergence for smooth scalarization with exact gradients, under several assumptions such as convexity and compactness of the parameter set $\Theta$. Directly viewing the state-action visitation as the decision variables and imposing equality constraints for their feasibility, a smooth concave scalarization has been studied in \cite{zhang2021beyond} and later generalized to the constrained setting in \cite{bai2021achieving}. These two works focus on sample-based scenarios, but due to their primal-dual approach with equality constraints, the convergence rate is only $O(1/\sqrt{T})$ even with exact gradients. Moreover, the state-action visitation parameterization is difficult to generalize to larger scale deep RL scenarios.


A more thorough discussion on related works is given in Appendix \ref{sec:app_related_work}.

\section{Preliminaries} \label{sec:prelim}

\textbf{System model}~~A Markov decision process (MDP) is represented by a tuple $(\Sc, \Ac, P, \rho, \gamma, r)$, where $\Sc$ is the state space, $\Ac$  the action space, $P: \Sc \times \Ac \to \Delta(\Sc)$ the transition kernel, $\rho \in \Delta(\Sc)$ the initial state distribution, $\gamma \in (0, 1)$  the discount factor, and $r: \Sc \times \Ac \to [0, 1]$ the reward function. Given any policy $\pi: \Sc \rightarrow \Delta(\Ac)$ and any reward function $r: \Sc \times \Ac \rightarrow [0, 1]$, we define the state value function $V_{r}^{\pi}: \Sc \to [0, \frac{1}{1-\gamma}]$, and the state-action value function $Q_{r}^{\pi}: \Sc \times \Ac \to [0, \frac{1}{1-\gamma}]$, as
\vspace{-0.2cm}
\begin{align*}
    V_{r}^{\pi}(s) := \Eb [\sum_{t=0}^{\infty} \gamma^t r(s_t, a_t) ~{|}~ s_0=s, \pi], \quad Q_{r}^{\pi}(s, a) := \Eb [\sum_{t=0}^{\infty} \gamma^t r(s_t, a_t) ~{|}~ s_0=s, a_0=a, \pi],
\end{align*}
where expectation $\Eb$ is taken over the random trajectory of the Markov chain induced by the policy $\pi$ and the transition kernel $P$. With a slight abuse of notation, we denote $V^{\pi}_r(\rho) := \Eb_{s \sim \rho}[ V^{\pi}_r(s)]$. Define the discounted state-action visitation distribution (state-action visitation for short) of policy $\pi$ with initial state distribution $\rho$ by $d_{\rho}^{\pi}(s, a) := (1 - \gamma)\Eb_{s_0 \sim \rho}[\sum_{t=0}^{\infty} \gamma^t \Pb(s_t=s, a_t=a|s_0, \pi) ]$. 
It then follows that $V^{\pi}_r(\rho) = \frac{1}{1 - \gamma} \langle d_{\rho}^{\pi}, r \rangle$ by viewing $d^{\pi}_\rho$ and $r$ as $|\Sc||\Ac|$-dimensional vectors indexed by $(s, a) \in \Sc \times \Ac$. When it is clear from the context, we denote the state visitation distribution by $d_{\rho}^{\pi}(s) := \Eb_{s_0 \sim \rho}\left[(1 - \gamma) \sum_{t=0}^{\infty} \gamma^t \Pb(s_t=s|s_0)\right]$, which is the marginal distribution of the state-action visitation $d^{\pi}_{\rho}(s, a)$, i.e., $d^{\pi}_\rho(s) = \sum_{a \in \Ac} d^{\pi}_\rho(s, a)$. 

We study an MDP with $m$ objectives represented by $(\Sc, \Ac, P, \rho, \gamma, r_{1:m})$, where $r_i: \Sc \times \Ac \to [0, 1]$ is the $i$-th reward function for each $i \in [m]$. For simplicity, denote $V^{\pi}_{i}(\cdot) := V^{\pi}_{r_i}(\cdot)$ and $V^{\pi}_{1:m}(\cdot) := (V^{\pi}_{1}(\cdot), \ldots, V^{\pi}_m(\cdot))$. We consider parameterized policies in $\Pi = \{\pi_\theta: \theta \in \Theta\}$, where $\Theta \subset \Rb^n$ is the parameter space. For example, the softmax policy is $\pi_\theta(a|s) = \frac{\exp(\theta_{s,a})}{\sum_{a'} \exp(\theta_{s,a'})}$ with $\Theta = \Rb^{|\Sc||\Ac|}$; and neural softmax policy is $\pi_{\theta}(a|s) = \frac{\exp(\text{NN}_\theta(s,a))}{\sum_{a'} \exp(\text{NN}_\theta(s,a'))}$, where NN${_\theta}$ is some neural network parameterized $\theta$. 
Define $\Vc := \{V^{\pi_{\theta}}_{1:m}(\rho): \theta \in \Theta\}$ as the achievable region of value vectors. The agent wishes to optimize the policy in $\Pi$ for a given specific multi-objective criterion on value vectors in $\Vc$.




\textbf{Mirror ascent}~~As one of the most well-known iterative optimization methods, mirror descent (actually ascent in the context of our formulation as a maximization problem) \cite{nemirovski2004prox, beck2003mirror} is a general class that encompasses many first-order methods in convex optimization. Given a variable $x$ in a compact convex set $\Xc \subset \Rb^n$ and an ascent direction $g \in \Rb^n$, the variational representation of the mirror ascent update is
\begin{align}
    x' \in \argmax_{y \in \Xc} \{ \langle g, y \rangle - \alpha B_h(y || x) \}, \label{eqn:mirror_ascent}
\end{align}
where $B_h(x || y) := h(x) - h(y) - \langle \nabla h(y), x - y\rangle$ is some Bregman divergence generated by a differentiable convex function $h: \Xc \rightarrow \Rb$. 
When analyzing the convergence of first-order methods, certain fundamental inequalities are usually established to facilitate the proof. One such inequality is
\begin{align}\label{eqn:fundamental-MD}
    \langle g, x' \rangle - \alpha B_h(x' || x) \geq \langle g, y \rangle - \alpha B_h(y || x) + \alpha B_h(y || x'), \quad \forall y \in \Xc,
\end{align}
which is a critical step in many previous works, e.g., \cite{rakhlin2013optimization,wei2020linear,lan2020first}.

It is desirable to construct a similar fundamental inequality for multi-objective MDPs that can facilitate the analysis of convergence. 
As we will show in the next section, such an inequality can indeed be established in a new framework, which we refer to as the Anchor-Changing Regularized Natural Policy Gradient (ARNPG). 

\textbf{Notations}~~Denote KL-divergence between two $n$-dimensional probability vectors $x,y$ by $D(x||y) := \sum_{i=1}^n x_i \log(x_i/y_i)$, which is a widely-used Bregman divergence. 
For any policies $\pi, \pi'$ and state visitation distribution $d$, define $D_{d}(\pi || \pi') := \sum_{s \in \Sc} d(s) D(\pi(\cdot| s)|| \pi'(\cdot | s))$. A \textit{uniform policy} is one which chooses actions uniformly at random.



\section{Anchor-changing regularized natural policy gradient} \label{sec:ARNPG}

Let us consider a hypothetical mirror ascent update on decision value vector $v_k \in \Vc$ according to \eqref{eqn:mirror_ascent}. Given an ascent direction $\tilde{G}_k$ along which to improve $v_k$, the updated value vector is 
\begin{align}
    v' \in \argmax_{v \in \Vc}\{ \langle \tilde{G}_k, v \rangle - \alpha B_h(v || v_k) \}. \label{eqn:v-mirror-ascent}
\end{align}
Suppose the value vector $v_k$ is achieved by a policy $\pi_{\theta_k}$, i.e., $v_k = V^{\pi_{\theta_k}}_{1:m}(\rho)$. 
Denote the reward function in the ascent direction as $\tilde{r}_k(s,a) = \langle \tilde{G}_k, r_{1:m}(s,a) \rangle$. It follows that $\langle \tilde{G}_k, v_k \rangle = V^{\pi_{\theta_k}}_{\tilde{r}_k}(\rho)$. Note that $B_h(v || v_k)$ in \eqref{eqn:v-mirror-ascent} serves the role of a soft constraint on $v$ by keeping $v$ within a vicinity of $v_k$. 
Replacing $B(v || v_k)$ by $\frac{D_{d^{\pi_\theta}_\rho}(\pi_\theta || \pi_{\theta_k})}{1 - \gamma}$ will induce a similar soft constraint that prefers the  vicinity of the ``anchor" policy $\pi_{\theta_k}$.  
Therefore we consider replacing the variational update in \eqref{eqn:v-mirror-ascent} by

\vspace{-0.4cm}
\begin{align}
    \theta' \in \argmax_{\theta \in \Theta} \left\{ \tilde{V}_{k, \alpha}^{\pi_\theta}(\rho)  \right\}, \quad \text{where} \quad \tilde{V}_{k, \alpha}^{\pi_\theta}(\rho) := V^{\pi_\theta}_{\tilde{r}_k}(\rho) - \alpha \frac{D_{d^{\pi_\theta}_\rho}(\pi_\theta || \pi_{\theta_k})}{1 - \gamma}. \label{eqn:inner-loop-variational}
\end{align}

\textbf{ARNPG}~~ Motivated by the intuition above, we propose the Anchor-Changing Regularized Natural Policy Gradient (ARNPG) framework. At (macro) step $k$, the ARNPG framework determines the reward function in the ascent direction $\tilde{r}_k$ and the anchor policy $\pi_{\theta_k}$, which can exploit well-performed first-order methods in convex optimization literature utilizing the features of the specific criteria in use. 
With $\tilde{r}_k$ and $\pi_{\theta_k}$, we wish to solve for \eqref{eqn:inner-loop-variational} to improve the value vector. 
However the optimal solution $\theta'$ of \eqref{eqn:inner-loop-variational} is generally not determinable explicitly. ARNPG therefore approaches the optimal solution via a subroutine that executes a natural policy gradient (NPG) algorithm w.r.t. the KL-regularized value function $\tilde{V}^{\pi_\theta}_{k,\alpha}(\rho)$. We refer to this subroutine,
given in Algorithm \ref{alg:InnerLoop}, as InnerLoop($\tilde{r}_k, \pi_{\theta_k}, \alpha, \eta, t_k$). It 
iteratively updates the parameter $\theta_k^{(t)}$ for $t_k$ (micro) steps according to the NPG update rule as in \eqref{eqn:inner-NPG}, 
where $\Fc_{\rho}(\theta)^{\dagger}$ is the Moore-Penrose inverse of the Fisher information matrix $\Fc_{\rho}(\theta) := \mathbb{E}_{(s,a) \sim d_{\rho}^{\pi_{\theta}}} \left[\nabla_{\theta} \log \pi_{\theta}(a|s)\left(\nabla_{\theta} \log \pi_{\theta}(a|s)\right)^{\top}\right]$. 

\begin{algorithm}[ht]
\caption{InnerLoop($\tilde{r}_k, \pi_{\theta_k}, \alpha, \eta, t_k$)}
\label{alg:InnerLoop}
\noindent \textbf{Initialize} $\theta^{(0)}_k = \theta_k$\\
\For{$t = 0, 1, \ldots t_k - 1$}{
\vspace{-0.4cm}
\begin{align}
    \theta_k^{(t+1)} \leftarrow \theta_k^{(t)} + \eta \Fc_{\rho}(\theta_k^{(t)})^{\dagger} \nabla_\theta \tilde{V}_{k, \alpha}^{\pi^{(t)}_{k}}(\rho) \label{eqn:inner-NPG}
\end{align}
\vspace{-0.7cm}
}
\textbf{Return} $\theta_k^{(t_k)}$
\end{algorithm}

The choice of the number of iterations in InnerLoop (i.e., $t_k$) involves a trade-off between the variational update precision and the overall efficiency. On the one hand, a larger $t_k$ leads to a more accurate approximation of the optimal solution $\theta'$ to \eqref{eqn:inner-loop-variational}, but it may cause the algorithm to spend unnecessary computational resources on the regularized objective $\tilde{V}_{k,\alpha}^{\pi_\theta}(\rho)$, instead of on the true optimization problem. On the other hand, a smaller $t_k$ saves inner loop iterations but the update follows less closely to the underlying mirror-ascent update in improving the value vector. In our experiments, 
we choose $t_k$ within 10 to strike a balance and empirically 
observe $t_k > 1$ has better performance. 

We note that when $t_k = 1$, the gradient $\nabla_\theta \tilde{V}_{k,\alpha}^{\pi_{\theta_k}}(\rho) = \nabla_\theta V_{\tilde{r}_k}^{\pi_{\theta_k}}(\rho)$, since $D_{d^{\pi_\theta}_\rho}(\pi_\theta || \pi_{\theta_k})$ has zero gradient at $\theta = \theta_k$. The update in \eqref{eqn:inner-NPG} reduces to an NPG update on the unregularized value function $\tilde{V}_{\tilde{r}_k}^{\pi_\theta}(\rho)$. For single-objective MDPs, it reduces to the canonical NPG method. 



\subsection{Theoretical guarantee of ARNPG} \label{sec:interpretation}
We now present the main theoretical tool for the analysis of the ARNPG framework. Recall the discussion of the fundamental inequality after \eqref{eqn:fundamental-MD}. Proposition \ref{pro:fundamental-inequality} establishes such a fundamental inequality with controllable approximation error under the softmax policy parameterization, i.e., $\pi_\theta(a|s) = \frac{\exp(\theta_{s,a})}{\sum_{a'} \exp(\theta_{s,a'})}$. 
In the rest of the paper, we omit $\theta$ in $\pi_\theta$ when it is clear from the context, but it should be noted that all updates of policies are performed on the parameters. 

\begin{proposition}\label{pro:fundamental-inequality}
Under the softmax parameterization, given $\epsilon_k > 0$, for any $\tilde{r}_k$, $t_k \geq \frac{1}{1 - \gamma} \log(\frac{5 \|\tilde{r}_k\|_\infty}{(1-\gamma)^2 \epsilon_k}) + 1$, $\alpha > 0$ and $\eta = \frac{1 - \gamma}{\alpha}$, the update $\pi_{k+1} \leftarrow \text{InnerLoop}(\pi_k,\tilde{r}_k, \alpha, \eta, t_k)$ satisfies
\begin{align}
    V^{\pi_{k+1}}_{\tilde{r}_k}(\rho) - \alpha \frac{D_{d^{\pi_{k+1}}_\rho}(\pi_{k+1} || \pi_k)}{1 - \gamma} & \geq V^{\pi}_{\tilde{r}_k}(\rho) - \alpha \frac{D_{d^{\pi}_\rho}(\pi || \pi_k) - D_{d^{\pi}_\rho}(\pi || \pi_{k+1})}{1 - \gamma} - \epsilon_k,\quad \forall \pi. \label{eqn:fundamental-inequality}
\end{align}
\end{proposition}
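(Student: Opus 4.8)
The plan is to port the classical mirror-ascent fundamental inequality \eqref{eqn:fundamental-MD} into the RL setting, treating the InnerLoop as an \emph{approximate} solver for the exact regularized update \eqref{eqn:inner-loop-variational} and then controlling the resulting approximation error by $\epsilon_k$. The first step is to reinterpret the regularized objective. Using $d^\pi_\rho(s,a) = d^\pi_\rho(s)\pi(a|s)$ together with the identity $V^\pi_f(\rho) = \frac{1}{1-\gamma}\langle d^\pi_\rho, f\rangle$, I would rewrite $\tilde{V}^{\pi}_{k,\alpha}(\rho) = V^\pi_{\tilde{r}_k}(\rho) - \alpha\frac{D_{d^\pi_\rho}(\pi\|\pi_k)}{1-\gamma}$ as the value $V^\pi_{\hat r}(\rho)$ of a KL-regularized MDP with modified reward $\hat r(s,a) = \tilde{r}_k(s,a) - \alpha\log\frac{\pi(a|s)}{\pi_k(a|s)}$. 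This identifies \eqref{eqn:inner-loop-variational} as maximization of an anchor-regularized value function, whose maximizer $\pi^\star$ is the soft-greedy policy $\pi^\star(a|s)\propto \pi_k(a|s)\exp(\tilde Q^\star(s,a)/\alpha)$ relative to the anchor, and identifies \eqref{eqn:inner-NPG} as NPG on this objective.

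The argument then splits into an exact part and an approximation part. For the exact part I would show that the ideal maximizer $\pi^\star$ satisfies \eqref{eqn:fundamental-inequality} with $\epsilon_k = 0$, i.e. the RL analogue of the three-point inequality. Because the regularizer $D_{d^\pi_\rho}(\pi\|\cdot)$ is weighted by the moving visitation $d^\pi_\rho$, the plain Bregman three-point identity does not apply directly; instead I would invoke the performance-difference lemma to express $V^{\pi^\star}_{\tilde r_k}(\rho) - V^\pi_{\tilde r_k}(\rho)$ through advantages of $\pi^\star$, substitute the closed form of $\pi^\star$ to convert advantage terms into log-ratios $\log(\pi^\star/\pi_k)$, and then reassemble the three KL terms $D_{d^\pi_\rho}(\pi\|\pi_k)$, $D_{d^\pi_\rho}(\pi\|\pi^\star)$, and $D_{d^{\pi^\star}_\rho}(\pi^\star\|\pi_k)$, so that the cross-entropy/entropy algebra collapses into the claimed inequality.

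For the approximation part I would establish linear convergence of the regularized NPG to $\pi^\star$. Under softmax the Fisher-preconditioned update admits a closed form, and the choice $\eta = \frac{1-\gamma}{\alpha}$ turns \eqref{eqn:inner-NPG} into a pure soft policy improvement $\pi^{(t+1)}(a|s)\propto \pi_k(a|s)\exp(\tilde Q^{(t)}(s,a)/\alpha)$, which behaves as a $\gamma$-contraction toward $\pi^\star$ in the soft Bellman sense. Iterating from $\pi^{(0)}_k = \pi_k$ yields a geometric bound of the form $\gamma^{\,t_k-1}\cdot\frac{c\|\tilde r_k\|_\infty}{(1-\gamma)^2}$ on the relevant value/KL discrepancy between $\pi_{k+1}$ and $\pi^\star$; choosing $t_k \geq \frac{1}{1-\gamma}\log(\frac{5\|\tilde r_k\|_\infty}{(1-\gamma)^2\epsilon_k}) + 1$ (using $\log(1/\gamma)\geq 1-\gamma$) forces this below $\epsilon_k$. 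Combining the exact inequality for $\pi^\star$ with this error bound gives \eqref{eqn:fundamental-inequality}.

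The main obstacle I anticipate is the exact three-point step: unlike finite-dimensional mirror ascent, here the divergences live on the $\pi$-dependent visitation $d^\pi_\rho$, so the cancellation of KL terms is not a one-line Bregman identity but must be routed through the performance-difference lemma and the explicit soft-greedy form of $\pi^\star$. Getting the visitation weights to line up — in particular producing exactly the combination $D_{d^\pi_\rho}(\pi\|\pi_k) - D_{d^\pi_\rho}(\pi\|\pi_{k+1})$ on the right-hand side — is the delicate bookkeeping, and it is also where the approximation error from replacing $\pi^\star$ by $\pi_{k+1}$ must be inserted without disturbing the structure.
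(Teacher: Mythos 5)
Your proposal is correct and shares the paper's overall architecture — an exact three-point inequality for the ideal maximizer of \eqref{eqn:inner-loop-variational}, plus linear convergence of the inner NPG to that maximizer (with $1-\eta\alpha=\gamma$ giving the $\gamma$-contraction and the factor $5$ in $t_k$ absorbing the combined error) — but you reach the exact three-point step by a genuinely different route. You go through the regularized performance-difference lemma and the soft-greedy closed form $\pi^\star(a|s)\propto\pi_k(a|s)\exp(\tilde Q^{\pi^\star}_{k,\alpha}(s,a)/\alpha)$, which indeed collapses the advantage terms into log-ratios and yields \eqref{eqn:pushback_fund} (in fact as an identity, with the drift weighted by $d^\pi_\rho$ exactly as required). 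The paper instead resolves the very obstacle you flag — that the KL regularizer is weighted by the moving visitation, so the plain Bregman three-point identity ``does not apply directly'' — by showing it \emph{does} apply after a change of variables: $D_{d^\pi_\rho}(\pi\|\pi')$ equals a pseudo KL-divergence $\tilde D(d^\pi_\rho\|d^{\pi'}_\rho)$ that Lemma~\ref{lem:Bregman} proves is a genuine Bregman divergence (generated by the conditional entropy) on the convex polytope of achievable visitations, over which the objective is linear; the standard pushback Lemma~\ref{lem:pushback} then gives \eqref{eqn:pushback_fund} in one line. Your route is more computational but more RL-native and avoids introducing the visitation-space machinery; the paper's route buys a conceptual ``hidden convexity'' picture that it reuses elsewhere. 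One point to make explicit when you write this up: to insert the approximation error you need \emph{two} consequences of the linear convergence (both supplied by Lemma~\ref{lem:cen}), namely the value gap $\tilde V^{\pi^\star}_{k,\alpha}-\tilde V^{\pi_{k+1}}_{k,\alpha}\le 3C_k\gamma^{t_k-1}$ to replace $\pi^\star$ by $\pi_{k+1}$ on the left-hand side as in \eqref{eqn:apply_cen}, and the uniform bound $\|\log\pi^\star-\log\pi_{k+1}\|_\infty\le 2C_k\alpha^{-1}\gamma^{t_k-1}$ to swap $D_{d^\pi_\rho}(\pi\|\pi^\star)$ for $D_{d^\pi_\rho}(\pi\|\pi_{k+1})$ in the drift term uniformly over comparators $\pi$; your phrase ``value/KL discrepancy'' gestures at both but only the second controls the drift substitution, and it is where the constant $5=3+2$ comes from.
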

The inequality \eqref{eqn:fundamental-inequality} is critical to the convergence proof.  Its right hand side allows telescoping, which by summing over $k$ can iteratively cancel the terms $D_{d^{\pi}_\rho}(\pi||\pi_k)$. Since $t_k = \Theta(\log(1/\epsilon_k))$ it suffices to use very few iterations in InnerLoop for maintaining precision.

\textit{Remark.} It has been shown that for the entropy-regularized MDP, i.e., KL-regularized with the uniform policy as the anchor policy, NPG converges linearly (i.e., geometrically fast) to the regularized optimal policy \cite{cen2021fast}. It is natural to anticipate that for the KL-regularized MDP $\tilde{V}^{\pi}_{k,\alpha}(\rho)$ with anchor $\pi_k$, NPG would similarly converge linearly (i.e., $\tilde{V}^{\pi_{k}}_{k, \alpha} \geq \tilde{V}^{\pi^*_k}_{k, \alpha} - \epsilon$
for $t_k = \Theta(\log(1/\epsilon))$) to a corresponding optimal policy, denoted as $\pi_k^*$. 
In contrast, the right hand side of inequality \eqref{eqn:fundamental-inequality} has a \textit{positive drift} $\alpha \frac{D_{d^\pi_\rho}(\pi||\pi_{k+1})}{1- \gamma}$ \textit{for any policy $\pi$}, which is considerably stronger.

\textit{Proof sketch of Proposition \ref{pro:fundamental-inequality}.} 
We can show that InnerLoop approximately solves the variational update in (\ref{eqn:inner-loop-variational}) with linear convergence as anticipated. However to establish \eqref{eqn:fundamental-inequality}, the difficulty lies in the introduction of positive drift, since $V^{\pi_\theta}_{\tilde{r}_k}(\rho)$ is not concave w.r.t. $\theta$ and $D_{d^{\pi_\theta}_\rho}(\pi_\theta||\pi_{\theta_k})$ may not be a Bregman divergence. 
We tackle this difficulty by showing that optimizing $\pi_\theta$ in InnerLoop implicitly performs a mirror ascent update for state action visitation $d^{\pi_\theta}_\rho$. 
\qed

As demonstrated in the next section, Proposition \ref{pro:fundamental-inequality} ensures that the convergence rate of the algorithms derived from the ARNPG framework is of the same rate as the underlying first-order methods with only extra logarithmic factors. 

\section{Theoretical applications} \label{sec:theory-app}
In this section, we apply the ARNPG framework to several important multi-objective MDP scenarios and obtain new policy optimization algorithms by integrating first-order methods in convex optimization. 
All the theoretical results presented in this section are under the softmax parameterization with exact gradients. However, the obtained algorithms can be implemented in more general settings such as neural softmax and sample-based scenarios, as in the next section. We theoretically establish $\tilde{O}(1/T)$ convergence of these algorithms by leveraging the fundamental inequality in Proposition \ref{pro:fundamental-inequality}.

\subsection{Smooth concave scalarization function} \label{sec:smooth-scaler}
We start by considering the following optimization problem
\begin{align}
    \max_{\theta} F(V_{1:m}^{\pi_\theta}(\rho)), \label{def:smooth-scalar}
\end{align}
where $F$ is a concave function, and $\beta$-smooth w.r.t. $\|\cdot\|_\infty$ norm, i.e., $\|\nabla F(v) - \nabla F(v')\|_1 \leq \beta\|v - v'\|_\infty$. Since the set of achievable values $\Vc \subseteq \left[0, \frac{1}{1-\gamma}\right]^{m}$, it can be verified that $\|\nabla F(v)\|_1 \leq L$ for some factor $L > 0$. 

The proportional fair criterion discussed in Section \ref{intro} can be approximated by $F(v) := \sum_{i=1}^m a_i\log(\delta + v_i)$, where $\delta > 0$ is some constant introduced to circumvent the pathological case $v_i = 0$ for some $i \in [m]$. Under this criterion, $\beta = \sum_{i=1}^m a_i /\delta^2$ and $L = \sum_{i=1}^m a_i /\delta$.


When $v$ is viewed as the decision variable, at macro step $k$ with value vector $V^{\pi_k}_{1:m}(\rho)$, the ascent direction in a typical gradient ascent step is the gradient $\tilde{G}_k = \nabla_v F(V^{\pi_k}_{1:m}(\rho))$. This naturally determines the reward in the ascent direction as 
$\tilde{r}_k(s,a) = \langle \tilde{G}_k, r_{1:m}(s,a) \rangle$. Adapting the ARNPG framework to this specific context, we present the algorithm for solving the program (\ref{def:smooth-scalar}) in Algorithm \ref{alg:ARNPG-IMD}. We refer to it as ``implicit mirror descent" because the algorithm implicitly employs mirror descent. 


\begin{algorithm}[ht]
\caption{\textbf{ARNPG Implicit Mirror Descent (ARNPG-IMD)}}
\label{alg:ARNPG-IMD}
\textbf{Input} $\pi_0, \alpha, \eta, t_{0:K-1}, K$ \\
\For{$k = 0, 1, \dots, K-1$}{
\noindent Update $\pi_{k+1} \leftarrow $InnerLoop($\pi_k, \tilde{r}_k, \alpha, \eta, t_k$)
}
\textbf{Return} the policy in $\{\pi_k\}_{k=1}^K$ with the largest $F(V^{\pi_k}_{1:m}(\rho))$
\end{algorithm}

Let $\pi^*$ be the optimal policy for (\ref{def:smooth-scalar}). Based on Proposition \ref{pro:fundamental-inequality}, we present the following theorem which guarantees the convergence of ARNPG-IMD with appropriately selected parameters $\pi_0, \alpha, \eta, t_{k}$.
\begin{theorem}
\label{thm:ARNPG-IMD}
For any $K \geq 1$, take uniform policy $\pi_0$, $\alpha \geq \frac{\beta}{(1 - \gamma)^3}$, $\eta = \frac{1-\gamma}{\alpha}$, and $t_k = \lceil\frac{1}{1 - \gamma} \log(\frac{5LK}{\beta \log(|\Ac|)}) + 1\rceil$. The optimality gap of ARNPG-IMD (Algorithm \ref{alg:ARNPG-IMD}) satisfies 
\begin{align}
    F(V_{1:m}^{\pi^*}(\rho)) - \max_{k \in [1:K]} F(V_{1:m}^{\pi_k}(\rho)) \leq & F(V_{1:m}^{\pi^*}(\rho)) - \frac{1}{K} \sum_{k=1}^{K} F(V_{1:m}^{\pi_k}(\rho)) \leq \frac{2\alpha \log(|\Ac|)}{(1-\gamma)K}. \label{eqn:ARNPG-IMD-gap}
\end{align}
\end{theorem}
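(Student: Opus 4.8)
The plan is to run the standard mirror-descent convergence argument, but with the value vectors $v_k := V^{\pi_k}_{1:m}(\rho)$ playing the role of the mirror-descent iterates and Proposition~\ref{pro:fundamental-inequality} supplying the one-step progress inequality. Two facts let me linearize the concave objective $F$: (i) by concavity, $F(V^{\pi^*}_{1:m}(\rho)) - F(v_k) \le \langle \tilde{G}_k, V^{\pi^*}_{1:m}(\rho) - v_k\rangle$ with $\tilde{G}_k = \nabla F(v_k)$; and (ii) since $\tilde{r}_k(s,a) = \langle \tilde{G}_k, r_{1:m}(s,a)\rangle$ and value functions are linear in the reward, $V^{\pi}_{\tilde{r}_k}(\rho) = \langle \tilde{G}_k, V^{\pi}_{1:m}(\rho)\rangle$ for every $\pi$. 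Thus the inner products in the fundamental inequality are exactly linearizations of $F$ evaluated at the iterates.

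First I would apply Proposition~\ref{pro:fundamental-inequality} with comparator $\pi = \pi^*$ and a precision $\epsilon_k$ to be fixed later, and rewrite it via (ii) as an upper bound on $\langle \tilde{G}_k, V^{\pi^*}_{1:m}(\rho) - v_{k+1}\rangle$ in terms of the telescoping terms $D_{d^{\pi^*}_\rho}(\pi^*||\pi_k) - D_{d^{\pi^*}_\rho}(\pi^*||\pi_{k+1})$, minus the positive drift $\alpha D_{d^{\pi_{k+1}}_\rho}(\pi_{k+1}||\pi_k)/(1-\gamma)$, plus $\epsilon_k$. Then, splitting $F(V^{\pi^*}_{1:m}(\rho)) - F(v_k) \le \langle \tilde{G}_k, V^{\pi^*}_{1:m}(\rho) - v_{k+1}\rangle + \langle \tilde{G}_k, v_{k+1} - v_k\rangle$ and bounding the second term by the $\beta$-smoothness inequality $\langle \tilde{G}_k, v_{k+1} - v_k\rangle \le F(v_{k+1}) - F(v_k) + \frac{\beta}{2}\|v_{k+1} - v_k\|_\infty^2$, I obtain a per-step inequality for $F(V^{\pi^*}_{1:m}(\rho)) - F(v_{k+1})$ whose only non-telescoping, non-$\epsilon_k$ terms are the smoothness penalty $\frac{\beta}{2}\|v_{k+1}-v_k\|_\infty^2$ and the negative drift.

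The crux is to show the smoothness penalty is dominated by the drift, i.e. $\frac{\beta}{2}\|v_{k+1}-v_k\|_\infty^2 \le \alpha D_{d^{\pi_{k+1}}_\rho}(\pi_{k+1}||\pi_k)/(1-\gamma)$, which is exactly where the hypothesis $\alpha \ge \beta/(1-\gamma)^3$ enters. I would bound the value change coordinatewise by the performance difference lemma applied to each $r_i$: writing $A^{\pi_k}_{r_i} := Q^{\pi_k}_{r_i} - V^{\pi_k}_{r_i}$, and using $\Eb_{a\sim\pi_k}[A^{\pi_k}_{r_i}(s,\cdot)] = 0$ together with $\|A^{\pi_k}_{r_i}\|_\infty \le 1/(1-\gamma)$, one gets $|V^{\pi_{k+1}}_{r_i}(\rho) - V^{\pi_k}_{r_i}(\rho)| \le (1-\gamma)^{-2}\Eb_{s\sim d^{\pi_{k+1}}_\rho}\|\pi_{k+1}(\cdot|s) - \pi_k(\cdot|s)\|_1$. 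Pinsker together with Jensen then upgrades this to $\|v_{k+1}-v_k\|_\infty^2 \le 2(1-\gamma)^{-4}D_{d^{\pi_{k+1}}_\rho}(\pi_{k+1}||\pi_k)$, so that $\frac{\beta}{2}\|v_{k+1}-v_k\|_\infty^2 \le \beta(1-\gamma)^{-4}D_{d^{\pi_{k+1}}_\rho}(\pi_{k+1}||\pi_k)$, and the condition $\alpha(1-\gamma)^3 \ge \beta$ gives the desired domination. The drift thus cancels the smoothness error and leaves the clean recursion $F(V^{\pi^*}_{1:m}(\rho)) - F(v_{k+1}) \le \alpha[D_{d^{\pi^*}_\rho}(\pi^*||\pi_k) - D_{d^{\pi^*}_\rho}(\pi^*||\pi_{k+1})]/(1-\gamma) + \epsilon_k$. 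I expect this step --- relating $\|v_{k+1}-v_k\|_\infty$ to the policy-space KL drift and matching the powers of $(1-\gamma)$ --- to be the main obstacle; the rest is bookkeeping.

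Finally I would sum from $k=0$ to $K-1$. The telescope collapses to $D_{d^{\pi^*}_\rho}(\pi^*||\pi_0)$, which for the uniform initial policy $\pi_0$ is at most $\log|\Ac|$, yielding $\sum_{k=1}^K [F(V^{\pi^*}_{1:m}(\rho)) - F(v_k)] \le \alpha\log|\Ac|/(1-\gamma) + \sum_k \epsilon_k$. It remains to check the $\epsilon_k$ budget: since $\|\tilde{r}_k\|_\infty \le \|\tilde{G}_k\|_1 \le L$, the prescribed $t_k$ corresponds via Proposition~\ref{pro:fundamental-inequality} to $\epsilon_k = \beta\log|\Ac|/((1-\gamma)^2 K)$, hence $\sum_{k=0}^{K-1}\epsilon_k = \beta\log|\Ac|/(1-\gamma)^2 \le \alpha\log|\Ac|/(1-\gamma)$ (again using $\alpha \ge \beta/(1-\gamma)^3 \ge \beta/(1-\gamma)$). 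Dividing by $K$ and observing that the maximum over $k$ dominates the average gives $F(V^{\pi^*}_{1:m}(\rho)) - \max_{k\in[1:K]} F(v_k) \le 2\alpha\log|\Ac|/((1-\gamma)K)$, which is the claim.
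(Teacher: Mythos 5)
Your proposal is correct, and its skeleton coincides with the paper's proof: linearize $F$ at $V^{\pi_k}_{1:m}(\rho)$ using concavity (for the comparator $\pi^*$) and $\beta$-smoothness (for the iterate), invoke Proposition~\ref{pro:fundamental-inequality} with $\pi=\pi^*$, absorb the smoothness penalty $\tfrac{\beta}{2}\|V^{\pi_{k+1}}_{1:m}(\rho)-V^{\pi_k}_{1:m}(\rho)\|_\infty^2$ into the positive drift $\alpha D_{d^{\pi_{k+1}}_\rho}(\pi_{k+1}\|\pi_k)/(1-\gamma)$, telescope, and bound $D_{d^{\pi^*}_\rho}(\pi^*\|\pi_0)\le\log|\Ac|$; your $\epsilon_k$ accounting ($\epsilon_k=\beta\log|\Ac|/((1-\gamma)^2K)$ implied by the stated $t_k$, versus the paper's choice $\epsilon_k=\alpha\log|\Ac|/((1-\gamma)K)$) is an equivalent bookkeeping. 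The one genuinely different ingredient is the step you correctly flag as the crux. The paper proves $\tfrac12\|V^{\pi_k}_{1:m}(\rho)-V^{\pi_{k+1}}_{1:m}(\rho)\|_\infty^2\le\tfrac{\gamma^2}{(1-\gamma)^4}D_{d^{\pi_{k+1}}_\rho}(\pi_{k+1}\|\pi_k)$ (Lemma~\ref{lem:kl_property}) by first bounding $\|d^{\pi_{k+1}}_\rho-d^{\pi_k}_\rho\|_1$ via a hybrid-policy telescoping argument over the switching time (Lemma~\ref{lem:distance}), then using $V^{\pi}_{r_i}(\rho)=\tfrac{1}{1-\gamma}\langle d^\pi_\rho,r_i\rangle$. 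You instead go through the performance difference lemma, $|V^{\pi_{k+1}}_{r_i}(\rho)-V^{\pi_k}_{r_i}(\rho)|\le(1-\gamma)^{-2}\,\Eb_{s\sim d^{\pi_{k+1}}_\rho}\|\pi_{k+1}(\cdot|s)-\pi_k(\cdot|s)\|_1$, followed by Jensen and Pinsker, landing on the same distribution $d^{\pi_{k+1}}_\rho$ and KL direction that appear in the drift term. Both yield the needed domination under $\alpha\ge\beta/(1-\gamma)^3$; your route is more elementary and avoids the visitation-coupling lemma entirely, at the cost of a constant that is worse by a factor $\gamma^2$ (i.e., $2$ in place of $2\gamma^2$), which is immaterial here but matters if one wanted the sharper $\alpha\ge\gamma^2\beta/(1-\gamma)^3$. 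The paper's $\ell_1$-coupling lemma is reused elsewhere (e.g., in the CMDP and OMDA analyses), which is presumably why the authors factor the argument that way.
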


There are a total of $K$ macro steps, and the total number of iterations is $T = \sum_{k = 0}^{K-1} t_k = \Theta(\frac{K}{1-\gamma} \log(K))$. The following corollary provides the convergence rate in terms of $T$.
\begin{corollary}
\label{cor:smooth-scalar}
Under the same conditions as in Theorem \ref{thm:ARNPG-IMD}, the ARNPG-IMD algorithm satisfies 
$F(V_{1:m}^{\pi^*}(\rho)) - \frac{1}{K} \sum_{k=1}^{K} F(V_{1:m}^{\pi_k}(\rho)) = O\left(\frac{\beta \log(T)}{(1 -\gamma)^5 T}\right)$.
\end{corollary}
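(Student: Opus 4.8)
The plan is to start from the per-macro-step optimality-gap bound \eqref{eqn:ARNPG-IMD-gap} of Theorem \ref{thm:ARNPG-IMD} and merely re-express the dependence on the number of macro steps $K$ as a dependence on the total number of inner-loop iterations $T = \sum_{k=0}^{K-1} t_k$. First I would fix $\alpha$ at the smallest value permitted by the theorem, namely $\alpha = \frac{\beta}{(1-\gamma)^3}$ (the right-hand side of \eqref{eqn:ARNPG-IMD-gap} is increasing in $\alpha$, so this is the tightest choice consistent with "the same conditions"). Substituting into \eqref{eqn:ARNPG-IMD-gap} gives
\begin{align*}
F(V_{1:m}^{\pi^*}(\rho)) - \frac{1}{K}\sum_{k=1}^K F(V_{1:m}^{\pi_k}(\rho)) \leq \frac{2\beta \log(|\Ac|)}{(1-\gamma)^4 K},
\end{align*}
so that everything reduces to producing a lower bound on $K$ in terms of $T$.

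Next I would bound $T$ from above in terms of $K$. Since the prescribed $t_k = \lceil \frac{1}{1-\gamma}\log(\frac{5LK}{\beta\log|\Ac|}) + 1 \rceil$ does not depend on $k$ and satisfies $t_k \leq \frac{1}{1-\gamma}\log(\frac{5LK}{\beta\log|\Ac|}) + 2$, summing over the $K$ macro steps yields
\begin{align*}
T = \sum_{k=0}^{K-1} t_k \leq K\left(\frac{1}{1-\gamma}\log\Big(\frac{5LK}{\beta\log|\Ac|}\Big) + 2\right) = O\!\left(\frac{K \log K}{1-\gamma}\right),
\end{align*}
where the fixed quantities $L,\beta,\log|\Ac|$ are absorbed into the $O(\cdot)$. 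This is exactly the relation $T = \Theta(\frac{K\log K}{1-\gamma})$ noted before the corollary; for the upper bound on the gap I only need the direction giving a lower bound on $K$.

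The one genuinely nontrivial step is inverting this $K\log K$ relation. Writing $M := (1-\gamma)T$, the display above rearranges to $K\log K = \Omega(M)$; since $x \mapsto x\log x$ is increasing with inverse $g$ satisfying $g(M) = \Theta(M/\log M)$, monotonicity gives $K \geq g(\Omega(M)) = \Omega\!\left(\frac{M}{\log M}\right) = \Omega\!\left(\frac{(1-\gamma)T}{\log T}\right)$, using $\log M = \log((1-\gamma)T) = \Theta(\log T)$ for fixed $\gamma$. Substituting this lower bound on $K$ into the first display then produces
\begin{align*}
F(V_{1:m}^{\pi^*}(\rho)) - \frac{1}{K}\sum_{k=1}^K F(V_{1:m}^{\pi_k}(\rho)) = O\!\left(\frac{\beta \log(|\Ac|)}{(1-\gamma)^4}\cdot \frac{\log T}{(1-\gamma)T}\right) = O\!\left(\frac{\beta \log T}{(1-\gamma)^5 T}\right),
\end{align*}
as claimed. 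The main obstacle is precisely this inversion: it is what manufactures the extra $\log T$ factor and the additional power of $(1-\gamma)^{-1}$ relative to the $O(1/K)$ per-macro-step rate, whereas the remaining steps are direct algebraic substitution.
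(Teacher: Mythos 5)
Your argument is correct and is essentially the paper's own proof: the paper likewise notes $T=\Theta(\frac{K}{1-\gamma}\log K)$, inverts to $\frac{K}{1-\gamma}=\Theta(T/\log T)$, and substitutes into the bound of Theorem \ref{thm:ARNPG-IMD} with $\alpha=\Theta(\beta/(1-\gamma)^3)$. You merely spell out the $K\log K$ inversion and the choice of $\alpha$ more explicitly than the paper's two-line proof does.
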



\textit{Remark.} In the absence of knowledge of $K$, we can select time-varying numbers of InnerLoop iterations, such as $t_k = \Theta(\log(k))$, and ARNPG-IMD will still have the same $\tilde{O}(1/T)$ convergence.

\subsection{Constrained Markov decision process} \label{sec:cmdp}
Another way of trading off the objectives is to optimize one while setting hard constraints on the others. This can be formulated as the following constrained MDP (CMDP) problem:
\begin{align}\label{def:cmdp}
    \max_{\theta} V_{1}^{\pi_\theta}(\rho), \quad \text{s.t.} \ \ V_{i}^{\pi_{\theta}}(\rho) \geq b_i,~\forall i \in [2:m],
\end{align}
where $b_{2:m} \in [0, \frac{1}{1-\gamma}]^{m-1}$. Let $\pi^* = \pi_{\theta^*}$ be the optimal policy of the CMDP problem in \eqref{def:cmdp}.



Define the Lagrangian of the CMDP problem as $\Lc(\pi_\theta, \lambda) = V_1^{\pi_\theta}(\rho) + \sum_{i=2}^m \lambda_i (V_i^{\pi_\theta}(\rho) - b_i)$, where $\lambda_i$ is the Lagrange multiplier (dual variable) corresponding to the constraint $V_i^{\pi_\theta} \geq b_i$, for each $ i \in [2:m]$. The Lagrange dual function $\max_\pi \Lc(\pi, \cdot)$ is a convex function of dual variables $\lambda \geq 0$. Denote by $\lambda^*$ the optimal dual variables that minimize the Lagrange dual function. We assume $\lambda^*$ is finite, which is guaranteed by Slater's condition, i.e., there is some $\pi_\theta$ and $\xi > 0$ with $V^{\pi_\theta}_i(\rho) - b_i \geq \xi$ for any $i \in [2:m]$. Note $(\pi^*, \lambda^*)$ is a saddle point of the Lagrangian $\Lc(\pi, \lambda)$. This motivates the primal-dual approach, which iteratively performs gradient ascent for $\pi_\theta$ and gradient descent for $\lambda$. This is suitable for the CMDP setting, since for any fixed $\lambda$, the Lagrangian $\Lc(\pi, \lambda)$ corresponds to an MDP for which policy gradient can be employed.

The canonical primal-dual gradient ascent-descent method for constrained convex optimization can only guarantee $O(1/\sqrt{T})$ convergence, and consequently the primal-dual policy gradient-based approach for CMDPs \cite{ding2020natural} has the same convergence. Recently, Yu et al. \cite{yu2017simple} have proposed a primal-dual-based method with $O(1/T)$ convergence under the Euclidean setting, i.e., $B_h(x||y) = \frac{1}{2}\|x-y\|_2^2$. 
Adopting ideas from \cite{yu2017simple}, we next propose the ARNPG with Extra Primal-Dual (ARNPG-EPD) algorithm (Algorithm \ref{alg:ARNPG-EPD}). To the best of our knowledge, this new primal-dual update appears in the CMDP-related literature for the first time.

Note that $b_i - V^{\pi}_i(\rho)$ is the amount of constraint violation. 
There are two key ideas we adopt from \cite{yu2017simple}. The first is the design of the reward in the ascent direction
\vspace{-0.2cm}
\begin{center}
    $\tilde{r}_k(s, a) := r_1(s, a) + \sum_{i=2}^m (\lambda_{k, i} + \eta' (b_i - V_{i}^{\pi_k}(\rho))) r_i(s, a)$,
\end{center}
\vspace{-0.2cm}
where an extra constraint violation term is added to the dual variables. 
The second idea is that the update of dual variables should not fall below the negative constraint violation (the first term in \eqref{eqn:dual-update-equation}), and it can alleviate the overshooting of dual variables. The extra constraint violation terms in $\tilde{r}_k$ and the dual update work jointly to ensure the $\tilde{O}(1/T)$ convergence.

\begin{algorithm}[ht]
\caption{\textbf{ARNPG with Extra Primal Dual (ARNPG-EPD)}}
\label{alg:ARNPG-EPD}
\textbf{Input} $\pi_0, \eta', \alpha, \eta, t_{0:K-1}, K$\\
\noindent \textbf{Initialize} $\lambda_{0,i} = \max\{\eta'(V_i^{\pi_{0}}(\rho) - b_i), 0\},~\forall i \in [2:m]$\\
\For{$k = 0, 1, \dots, K-1$}{
\noindent \hspace{-0.175cm} Update $\pi_{k+1} \leftarrow $InnerLoop($\pi_k, \tilde{r}_k, \alpha, \eta, t_k$)\\
\noindent 
\vspace{-0.5cm}
\begin{flalign}
    \hspace{-0.6cm} \text{Update}~ \lambda_{k+1, i} = \max\left\{\eta' (V^{\pi_{k+1}}_{i}(\rho) - b_{i}), \lambda_{k,i} + \eta'(b_{i} - V^{\pi_{k+1}}_{i}(\rho)) \right\}, ~\forall i \in [2:m]
    \label{eqn:dual-update-equation}
\end{flalign}
\vspace{-0.55cm}
}
\textbf{Return:} a policy randomly chosen from $\{\pi_k\}_{k = 1}^{K}$
\end{algorithm}

\begin{theorem}
\label{thm:ARNPG-EPD}
For any $K \geq 1$ and $\eta' \in (0, 1]$, take uniform policy $\pi_0$, $\alpha \ge \frac{2 \eta' m}{(1 - \gamma)^3}$, $\eta = \frac{1-\gamma}{\alpha}$, and choose $t_k = \lceil\frac{1}{1 - \gamma} \log(\frac{5 L_k K}{2 \eta' m \log(|\Ac|)}) + 1 \rceil$ with $L_k = 1 + \frac{\eta' (m-1)}{1-\gamma} + \sum_{i=2}^m \lambda_{k, i}$.
The average optimality gap and the average constraint violation of ARNPG-EPD (Algorithm \ref{alg:ARNPG-EPD}) satisfy
\begin{align}
    V_{1}^{\pi^*}(\rho) - \frac{1}{K} \sum_{k=1}^{K} V_{1}^{\pi_k}(\rho) &\leq \frac{3\alpha \log(|\Ac|)}{(1-\gamma)K} \label{eqn:PD-opt-gap}, \\
    b_i - \frac{1}{K} \sum_{k=1}^{K} V_{i}^{\pi_k}(\rho) &\leq \frac{1}{K} \left(\frac{2 \|\lambda^*\|_2}{\eta'} + 3\sqrt{\frac{\alpha \log(|\Ac|)}{(1 - \gamma)\eta'} }\right) \quad \forall i \in [2:m]. \label{eqn:con-vio}
\end{align}
\end{theorem}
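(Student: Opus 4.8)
The plan is to treat Algorithm~\ref{alg:ARNPG-EPD} as a primal--dual (drift-plus-penalty) scheme in the spirit of \cite{yu2017simple}, with Proposition~\ref{pro:fundamental-inequality} playing the role of the primal proximal/mirror-ascent inequality. Write $g_{k,i} := b_i - V_i^{\pi_k}(\rho)$ for the constraint violation and $\tilde{\lambda}_{k,i} := \lambda_{k,i} + \eta'(b_i - V_i^{\pi_k}(\rho))$ for the coefficients appearing in $\tilde{r}_k$, so that $\tilde{r}_k = r_1 + \sum_{i=2}^m \tilde{\lambda}_{k,i} r_i$ and, by linearity of the value in the reward, $V^{\pi}_{\tilde{r}_k}(\rho) = V_1^{\pi}(\rho) + \sum_{i} \tilde{\lambda}_{k,i} V_i^{\pi}(\rho)$, which equals $\Lc(\pi,\tilde{\lambda}_k)$ up to the $\pi$-independent constant $-\sum_i \tilde{\lambda}_{k,i} b_i$. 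I will track two potentials across the macro steps: the primal divergence $D_{d^{\pi^*}_\rho}(\pi^* \| \pi_k)$ and the dual distance $\frac{1}{2}\|\lambda_k\|_2^2$.

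First I would apply Proposition~\ref{pro:fundamental-inequality} with $\pi = \pi^*$, crucially \emph{retaining} the left-hand term $-\alpha D_{d^{\pi_{k+1}}_\rho}(\pi_{k+1}\|\pi_k)/(1-\gamma)$. Rewriting via $\Lc$, this yields a per-step inequality $\Lc(\pi^*,\tilde{\lambda}_k) - \Lc(\pi_{k+1},\tilde{\lambda}_k) \le \alpha[D_{d^{\pi^*}_\rho}(\pi^*\|\pi_k) - D_{d^{\pi^*}_\rho}(\pi^*\|\pi_{k+1})]/(1-\gamma) - \alpha D_{d^{\pi_{k+1}}_\rho}(\pi_{k+1}\|\pi_k)/(1-\gamma) + \epsilon_k$. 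Two observations make it usable: (i) the floor in the dual update \eqref{eqn:dual-update-equation} forces $\lambda_{k+1,i} \ge \eta'(V_i^{\pi_{k+1}}(\rho)-b_i)$, hence $\tilde{\lambda}_{k+1,i}\ge 0$ (and $\tilde{\lambda}_{0,i}\ge0$ from the initialization), so every coefficient of $\tilde{r}_k$ is nonnegative; combined with feasibility $V_i^{\pi^*}(\rho)\ge b_i$ this makes the comparator term $\langle \tilde{\lambda}_k, (V_i^{\pi^*}(\rho)-b_i)_{i=2}^m\rangle$ nonnegative, so it may be dropped with the right sign; and (ii) nonnegativity also gives $\|\tilde{r}_k\|_\infty \le 1 + \eta'(m-1)/(1-\gamma) + \sum_i \lambda_{k,i} = L_k$, exactly the quantity in the prescribed $t_k$, so that $\epsilon_k \le 2\eta' m \log(|\Ac|)/((1-\gamma)^2 K)$ and $\sum_k \epsilon_k = O(\eta' m \log(|\Ac|)/(1-\gamma)^2)$.

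Next I would establish the dual drift. A short two-case computation from \eqref{eqn:dual-update-equation} gives, against the comparator $\lambda=0$, the bound $\frac{1}{2\eta'}(\|\lambda_{k+1}\|_2^2 - \|\lambda_k\|_2^2) \le \langle \lambda_k, g_{k+1}\rangle + \frac{\eta'}{2}\|g_{k+1}\|_2^2$ with $g_{k+1}=(g_{k+1,i})_{i=2}^m$ (the only delicate point is the floor branch, where $\lambda=0$ keeps the case-$2$ correction nonnegative). Adding this to the primal inequality, the $g_{k+1}$ terms recombine: since $\tilde{\lambda}_k-\lambda_k = \eta' g_k$, the cross terms become $-\eta'\langle g_k, g_{k+1}\rangle + \frac{\eta'}{2}\|g_{k+1}\|_2^2 = \frac{\eta'}{2}\|g_{k+1}-g_k\|_2^2 - \frac{\eta'}{2}\|g_k\|_2^2$, the extra-gradient cancellation of \cite{yu2017simple}. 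Here $\|g_{k+1}-g_k\|_2^2 = \sum_i (V_i^{\pi_k}(\rho)-V_i^{\pi_{k+1}}(\rho))^2$, and the performance-difference lemma together with Pinsker's inequality bounds each squared value change by $2 D_{d^{\pi_{k+1}}_\rho}(\pi_{k+1}\|\pi_k)/(1-\gamma)^4$; hence the retained $-\alpha D_{d^{\pi_{k+1}}_\rho}(\pi_{k+1}\|\pi_k)/(1-\gamma)$ dominates $+\frac{\eta'}{2}\|g_{k+1}-g_k\|_2^2$ precisely when $\alpha \ge 2\eta' m/(1-\gamma)^3$. Both then drop, leaving a clean drift-plus-penalty master inequality that telescopes in both potentials.

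Finally I would sum over $k=0,\dots,K-1$, using $D_{d^{\pi^*}_\rho}(\pi^*\|\pi_0)\le \log(|\Ac|)$ for uniform $\pi_0$. For \eqref{eqn:PD-opt-gap} I invoke $\Lc(\pi^*,\lambda_k)\ge V_1^{\pi^*}(\rho)$ (as $\lambda_k\ge0$, $\pi^*$ feasible) and discard $-\frac{1}{2\eta'}\|\lambda_K\|_2^2$; bounding $\|\lambda_0\|_2$ and $\sum_k\epsilon_k$ against $\alpha\log(|\Ac|)/(1-\gamma)$ under the stated parameters collapses the constant to $3$. For \eqref{eqn:con-vio} I instead rearrange the same summed inequality to isolate $\frac{1}{2\eta'}\|\lambda_K\|_2^2$; the residual $\sum_k (V_1^{\pi_k}(\rho)-V_1^{\pi^*}(\rho))$ is controlled by strong duality, $V_1^{\pi_k}(\rho)-V_1^{\pi^*}(\rho)\le \langle\lambda^*, g_k\rangle$, and then by the telescoping queue bound $\sum_{k=1}^K g_{k,i} \le \lambda_{K,i}/\eta'$ (immediate from $\lambda_{k+1,i}\ge\lambda_{k,i}+\eta' g_{k+1,i}$), giving a quadratic inequality $\frac{1}{2\eta'}\|\lambda_K\|_2^2 \le \frac{1}{\eta'}\|\lambda^*\|_2\|\lambda_K\|_2 + C$ with $C=O(\alpha\log(|\Ac|)/(1-\gamma))$; solving yields $\|\lambda_K\|_2 \le 2\|\lambda^*\|_2 + 3\sqrt{\eta'\alpha\log(|\Ac|)/(1-\gamma)}$, and the same queue bound converts this into $b_i - \frac1K\sum_k V_i^{\pi_k}(\rho)\le \|\lambda_K\|_2/(\eta' K)$. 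The main obstacle is precisely this extra-gradient step: $V^{\pi}_{\tilde{r}_k}(\rho)$ is not concave in $\theta$ and $D_{d^{\pi_\theta}_\rho}(\pi_\theta\|\pi_k)$ is not a genuine Bregman divergence, so the $O(1/T)$ primal--dual machinery does not transfer verbatim; it is the retained negative term of Proposition~\ref{pro:fundamental-inequality} that supplies the quadratic slack absorbing $\|g_{k+1}-g_k\|_2^2$, and the constants depend on never discarding it before the cancellation, on the sign guarantee $\tilde{\lambda}_k\ge0$ from the floored dual update, and on verifying the dual drift in the floor branch of \eqref{eqn:dual-update-equation}.
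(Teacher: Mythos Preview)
Your strategy mirrors the paper's almost step for step: apply Proposition~\ref{pro:fundamental-inequality} with $\pi=\pi^*$ while retaining the negative $D_{d^{\pi_{k+1}}_\rho}(\pi_{k+1}\|\pi_k)$ term, combine with a dual drift inequality, use Lemma~\ref{lem:kl_property} so that $\alpha \ge 2\eta' m/(1-\gamma)^3$ absorbs the value-change term, telescope, and then bound $\|\lambda_K\|_2$ for the violation via strong duality and the queue bound $\sum_k g_{k,i}\le \lambda_{K,i}/\eta'$.

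There is, however, a small but genuine gap in your dual drift. You assert $\tfrac{1}{2\eta'}(\|\lambda_{k+1}\|_2^2-\|\lambda_k\|_2^2)\le \langle\lambda_k,g_{k+1}\rangle+\tfrac{\eta'}{2}\|g_{k+1}\|_2^2$, but in the floor branch $\lambda_{k+1,i}=-\eta' g_{k+1,i}$ this fails: it would require $-\lambda_{k,i}^2/(2\eta')\le \lambda_{k,i}g_{k+1,i}$, whereas the floor is active precisely when $g_{k+1,i}<-\lambda_{k,i}/(2\eta')$, giving the reverse inequality. The paper's Lemma~\ref{lem:L_property} analysis instead proves $-\lambda_{k,i}\delta_{k+1,i}\le \tfrac{\lambda_{k,i}^2-\lambda_{k+1,i}^2}{2\eta'}+\eta'\delta_{k+1,i}^2$ (coefficient $\eta'$, not $\eta'/2$), which does hold in both cases via $-\tfrac12(\lambda_{k,i}+\eta'\delta_{k+1,i})^2\le 0$. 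With the correct coefficient your cross-term identity becomes $\eta'\|g_{k+1}\|^2-\eta'\langle g_k,g_{k+1}\rangle=\tfrac{\eta'}{2}\|g_{k+1}-g_k\|^2+\tfrac{\eta'}{2}(\|g_{k+1}\|^2-\|g_k\|^2)$, so after telescoping a boundary term $\tfrac{\eta'}{2}\|\delta_K\|_2^2$ survives. This is harmless for the optimality gap (Lemma~\ref{lem:L_property} property~3 gives $\|\lambda_K\|_2^2\ge \eta'^2\|\delta_K\|_2^2$), but in the constraint-violation quadratic it cannot simply be dropped; the paper handles it by also retaining $\alpha D_{d^{\pi^*}_\rho}(\pi^*\|\pi_K)/(1-\gamma)$, lower-bounding it via Lemma~\ref{lem:kl_property} by $\tfrac{(1-\gamma)^3\alpha}{2\gamma^2 m}\|\delta_K-\delta^*\|_2^2$, and then a short binomial computation shows $-\tfrac{(1-\gamma)^3\alpha}{2\gamma^2 m}\|\delta_K-\delta^*\|_2^2+\tfrac{\eta'}{2}\|\delta_K\|_2^2\le \eta'\|\delta^*\|_2^2$, which is $O(\eta' m/(1-\gamma)^2)\le \alpha\log(|\Ac|)/(1-\gamma)$. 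With that patch your outline goes through and matches the paper's constants.
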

Note that the number of micro steps $t_k$ is chosen according to the dual variables $\lambda_{k}$ in the previous theorem. Denote by $T := \sum_{k=0}^{K-1} t_k$ the total number of iterations. 
\begin{corollary}
\label{cor:ARNPG-EPD}
Under the same conditions as in Theorem \ref{thm:ARNPG-EPD}, the ARNPG-EPD algorithm satisfies $V^{\pi^*}_1(\rho) - \frac{1}{K} \sum_{k=1}^{K} V_{1}^{\pi_k}(\rho) = O(\frac{m \log(T)}{(1 - \gamma)^5 T})$, and $b_i - \frac{1}{K} \sum_{k=1}^{K} V_{i}^{\pi_k}(\rho) = O(\frac{\sqrt{m} \log(T)}{(1 - \gamma)^{2.5} T})$.
\end{corollary}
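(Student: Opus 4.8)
The plan is to read Corollary \ref{cor:ARNPG-EPD} as a pure reparametrization of Theorem \ref{thm:ARNPG-EPD}, trading the macro-step count $K$ for the total micro-step count $T = \sum_{k=0}^{K-1} t_k$. First I would substitute the prescribed parameters into the two $K$-bounds \eqref{eqn:PD-opt-gap} and \eqref{eqn:con-vio}. Taking $\alpha$ at its lower limit $\frac{2\eta' m}{(1-\gamma)^3}$ and $\eta' = \Theta(1)$ collapses the optimality gap to $O\!\left(\frac{m}{(1-\gamma)^4 K}\right)$; and since $\sqrt{\frac{\alpha \log|\Ac|}{(1-\gamma)\eta'}} = \frac{\sqrt{2m\log|\Ac|}}{(1-\gamma)^2}$, it collapses the dominant, $\|\lambda^*\|$-independent part of the constraint-violation bound to $O\!\left(\frac{\sqrt{m}}{(1-\gamma)^2 K}\right)$, with the $\frac{\|\lambda^*\|}{\eta'}$ term absorbed as a problem-dependent constant. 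Thus the entire task reduces to rewriting $1/K$ as a function of $T$.

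The crux is bounding $T$. The definition $t_k = \lceil \frac{1}{1-\gamma}\log(\frac{5 L_k K}{2\eta' m \log|\Ac|}) + 1\rceil$ makes $T$ depend on the iterate-dependent quantities $L_k = 1 + \frac{\eta'(m-1)}{1-\gamma} + \sum_{i=2}^m \lambda_{k,i}$, so I would first obtain a deterministic control of the dual variables. From the update \eqref{eqn:dual-update-equation}, using $V_i^{\pi}(\rho) \in [0,\frac{1}{1-\gamma}]$ and $b_i \le \frac{1}{1-\gamma}$, one checks that $\lambda_{k,i} \ge 0$ throughout and that each branch of the $\max$ increases $\lambda_{k,i}$ by at most $\frac{\eta'}{1-\gamma}$; hence $\lambda_{k,i} \le \frac{(k+1)\eta'}{1-\gamma} = O\!\left(\frac{\eta' K}{1-\gamma}\right)$. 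This gives $L_k = O\!\left(\frac{mK}{1-\gamma}\right)$, so the logarithm in $t_k$ is $O\!\left(\log K + \log\frac{1}{1-\gamma}\right)$ and $t_k = O\!\left(\frac{\log K}{1-\gamma}\right)$ uniformly in $k$. Summing over $k$ yields $T = O\!\left(\frac{K \log K}{1-\gamma}\right)$.

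Finally I would invert this relation: since $T = O\!\left(\frac{K\log K}{1-\gamma}\right)$ and $\log K = \Theta(\log T)$, we get $\frac{1}{K} = O\!\left(\frac{\log T}{(1-\gamma)T}\right)$. Substituting into the two collapsed $K$-bounds produces the optimality-gap rate $O\!\left(\frac{m\log T}{(1-\gamma)^5 T}\right)$ and the claimed constraint-violation rate, the $\sqrt{m}$ scaling coming directly from the square-root term. I expect the main obstacle to be precisely the second paragraph: because $t_k$ (through $L_k$) is dictated by the realized dual trajectory, $T$ is a priori an iterate-dependent random sum, and the reparametrization is only legitimate once one has a deterministic, at-most-polynomial-in-$K$ bound on $L_k$. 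The per-step linear growth of $\lambda_{k,i}$ supplies exactly this, and crucially $L_k$ enters only inside a logarithm, so even the crude linear bound costs merely an extra $O(\log K)$ factor rather than degrading the $\tilde O(1/T)$ rate.
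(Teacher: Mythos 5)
Your proposal is correct and follows the same overall strategy as the paper: both reduce the corollary to showing $T=\Theta\!\bigl(\frac{K}{1-\gamma}\log K\bigr)$, which hinges on controlling $L_k$ (equivalently $\sum_{i}\lambda_{k,i}$) inside the logarithm defining $t_k$, and then substitute $\frac{K}{1-\gamma}=\Theta(T/\log T)$ into the Theorem~\ref{thm:ARNPG-EPD} bounds. The one step where you genuinely diverge is the dual-variable bound. The paper reuses its saddle-point estimate: it observes that the argument leading to \eqref{eqn:lambda_bound} applies at every macro step, giving the $K$-independent bound $\sum_{i=2}^m\lambda_{k,i}\le\sqrt{m}\|\lambda_k\|_2=O\bigl(\sqrt{m}\|\lambda^*\|_2+\frac{m\log|\Ac|}{(1-\gamma)^2}\bigr)$, hence $t_k=\Theta\bigl(\frac{\log K}{1-\gamma}\bigr)$ exactly. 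You instead use the elementary per-step increment bound $\lambda_{k+1,i}\le\lambda_{k,i}+\frac{\eta'}{1-\gamma}$ (valid since both branches of the $\max$ in \eqref{eqn:dual-update-equation} are dominated by $\lambda_{k,i}+\frac{\eta'}{1-\gamma}$ when $\lambda_{k,i}\ge 0$), yielding $L_k=O\bigl(\frac{mK}{1-\gamma}\bigr)$. Your bound is much cruder, but as you correctly note it enters only logarithmically, so it still gives $t_k=O\bigl(\frac{\log K}{1-\gamma}\bigr)$ and $T=O\bigl(\frac{K\log K}{1-\gamma}\bigr)$; combined with $t_k\ge 1$ (so $T\ge K$ and $\log K=O(\log T)$) this suffices to invert and obtain $1/K=O\bigl(\frac{\log T}{(1-\gamma)T}\bigr)$. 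What your route buys is simplicity and independence from Slater's condition for this step (it needs no control of $\|\lambda^*\|$); what the paper's route buys is a tight two-sided $\Theta(\cdot)$ on $t_k$ and explicit constants. One small caveat: if you carry the $(1-\gamma)$ factors through the constraint-violation term carefully, $\frac{1}{K}\sqrt{\frac{\alpha\log|\Ac|}{(1-\gamma)\eta'}}$ with $\alpha=\Theta\bigl(\frac{\eta' m}{(1-\gamma)^3}\bigr)$ gives $O\bigl(\frac{\sqrt{m}\log T}{(1-\gamma)^{3}T}\bigr)$ rather than the stated $(1-\gamma)^{-2.5}$; this discrepancy is present in the paper's own statement and is not a flaw introduced by your argument, but it is worth flagging rather than asserting the claimed exponent follows "directly."
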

The theorem and corollary establish convergence of the average optimality gap and the average constraint violation, in the same manner as many previous works \cite{ding2020natural,xu2021crpo,jain2022towards,liu2021learning} on CMDPs. However, a guarantee on the last iterate is more preferable. This drawback is inherited from the primal-dual algorithm for convex optimization, where the primal-dual algorithm with sublinear convergence can only be guaranteed on the average solution, as of our knowledge. Last iterate convergence is still an on-going open research topic. 


\subsection{Max-min trade-off criteria}
\label{sec:max-min}
Finally, we consider the max-min trade-off criterion defined as
\begin{align}\label{def:minimax}
    \max_{\theta} \min_{\lambda \in \Lambda} \Phi(V^{\pi_\theta}_{1:m}(\rho), \lambda),
\end{align}
where $\Lambda$ is a subset of the $m$-dimensional probability simplex $\Delta([m])$. We assume $\Phi(\cdot, \lambda)$ is concave and $\Phi(v, \cdot)$ is convex. We also assume $\Phi$ is $\beta$-smooth w.r.t. the norm $\Psi(v,\lambda) = \|v\|_\infty + \|\lambda\|_1$. 

The max-min criterion mentioned in Section \ref{intro} can be represented by $\Phi(v, \lambda) = \sum_{i = 1}^m v_i \lambda_i / c_i$ and $\Lambda = \Delta([m])$. 
$\Phi$ satisfies the concave-convex assumption and is $\beta$-smooth w.r.t. the norm $\Psi$ with $\beta = O(m)$.

Denote $F(v) := \min_{\lambda \in \Lambda} \Phi(v, \lambda)$, which is concave but not necessarily smooth. Thus we cannot apply the ARNPG-IMD algorithm (Algorithm \ref{alg:ARNPG-IMD}) due to the non-smoothness of $F$, and the subgradient-based method can only guarantee $O(1/\sqrt{T})$ convergence. 

We next integrate the optimistic mirror descent ascent (OMDA) method \cite{wei2020linear} for solving minimax optimization 
in the ARNPG framework. Denote the gradients $\tilde{G}_{k}^{\lambda} = \nabla_{\lambda} \Phi(V^{\tilde{\pi}_k}_{1:m}(\rho),\tilde{\lambda}_k)$ and $\tilde{G}_{k}^{v} = \nabla_v \Phi(V^{\tilde{\pi}_k}_{1:m}(\rho), \tilde{\lambda}_k)$. It can be verified that $\|\tilde{G}_k^v\|_1 \leq L$ for some $L$ due to the smoothness of $\Phi$. OMDA performs gradient ascent along the direction $\tilde{G}_k^{v}$ w.r.t. the value vector, and therefore we construct the reward in the ascent direction as $\tilde{r}_k(s,a) = \langle \tilde{G}_{k}^{v}, r_{1:m}(s,a) \rangle$. 
OMDA performs mirror descent along direction $\tilde{G}_{k}^{\lambda}$ w.r.t. the dual vector $\lambda$. A key ingredient of OMDA is that it updates twice in each macro step. ARNPG-OMD adopts this idea and update $(\pi,\lambda)$ from the same anchor points $(\pi_k, \lambda_k)$, first with ascent direction $(\tilde{r}_k, -\tilde{G}^{\lambda}_k) \in \Rb^{2m}$ and then a further step with direction $(\tilde{r}_{k+1}, -\tilde{G}^{\lambda}_{k+1}) \in \Rb^{2m}$. 


\begin{algorithm}[ht]
\caption{\textbf{ARNPG with Optimistic Mirror Descent Ascent Update (ARNPG-OMDA)}}
\label{alg:ARNPG-OMD}
\noindent \textbf{Input} $\pi_0, \lambda_0, \eta', \alpha, \eta, t_{0:K-1}, K$\\
\noindent \textbf{Initialize} $\tilde{\pi}_0 = \pi_0$ and $\lambda_0, \tilde{\lambda}_0$ as uniform distribution on $[m]$\\
\For{$k = 0, 1, \dots, K-1$}{
\noindent Update $\tilde{\pi}_{k+1} \leftarrow $InnerLoop($\pi_k, \tilde{r}_k, \alpha, \eta, t_k$), $\tilde{\lambda}_{k+1} \leftarrow \argmin_{\lambda \in \Lambda}\{\langle\tilde{G}_{k}^{\lambda}, \lambda \rangle +  \frac{D(\lambda || \lambda_k)}{\eta'}\}$ \\
\noindent Update $\pi_{k+1} \leftarrow $InnerLoop($\pi_k, \tilde{r}_{k+1}, \alpha, \eta, t_k$), $\lambda_{k+1} \leftarrow \argmin_{\lambda \in \Lambda}\{\langle\tilde{G}_{k+1}^{\lambda}, \lambda \rangle + \frac{D(\lambda || \lambda_k)}{\eta'}\}$
}
\textbf{Return:} a policy randomly chosen from $\{\tilde{\pi}_k\}_{k=1}^K$ 
\end{algorithm}

We present ARNPG-OMDA in Algorithm \ref{alg:ARNPG-OMD}, and establish the following
 performance guarantees:
\begin{theorem}
\label{thm:ARNPG-OMDA}
For any $K \geq 1$, take uniform policy $\pi_0$, $\eta' \leq \frac{1}{6\beta}$, $\alpha \geq \frac{6 \beta}{(1 - \gamma)^3}$, $\eta = \frac{1-\gamma}{\alpha}$, and $t_k = \lceil\frac{1}{1 - \gamma} \log(\frac{5LK}{6 \beta \log(|\Ac|)}) + 1\rceil$. The ARNPG-OMDA algorithm (Algorithm \ref{alg:ARNPG-OMD}) satisfies
\begin{align}
    &F(V_{1:m}^{\pi^*}(\rho)) - F\left( \frac{1}{K} \sum_{k=1}^{K} V_{1:m}^{\tilde{\pi}_k}(\rho) \right) \leq \frac{3\alpha \log(|\Ac|)}{(1 - \gamma)K} + \frac{\log(m)}{\eta' K}. \label{eqn:max-min-scalar-gap}
\end{align}
\end{theorem}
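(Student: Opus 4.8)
The plan is to cast the claim as an optimistic-mirror-descent (OMDA) duality-gap estimate evaluated at the auxiliary iterates $(\tilde\pi_k,\tilde\lambda_k)$, bounding the primal regret with Proposition~\ref{pro:fundamental-inequality} and the dual regret with the three-point inequality \eqref{eqn:fundamental-MD}. Write $v^\pi:=V^\pi_{1:m}(\rho)$, $\tilde v_k:=V^{\tilde\pi_k}_{1:m}(\rho)$, $\bar v:=\frac1K\sum_{k=1}^K\tilde v_k$, $\bar\lambda:=\frac1K\sum_{k=1}^K\tilde\lambda_k$, and recall $V^\pi_{\tilde r_k}(\rho)=\langle\tilde G^v_k,v^\pi\rangle$. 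Fixing the comparison point $\pi^*$ and $\hat\lambda:=\argmin_{\lambda\in\Lambda}\Phi(\bar v,\lambda)$, concavity of $\Phi(\cdot,\tilde\lambda_{k+1})$ and convexity of $\Phi(\tilde v_{k+1},\cdot)$ give $\Phi(v^*,\tilde\lambda_{k+1})-\Phi(\tilde v_{k+1},\hat\lambda)\le\langle\tilde G^v_{k+1},v^*-\tilde v_{k+1}\rangle+\langle\tilde G^\lambda_{k+1},\tilde\lambda_{k+1}-\hat\lambda\rangle$. Averaging over $k$ and invoking Jensen (convexity in $\lambda$ on the first term, concavity in $v$ on the second) lower-bounds the averaged left side by $\Phi(v^*,\bar\lambda)-\Phi(\bar v,\hat\lambda)\ge F(v^*)-F(\bar v)$, where the last step uses $\Phi(v^*,\bar\lambda)\ge\min_\lambda\Phi(v^*,\lambda)=F(v^*)$ and $\Phi(\bar v,\hat\lambda)=F(\bar v)$. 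Hence it suffices to bound the primal-dual regret $\sum_{k}\bigl[\langle\tilde G^v_{k+1},v^*-\tilde v_{k+1}\rangle+\langle\tilde G^\lambda_{k+1},\tilde\lambda_{k+1}-\hat\lambda\rangle\bigr]$ by $\frac{3\alpha\log|\Ac|}{1-\gamma}+\frac{\log m}{\eta'}$.

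Next I would exploit the optimistic structure separately in each block. On the dual side, the two mirror-descent steps from the common anchor $\lambda_k$ yield two instances of \eqref{eqn:fundamental-MD}: one for $\tilde\lambda_{k+1}$ with the stale gradient $\tilde G^\lambda_k$, one for $\lambda_{k+1}$ with the fresh gradient $\tilde G^\lambda_{k+1}$. Decomposing $\langle\tilde G^\lambda_{k+1},\tilde\lambda_{k+1}-\hat\lambda\rangle=\langle\tilde G^\lambda_{k+1},\lambda_{k+1}-\hat\lambda\rangle+\langle\tilde G^\lambda_{k+1}-\tilde G^\lambda_k,\tilde\lambda_{k+1}-\lambda_{k+1}\rangle+\langle\tilde G^\lambda_k,\tilde\lambda_{k+1}-\lambda_{k+1}\rangle$, the first piece telescopes along $\{\lambda_k\}$ leaving $\frac1{\eta'}D(\hat\lambda\|\lambda_0)\le\frac{\log m}{\eta'}$, and the third is controlled by the $\tilde\lambda_{k+1}$ inequality at $\lambda=\lambda_{k+1}$, producing negative divergences $-\frac1{\eta'}\bigl[D(\lambda_{k+1}\|\tilde\lambda_{k+1})+D(\tilde\lambda_{k+1}\|\lambda_k)\bigr]$. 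On the primal side I would mirror this exactly by applying Proposition~\ref{pro:fundamental-inequality} twice from the anchor $\pi_k$: once for $\tilde\pi_{k+1}$ with reward $\tilde r_k$ and comparison $\pi_{k+1}$, once for $\pi_{k+1}$ with reward $\tilde r_{k+1}$ and comparison $\pi^*$. Rewritten via $V^\pi_{\tilde r}(\rho)=\langle\tilde G,v^\pi\rangle$, these are the primal counterparts of the dual three-point inequalities, with $\frac{\alpha}{1-\gamma}D_{d_\rho^\pi}(\pi\|\cdot)$ playing the role of the Bregman divergence and an additive slack $\epsilon_k$; crucially, the positive-drift term $+\frac{\alpha}{1-\gamma}D_{d^{\pi^*}_\rho}(\pi^*\|\pi_{k+1})$ is precisely what closes the telescoping along $\{\pi_k\}$, leaving $\frac{\alpha}{1-\gamma}D_{d^{\pi^*}_\rho}(\pi^*\|\pi_0)\le\frac{\alpha\log|\Ac|}{1-\gamma}$, while the analogous negative divergences $-\frac{\alpha}{1-\gamma}D_{d^{\cdot}_\rho}(\cdot\|\cdot)$ are set aside for absorption.

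The main obstacle is the two cross terms $\langle\tilde G^v_{k+1}-\tilde G^v_k,v_{k+1}-\tilde v_{k+1}\rangle$ and $\langle\tilde G^\lambda_{k+1}-\tilde G^\lambda_k,\tilde\lambda_{k+1}-\lambda_{k+1}\rangle$, which must be dominated by the negative divergences produced above. Here I would use $\beta$-smoothness of $\Phi$ w.r.t. $\Psi(v,\lambda)=\|v\|_\infty+\|\lambda\|_1$ to get $\|\tilde G^v_{k+1}-\tilde G^v_k\|_1+\|\tilde G^\lambda_{k+1}-\tilde G^\lambda_k\|_\infty\le\beta\bigl(\|\tilde v_{k+1}-\tilde v_k\|_\infty+\|\tilde\lambda_{k+1}-\tilde\lambda_k\|_1\bigr)$, then Hölder and Young's inequality to trade the cross terms for $\beta$-multiples of squared $\ell_1$/$\ell_\infty$ distances. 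The delicate point — the reason the two blocks are coupled — is converting these value/simplex distances into the divergences that carry the negative sign: I would use $\|v^\pi-v^{\pi'}\|_\infty\le\frac1{1-\gamma}\|d^\pi_\rho-d^{\pi'}_\rho\|_1$ together with the implicit state-action-visitation mirror-descent viewpoint underlying Proposition~\ref{pro:fundamental-inequality} and Pinsker's inequality to bound value distances by the primal divergences, and Pinsker on the simplex for the dual distances. The step sizes $\eta'\le\frac1{6\beta}$ and $\alpha\ge\frac{6\beta}{(1-\gamma)^3}$ are calibrated so that, after Young's inequality, each cross term is at most half of an available negative divergence; the cross terms then cancel, and only the two telescoped initial divergences survive.

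Finally I would assemble everything: the surviving terms are $\frac{\alpha\log|\Ac|}{1-\gamma}+\frac{\log m}{\eta'}$ plus the accumulated slack $\sum_k\epsilon_k$ (two applications per macro step). Using $\|\tilde r_k\|_\infty\le L$ and the prescribed $t_k=\lceil\frac1{1-\gamma}\log(\frac{5LK}{6\beta\log|\Ac|})+1\rceil$, Proposition~\ref{pro:fundamental-inequality} gives $\epsilon_k=O\bigl(\frac{\beta\log|\Ac|}{(1-\gamma)^2K}\bigr)$, so $\sum_k\epsilon_k=O\bigl(\frac{\beta\log|\Ac|}{(1-\gamma)^2}\bigr)$, which is dominated by $\frac{\alpha\log|\Ac|}{1-\gamma}\ge\frac{6\beta\log|\Ac|}{(1-\gamma)^4}$ and accounts for the growth of the primal coefficient from $1$ to $3$. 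Dividing the regret bound by $K$ yields \eqref{eqn:max-min-scalar-gap}. I would conclude by noting that the returned policy, drawn uniformly from $\{\tilde\pi_k\}$, realizes $\bar v$ in expectation, so the stated bound on $F(\bar v)$ is exactly the intended guarantee.
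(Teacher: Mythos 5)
Your proposal follows essentially the same route as the paper's proof: the same double application of Proposition~\ref{pro:fundamental-inequality} and the three-point inequality from the common anchors $(\pi_k,\lambda_k)$, the same optimistic decomposition of the primal--dual regret with telescoping along $\{\pi_k\}$ and $\{\lambda_k\}$, the same absorption of the stale-gradient cross terms via $\beta$-smoothness, Young's inequality, Pinsker, and the value-to-KL bound (which the paper packages as the technical lemma showing $\sum_k E_k \geq 0$), and the same accounting of the $2\epsilon_k$ slack per macro step that turns the leading coefficient from $1$ into $3$. The only part left implicit is the exact constant-chasing in the cross-term absorption (the paper's $\tfrac{2\beta}{\sqrt{8}-2}\leq 3\beta$ split together with the reindexing that uses $X_0=\tilde{X}_0$), which your calibration of $\eta'\leq\tfrac{1}{6\beta}$ and $\alpha\geq\tfrac{6\beta}{(1-\gamma)^3}$ correctly anticipates.
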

Similar to the discussion after Corollary \ref{cor:ARNPG-EPD}, Theorem \ref{thm:ARNPG-OMDA} provides a performance guarantee on the average value vector $F(\frac{1}{K} \sum_{k=1}^{K} V_{1:m}^{\tilde{\pi}_k}(\rho))$, which is inherited from the OMDA methods. Denote the total number of iterations by $T := \sum_{k=0}^{K-1} 2 t_k$.
\begin{corollary}
\label{cor:ARNPG-OMDA}
Under the same conditions as in Theorem \ref{thm:ARNPG-OMDA},  ARNPG-OMDA satisfies $F\left(V_{1:m}^{\pi^*}(\rho)\right) - F\left(\frac{1}{K} \sum_{k=1}^{K}V_{1:m}^{\pi_k}(\rho)\right) = O\left(\frac{\beta \log(T)}{(1 -\gamma)^5 T}\right).$
\end{corollary}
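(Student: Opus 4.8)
The plan is to convert the per-macro-step guarantee of Theorem \ref{thm:ARNPG-OMDA} into a rate expressed in terms of the total iteration count $T$, in exactly the same way that Corollary \ref{cor:smooth-scalar} derives its rate from Theorem \ref{thm:ARNPG-IMD}. The left-hand side of the corollary coincides with that of the theorem, so the entire task reduces to re-expressing the right-hand side $\frac{3\alpha \log(|\Ac|)}{(1-\gamma)K} + \frac{\log(m)}{\eta' K}$ as a function of $T$ after substituting the prescribed parameter values.

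First I would compute $T$ as a function of $K$. Since the chosen $t_k = \lceil \frac{1}{1-\gamma}\log(\frac{5LK}{6\beta\log(|\Ac|)}) + 1 \rceil$ does not depend on $k$, the total iteration count is $T = \sum_{k=0}^{K-1} 2 t_k = 2 K t_k = \Theta(\frac{K}{1-\gamma}\log K)$, treating $\gamma, \beta, L, m, |\Ac|$ as fixed. Inverting gives $K = \Theta(\frac{(1-\gamma)T}{\log K})$, and since $\log T = \log K + \log\log K - \log(1-\gamma) + O(1)$ has leading term $\log K$, we obtain $\log T = \Theta(\log K)$ and hence $K = \Theta(\frac{(1-\gamma)T}{\log T})$.

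Next I would substitute the parameter choices $\alpha = \Theta(\frac{\beta}{(1-\gamma)^3})$ (taking the smallest admissible $\alpha = \frac{6\beta}{(1-\gamma)^3}$) and $\eta' = \Theta(1/\beta)$ (taking $\eta' = \frac{1}{6\beta}$) into the theorem's bound. Absorbing $\log(|\Ac|)$ and $\log m$ as constants, the first term becomes $\Theta(\frac{\beta}{(1-\gamma)^4 K})$ and the second becomes $\Theta(\frac{\beta}{K})$. Plugging in $K = \Theta(\frac{(1-\gamma)T}{\log T})$ turns these into $\Theta(\frac{\beta\log T}{(1-\gamma)^5 T})$ and $\Theta(\frac{\beta\log T}{(1-\gamma)T})$ respectively. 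For $\gamma \in (0,1)$ the first dominates, yielding the claimed bound $O(\frac{\beta\log T}{(1-\gamma)^5 T})$.

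The calculation is entirely routine; the only point requiring a little care — and the closest thing to an obstacle — is justifying $\log T = \Theta(\log K)$, so that the logarithmic factor may be written in terms of $T$ rather than $K$, and then checking which of the two right-hand-side terms carries the worse $(1-\gamma)$ exponent. Both become immediate once $T$ is written explicitly as $2Kt_k$ using the fact that $t_k$ is constant in $k$.
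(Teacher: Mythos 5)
Your proposal is correct and follows essentially the same route as the paper: express $T = \Theta(\frac{K}{1-\gamma}\log K)$, invert to $\frac{K}{1-\gamma} = \Theta(T/\log T)$, and substitute into the bound of Theorem \ref{thm:ARNPG-OMDA} with $\alpha = \Theta(\beta/(1-\gamma)^3)$ and $\eta' = \Theta(1/\beta)$. The extra care you take in justifying $\log T = \Theta(\log K)$ and in checking that the $\frac{3\alpha\log(|\Ac|)}{(1-\gamma)K}$ term dominates is a fine (and slightly more explicit) elaboration of the paper's one-line argument.
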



\section{Empirical evaluation and application} \label{sec:empirical}
In this section, we present the experimental results on CMDP. We compare the performance of the proposed ARNPG-EPD algorithm (Algorithm \ref{alg:ARNPG-EPD}) with two benchmarks: NPG-PD \cite{ding2020natural} and CRPO \cite{xu2021crpo}. 
Experimental details on CMDP are postponed to Appendix \ref{sec:app_exp_cmdp} and further experiments on smooth concave scalarization and max-min trade-off are presented in Appendix \ref{sec:app_exp_momdp}. We provide code at \href{https://github.com/tliu1997/ARNPG-MORL}{https://github.com/tliu1997/ARNPG-MORL}.


\subsection{Tabular CMDP with exact gradients}
\label{sec:cmdp_exact}
Recall that under softmax policy with exact gradients, Corollary \ref{cor:ARNPG-EPD} (Theorem \ref{thm:ARNPG-EPD}) guarantees $\tilde{O}(1/T)$ convergence of both performance measures: average optimality gap and average constraint violation. We compare the proposed ARNPG-EPD with the benchmarks NPG-PD and CRPO under both performance measures on a randomly generated CMDP with a single constraint, which are illustrated in Figure \ref{fig:simple_cmdp}. The horizontal axis is the total number of iterations, i.e., including the micro steps in InnerLoop of ARNPG-EPD.

\begin{figure}[ht]
\centering
\begin{subfigure}{0.24\textwidth}
\centering
\includegraphics[width=\textwidth]{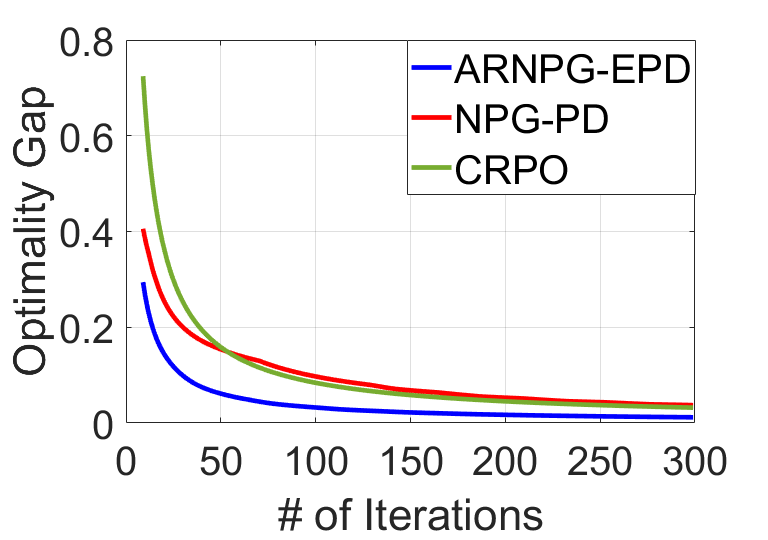}
\caption{}
\end{subfigure}
\begin{subfigure}{0.24\textwidth}
\centering
\includegraphics[width=\textwidth]{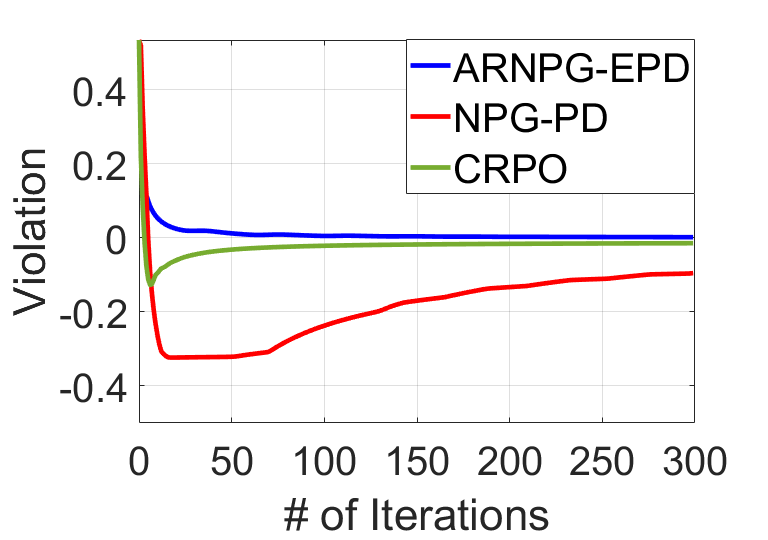}
\caption{}
\end{subfigure}
\begin{subfigure}{0.24\textwidth}
\centering
\includegraphics[width=\textwidth]{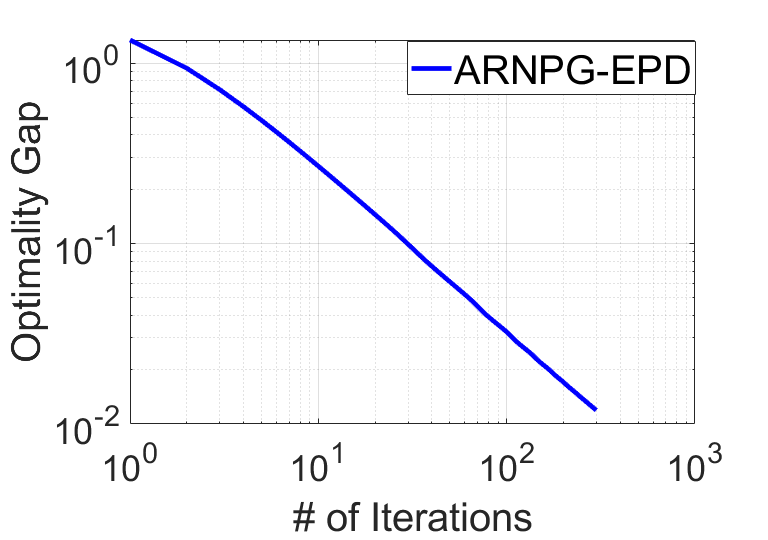}
\caption{}
\end{subfigure}
\begin{subfigure}{0.24\textwidth}
\centering
\includegraphics[width=\textwidth]{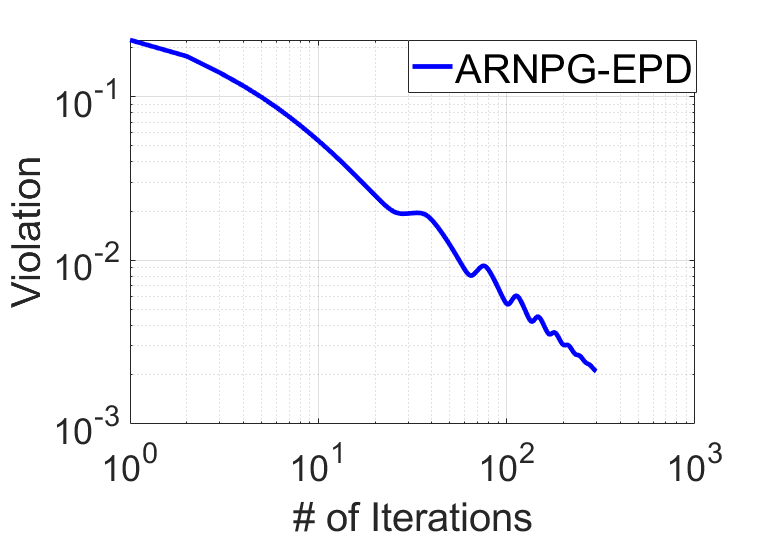}
\caption{}
\end{subfigure}
\caption{The average optimality gap and the average constraint violation versus the total number of iterations, for ARNPG-EPD, NPG-PD, and CRPO on a randomly generated CMDP.}
\label{fig:simple_cmdp}
\end{figure}

Figures \ref{fig:simple_cmdp}(a) and \ref{fig:simple_cmdp}(b) show that both the average optimality gap and the average constraint violation of the ARNPG-EPD algorithm converge faster than those of NPG-PD. Since the CRPO focuses on the violated constraint, the policy becomes feasible quickly, though at the cost of an initially slower convergence for the optimality gap. As illustrated in Figures \ref{fig:simple_cmdp}(c) and \ref{fig:simple_cmdp}(d), the slopes of both the optimality gap and the constraint violation of the ARNPG-EPD algorithm in the log-log plots are approximately between -0.9 and -1, 
indicating a converge rate of $\tilde{O}(1/T)$.

\subsection{Sample-based tabular CMDP} \label{sec:cmdp_sample}
\begin{figure}[ht]
\centering
\begin{subfigure}{0.24\textwidth}
\centering
\includegraphics[width=\textwidth]{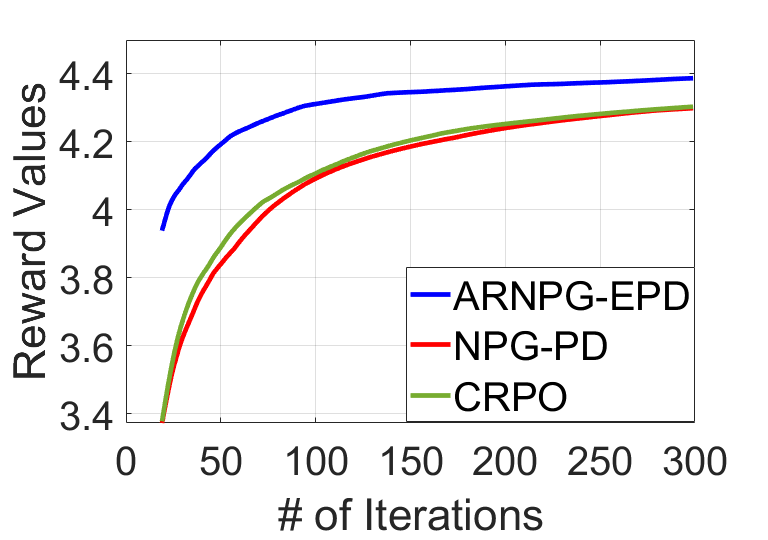}
\caption{}
\end{subfigure}
\begin{subfigure}{0.24\textwidth}
\centering
\includegraphics[width=\textwidth]{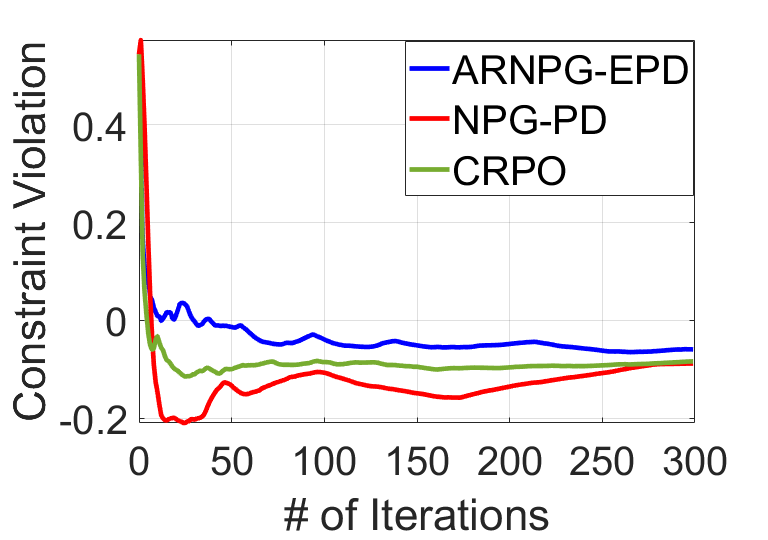}
\caption{}
\end{subfigure}
\begin{subfigure}{0.24\textwidth}
\centering
\includegraphics[width=\textwidth]{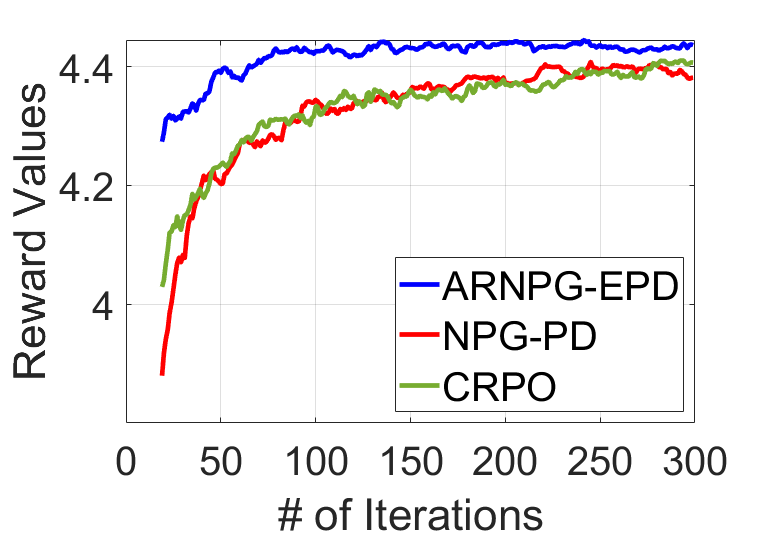}
\caption{}
\end{subfigure}
\begin{subfigure}{0.24\textwidth}
\centering
\includegraphics[width=\textwidth]{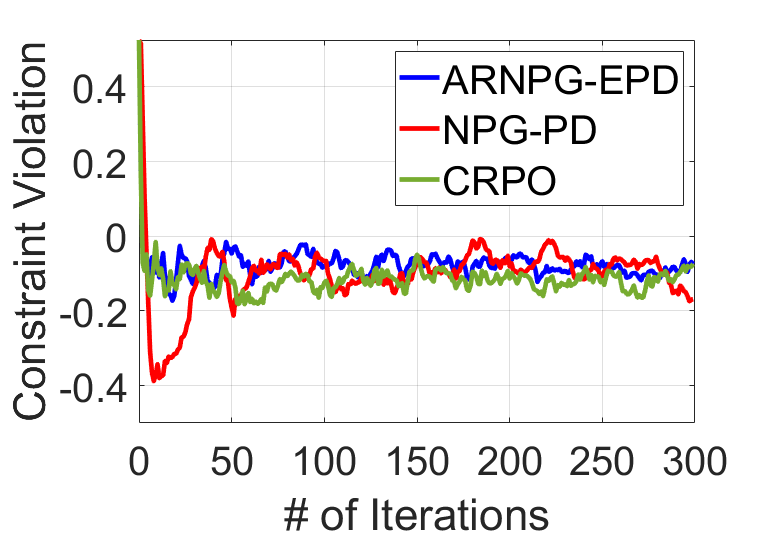}
\caption{}
\end{subfigure}
\caption{The reward values and the constraint violation 
with respect to the total number of iterations, for sample-based ARNPG-EPD, NPG-PD, and CRPO on a randomly generated CMDP.}
\label{fig:simple_cmdp_A}
\end{figure}

We next consider the same tabular CMDP described in Section \ref{sec:cmdp_exact} without exact policy gradients. Instead, policy gradients are estimated by samples from a generative model that can generate independent trajectories starting from any state and action pair. The assumption of such a generative model is common \cite{lan2021policy, ding2020natural, xu2021crpo}. 

The performances of CRPO, NPG-PD, and ARNPG-EPD in the sample-based scenario are shown in Figure \ref{fig:simple_cmdp_A}. 
Figures \ref{fig:simple_cmdp_A}(a) and \ref{fig:simple_cmdp_A}(b) display the averaged performance, while Figures \ref{fig:simple_cmdp_A}(c) and \ref{fig:simple_cmdp_A}(d) display the performance of the current iterate (a.k.a. last-iterate in optimization literature).
It shows that in this sample-based scenario, ARNPG-EPD achieves higher reward values with faster convergence, while all three algorithms satisfy the constraint after a few iterations.

\subsection{Acrobot-v1}
To demonstrate the efficacy of ARNPG-EPD on complex tasks, we have conducted experiments on the Acrobot-v1 environment from OpenAI Gym \cite{1606.01540}.
We follow the same experiment setup in \cite{xu2021crpo}, where there is a reward value to maximize, and two cost values to be constrained below some thresholds. The superior performance of ARNPG-EPD is shown in Figure \ref{fig:acrobot}. 


\begin{figure}[ht]
\centering
\begin{subfigure}{0.3\textwidth}
\centering
\includegraphics[width=\textwidth]{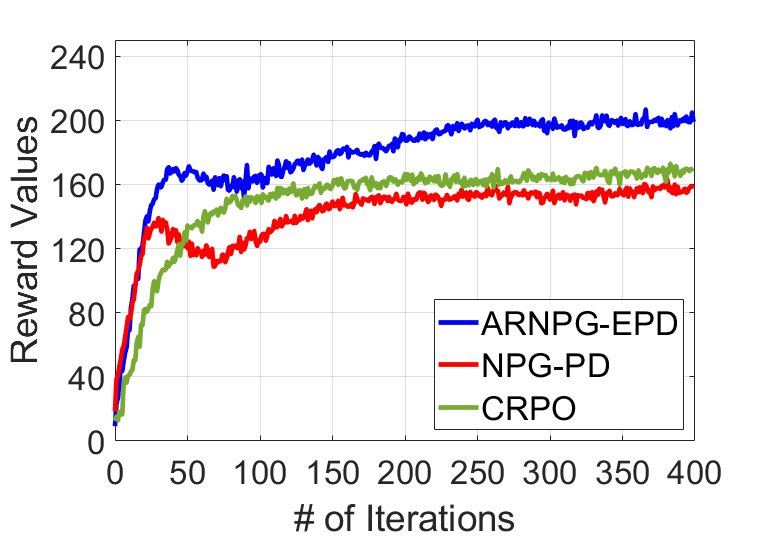}
\caption{}
\end{subfigure}
\begin{subfigure}{0.3\textwidth}
\centering
\includegraphics[width=\textwidth]{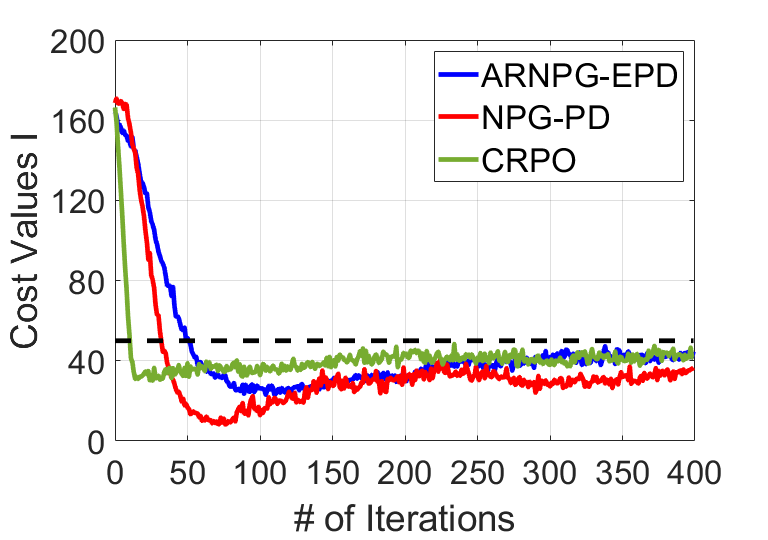}
\caption{}
\end{subfigure}
\begin{subfigure}{0.3\textwidth}
\centering
\includegraphics[width=\textwidth]{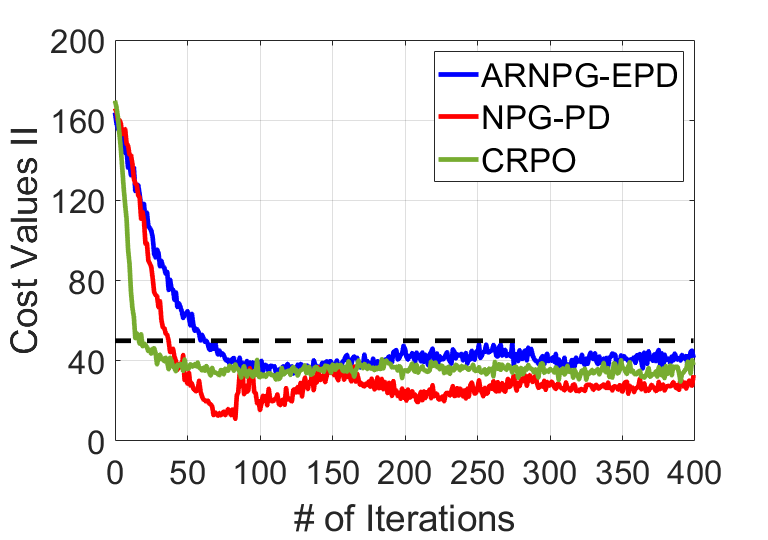}
\caption{}
\end{subfigure}
\caption{Last-iterate performance for sample-based ARNPG-EPD, NPG-PD, CRPO averaged over 10 random seeds. The black dashed lines in (b) and (c) represent given thresholds.}
\label{fig:acrobot}
\end{figure}

Figure \ref{fig:acrobot}(a) shows that ARNPG-EPD achieves a higher reward value compared to NPG-PD and CRPO, while Figures \ref{fig:acrobot}(b) and \ref{fig:acrobot}(c) demonstrate that the cost values of all three algorithms are below the thresholds after a few initial iterations. We believe the superiority is due to the new primal-dual design inspired by \cite{yu2017primal} (discussed in Section \ref{sec:cmdp}) and the flexibility of choosing $t_k$ in the InnerLoop  in the framework. More experiments with different $t_k$ are presented in Appendix \ref{sec:app_exp_cmdp}.

\section{Conclusion and future works} \label{sec:conclusion}
We propose an ARNPG framework to systematically integrate well-performing first-order methods into the design of policy gradient-based algorithms for multi-objective MDPs. The designed algorithms achieve a global $\tilde{O}(1/T)$ convergence rate  under the softmax parameterization with exact gradients, and empirically have satisfactory performance beyond tabular and exact gradient settings. We believe that ARNPG has potential applications in other scenarios, since the general and flexible framework allows integration with more advanced first-order methods, currently and in the future. 

A natural future direction is to extend the theoretical results to more general settings such as function approximation and sample-based scenarios. Viewing ARNPG as a heuristic, the anchor-changing ideas can also be applied to policy optimization for multi-agent RL and meta RL.




\section{Acknowledgement}
P. R. Kumar's work is based upon work partially supported by the US Army Contracting Command under W911NF-22-1-0151, US Office of Naval Research under N00014-21-1-2385, 4/21-22 DARES: Army Research Office W911NF-21-20064, US National Science Foundation under CMMI-2038625. The views expressed herein and conclusions contained in this document are those of the authors and should not be interpreted as representing the views or official policies, either expressed or implied, of the U.S. Army Contracting Command, ONR, ARO, NSF, or the United States Government. The U.S. Government is authorized to reproduce and distribute reprints for Government purposes notwithstanding any copyright notation herein.

Dileep Kalathil gratefully acknowledges funding from the U.S. National Science Foundation (NSF) grants NSF-CRII-CPS-1850206 and NSF-CAREER-EPCN-2045783.

We thank Dongsheng Ding and Tengyu Xu for generously sharing their code in \cite{ding2020natural, xu2021crpo} as baselines.


\section*{Checklist}

The checklist follows the references.  Please
read the checklist guidelines carefully for information on how to answer these
questions.  For each question, change the default \answerTODO{} to \answerYes{},
\answerNo{}, or \answerNA{}.  You are strongly encouraged to include a {\bf
justification to your answer}, either by referencing the appropriate section of
your paper or providing a brief inline description.  For example:
\begin{itemize}
  \item Did you include the license to the code and datasets? \answerYes{See Section~\ref{sec:empirical}.}
  \item Did you include the license to the code and datasets? \answerNo{The code and the data are proprietary.}
  \item Did you include the license to the code and datasets? \answerNA{}
\end{itemize}
Please do not modify the questions and only use the provided macros for your
answers.  Note that the Checklist section does not count towards the page
limit.  In your paper, please delete this instructions block and only keep the
Checklist section heading above along with the questions/answers below.

\begin{enumerate}

\item For all authors...
\begin{enumerate}
  \item Do the main claims made in the abstract and introduction accurately reflect the paper's contributions and scope?
    \answerYes{}
  \item Did you describe the limitations of your work?
    \answerYes{See the last paragraphs of Section \ref{sec:cmdp} and \ref{sec:max-min}.}
  \item Did you discuss any potential negative societal impacts of your work?
    \answerNA{}
  \item Have you read the ethics review guidelines and ensured that your paper conforms to them?
    \answerYes{}
\end{enumerate}

\item If you are including theoretical results...
\begin{enumerate}
  \item Did you state the full set of assumptions of all theoretical results?
    \answerYes{After the definition of the problems \eqref{def:smooth-scalar},\eqref{def:cmdp},\eqref{def:minimax} } 
        \item Did you include complete proofs of all theoretical results?
    \answerYes{See Appendix \ref{appendix:support}, \ref{sec:app_proof_arnpg}, and \ref{sec:proof-theory-app}.}
\end{enumerate}

\item If you ran experiments...
\begin{enumerate}
  \item Did you include the code, data, and instructions needed to reproduce the main experimental results (either in the supplemental material or as a URL)?
    \answerYes{See \href{https://github.com/tliu1997/ARNPG-MORL}{https://github.com/tliu1997/ARNPG-MORL}.}
  \item Did you specify all the training details (e.g., data splits, hyperparameters, how they were chosen)?
    \answerYes{See Appendix \ref{sec:app_exp_cmdp} and \ref{sec:app_exp_momdp}.}
        \item Did you report error bars (e.g., with respect to the random seed after running experiments multiple times)?
    \answerYes{We only report the mean in Figure \ref{fig:acrobot} for clarity. In Appendix \ref{sec:app_exp_cmdp}, we will report error bars.}
        \item Did you include the total amount of compute and the type of resources used (e.g., type of GPUs, internal cluster, or cloud provider)?
    \answerYes{}
\end{enumerate}

\item If you are using existing assets (e.g., code, data, models) or curating/releasing new assets...
\begin{enumerate}
  \item If your work uses existing assets, did you cite the creators?
    \answerYes{}
  \item Did you mention the license of the assets?
    \answerYes{}
  \item Did you include any new assets either in the supplemental material or as a URL?
    \answerYes{See \href{https://github.com/tliu1997/ARNPG-MORL}{https://github.com/tliu1997/ARNPG-MORL}.}
  \item Did you discuss whether and how consent was obtained from people whose data you're using/curating?
    \answerNA{}
  \item Did you discuss whether the data you are using/curating contains personally identifiable information or offensive content?
    \answerNA{}
\end{enumerate}

\item If you used crowdsourcing or conducted research with human subjects...
\begin{enumerate}
  \item Did you include the full text of instructions given to participants and screenshots, if applicable?
    \answerNA{}
  \item Did you describe any potential participant risks, with links to Institutional Review Board (IRB) approvals, if applicable?
    \answerNA{}
  \item Did you include the estimated hourly wage paid to participants and the total amount spent on participant compensation?
    \answerNA{}
\end{enumerate}

\end{enumerate}

\newpage
\appendix


\section{Experimental settings and additional results for CMDP}
\label{sec:app_exp_cmdp}


\subsection{Tabular CMDP} \label{app:tabular_cmdp}
The tabular CMDP, for both exact-gradient scenario (Section \ref{sec:cmdp_exact}) and sample-based scenario (Section \ref{sec:cmdp_sample}), follows the same experimental setting as in \cite{ding2020natural}. 

The MDP with $m = 2$ objectives represented by $(\Sc, \Ac, P, \rho, \gamma, r_{1:2})$ (as the system model in Section \ref{sec:prelim}) is randomly generated, where $|\Sc| = 20$, $|\Ac| = 10$, $\rho$ is uniform distribution, and $\gamma=0.8$. For each $(s,a) \in \Sc \times \Ac$, $P(\cdot|s,a) \in \Delta(\Sc)$ is generated by normalizing a random vector $\sim Unif([0, 1]^{\Sc})$, and independent rewards $r_1(s,a), r_2(s,a) \sim Unif([0,1])$. Choosing the constraint coefficient $b_2 = 3$, the experiments are performed on the CMDP
\begin{align}
    \max_{\theta \in \Theta} \ V_{r_1}^{\pi_\theta}(\rho) \quad \text{s.t.}\ ~V_{r_2}^{\pi_\theta}(\rho) \geq b_2,
\end{align}
with the softmax policy class.

For both the exact-gradient scenario (Section \ref{sec:cmdp_exact}) and the sample-based scenario (Section \ref{sec:cmdp_sample}), we choose $\eta=1$ and $\eta'=1$ for ARNPG-EPD and NPG-PD (following the same hyperparameter selection as in \cite{ding2020natural}), since both rely on a primal-dual framework. Additionally, we fix $t_k=1, \forall k = 0, 1, \dots, K-1$ and select $\alpha = \frac{1-\gamma}{\eta} = 0.2$ for ARNPG-EPD. As for CRPO with exact gradients, we first fix the tolerance parameter as 0.01 and then choose the best learning rate 0.4 from the set $\{0.1, 0.2, \dots, 0.9, 1.0\}$, which enjoys the smallest average optimality gap after 300 iterations. For sample-based CRPO, we select the best learning rate 1.0 from the set $\{0.1, 0.5, 1, 2, 5\}$, which leads to the largest reward value after 300 iterations.



\subsection{Acrobot-v1}
To demonstrate the performance of the ARNPG-EPD algorithm on more complex tasks with a large state space and multiple constraints, we conduct experiments on the Acrobot-v1 from OpenAI Gym \cite{1606.01540}. The acrobot is a planar two-link robotic arm with two joints and two links, where the joint between the links is actuated. 

We follow the same experimental setup as in \cite{xu2021crpo}, where the task is to swing the end of the lower link to a given height, and there are two costs (objectives) corresponding to (i) the upper link swinging in a prohibited direction; (ii) the lower link swinging in a prohibited direction w.r.t. the upper link. The upper bound constraints on the cost values are $(50, 50)$. An actor-critic framework is adopted, where the actor is defined by a neural softmax policy with two hidden layers of widths $(128, 128)$, and there are 3 critics (one for each of the value functions) also parameterized by neural networks with two hidden layers of widths $(128, 128)$. 

Figure \ref{fig:acrobot} provides the performances of the last-iterates generated by the algorithms, averaged over 10 random seeds, where the step size of the dual update (i.e., 0.0005) is tuned from the set $\{0.00001, 0.0005, 0.001, 0.005, 0.01, 0.05\}$, and the tolerance parameter 0.5 of CRPO follows from \cite{xu2021crpo}. For ARNPG-EPD in Figure \ref{fig:acrobot}, we choose $t_k=2, \forall k = 0, 1, \dots, K-1$ and $\alpha = 1$. To provide more information about variance, Figure \ref{fig:acrobot_shade} includes shaded error bars ($\pm$ standard deviation). 

\begin{figure}[ht]
\centering
\begin{subfigure}{0.3\textwidth}
\centering
\includegraphics[width=\textwidth]{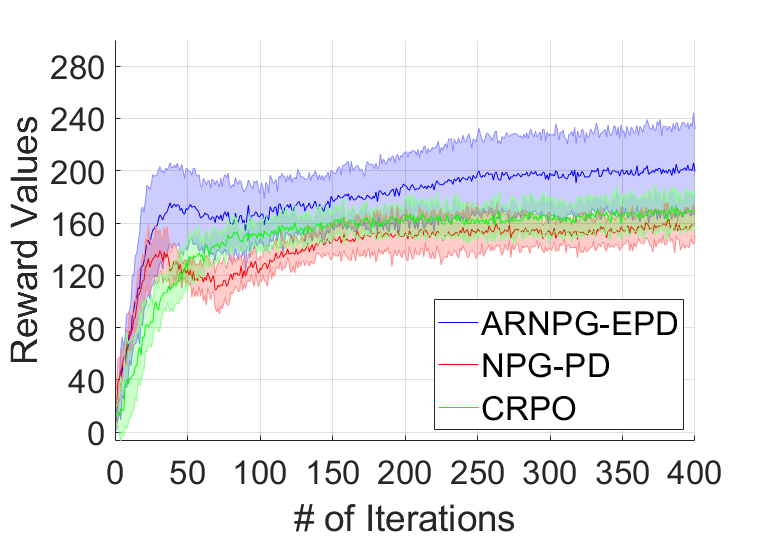}
\end{subfigure}
\begin{subfigure}{0.3\textwidth}
\centering
\includegraphics[width=\textwidth]{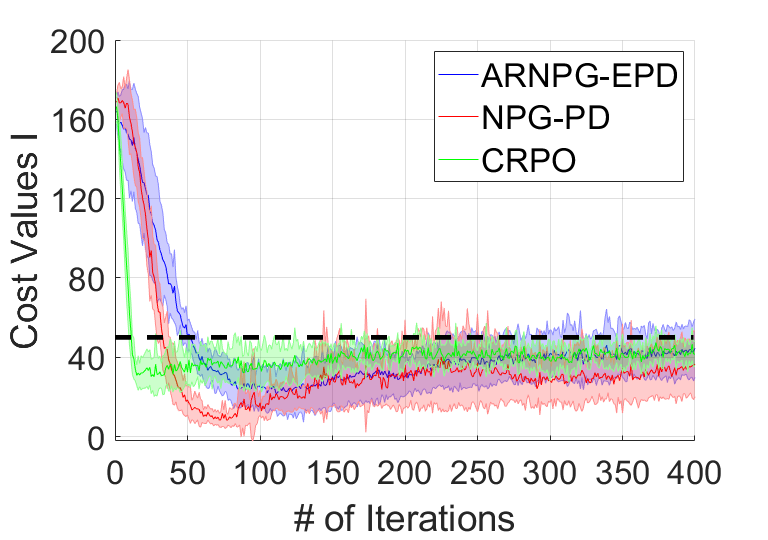}
\end{subfigure}
\begin{subfigure}{0.3\textwidth}
\centering
\includegraphics[width=\textwidth]{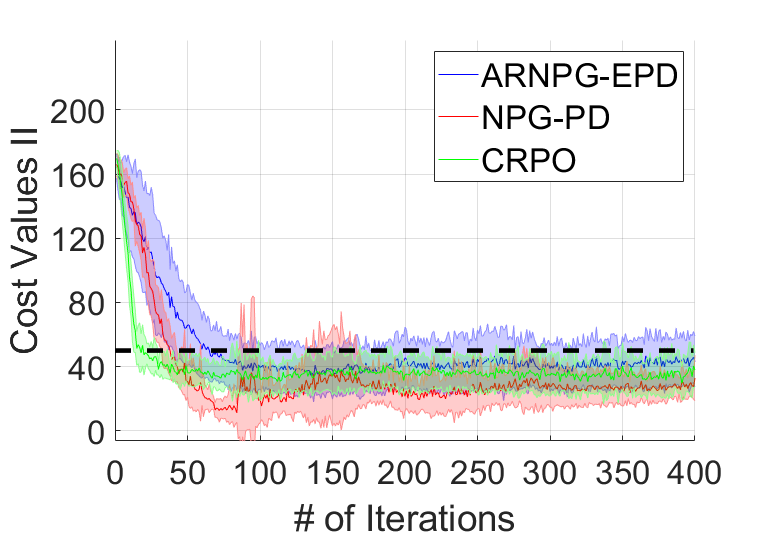}
\end{subfigure}
\caption{Last-iterate performance for sample-based ARNPG-EPD, NPG-PD, CRPO averaged over 10 random seeds with shaded error bars. The black dashed lines represent given thresholds.}
\label{fig:acrobot_shade}
\end{figure}

To illustrate the impact of the number of InnerLoop iterations, we fix $\alpha = 1$ and take $t_k = 1, 2, 5, 10$ respectively, as shown in Figure \ref{fig:acrobot_inner}. We omit the shaded error bars as in Figure \ref{fig:acrobot_shade} for better visualization and comparison of the mean performance of ARNPG-EPD algorithms under different $t_k$s.
Figure \ref{fig:acrobot_inner} demonstrates the trade-off induced by the selection of the hyperparameter $t_k$. 
When $t_k$ is small, e.g., $t_k = 1$, ARNPG-EPD cannot follow the underlying EPD algorithm (for convex optimization) closely. The curve is not as stable as the others, but is more adaptive (becomes feasible first) because it does not spend too much time on the KL-regularized MDP with reward $\tilde{r}_k$ in InnerLoop.
When $t_k$ is large, e.g., $t_k=5$ or $t_k = 10$, the curves are more stable.  
However, the convergence of the algorithms is relatively slow (become feasible slowly), since larger $t_k$ may waste computation on the regularized objective $\tilde{V}_{k,\alpha}^{\pi_\theta}(\rho)$ instead of focusing on the true optimization program. 

In Acrobot-v1, we find $t_k = 2$ to be the best choice empirically. Note that at iteration 300, returns of ARNPG-EPD algorithms are all feasible, and even the lowest mean reward value of 181.8  (ARNPG-EPD with $t_k = 1$), is higher than the mean reward values of NPG-PD and TRPO in Figure \ref{fig:acrobot}. 

\begin{figure}[ht]
\centering
\begin{subfigure}{0.3\textwidth}
\centering
\includegraphics[width=\textwidth]{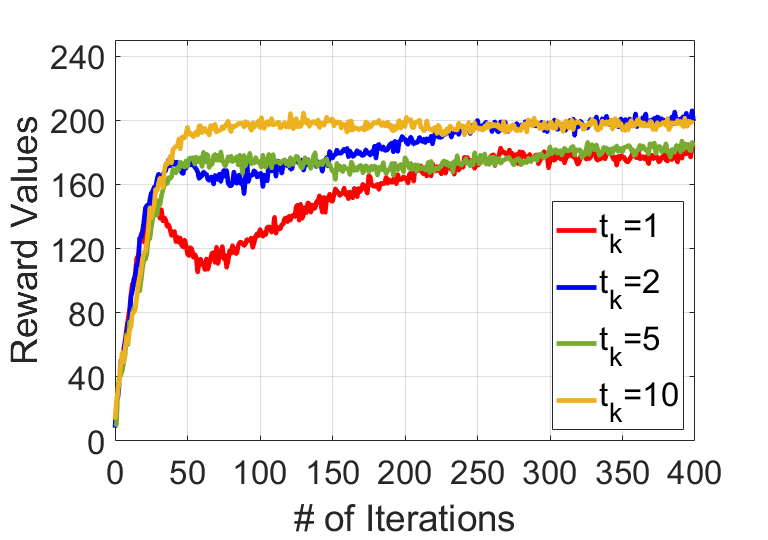}
\end{subfigure}
\begin{subfigure}{0.3\textwidth}
\centering
\includegraphics[width=\textwidth]{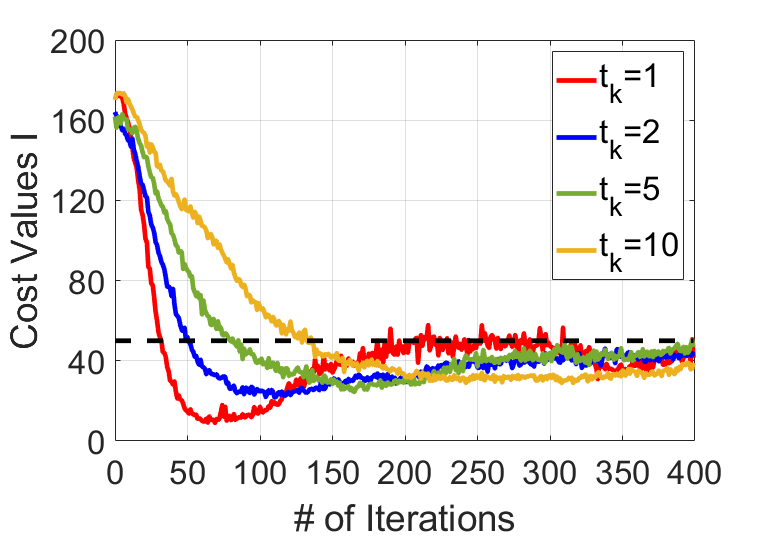}
\end{subfigure}
\begin{subfigure}{0.3\textwidth}
\centering
\includegraphics[width=\textwidth]{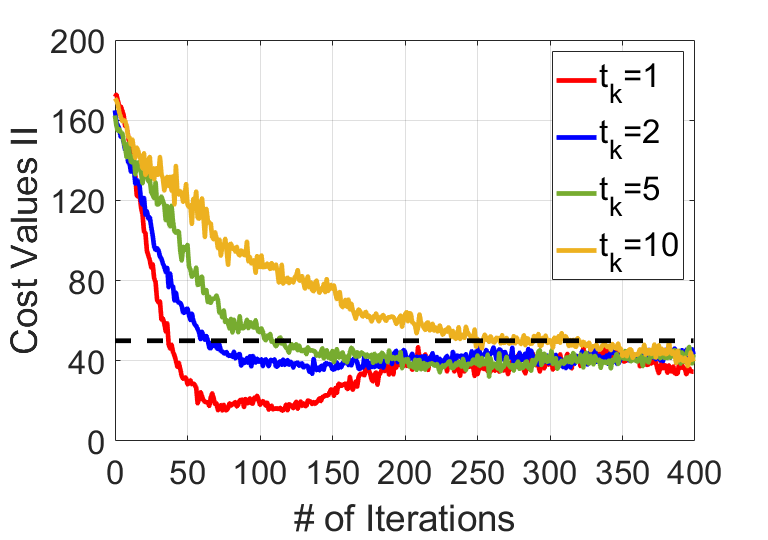}
\end{subfigure}
\caption{Last-iterate performance for sample-based ARNPG-EPD with different inner loops ($t_k = 1, 2, 5, 10$) averaged over 10 random seeds versus the total number of iterations.}
\label{fig:acrobot_inner}
\end{figure}

\subsection{Hopper-v3}
We now demonstrate the performance of our approach on Hopper-v3, a more complex robotics control task, with a constraint of moving speed 82.748 \cite{zhang2020first}. Hopper-v3 is implemented via the OpenAI Gym based on the MuJoCo physical simulators \cite{todorov2012mujoco}, where both the state space and the action space are continuous. We choose the current SOTA of this task, namely the FOCOPS (First Order Constrained Optimization in Policy Space) algorithm \cite{zhang2020first}, for a comparison with our approach. Since the policy update of the FOCOPS algorithm is based on the PPO (Proximal Policy Optimization) algorithm, for a fair comparison, we also revise our algorithm to a corresponding version called ``ARPPO-EPD", where the NPG update is replaced by PPO. \emph{It is also an illustration that the anchor-changing idea is readily to be combined with other policy gradient methods.} We compare performance with other baselines NPG-PD \cite{ding2020natural} and CRPO \cite{xu2021crpo}.

Figure \ref{fig:hopper} reports performance averaged over 5 random seeds and illustrates that our ARPPO-EPD algorithms achieve higher reward values compared with other methods, while achieving similar constraint values. (Note that 1, 2, 5 in parentheses of ARPPO-EPD represent the number of inner loops $t_k=1, 2, 5$.)
\begin{figure}[ht]
\centering
\begin{subfigure}{0.45\textwidth}
\centering
\includegraphics[width=\textwidth]{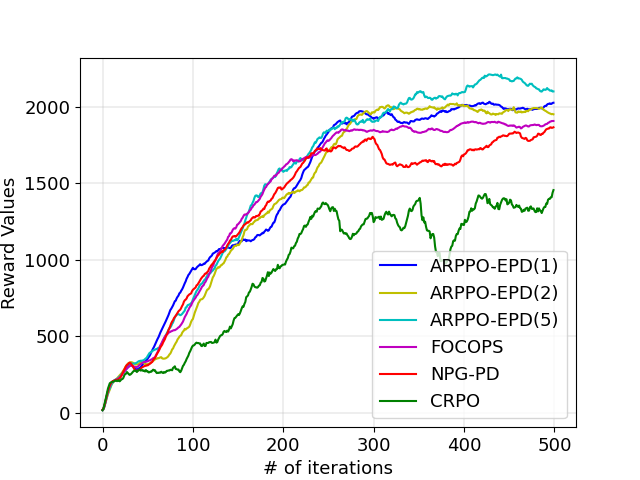}
\end{subfigure}
\begin{subfigure}{0.45\textwidth}
\centering
\includegraphics[width=\textwidth]{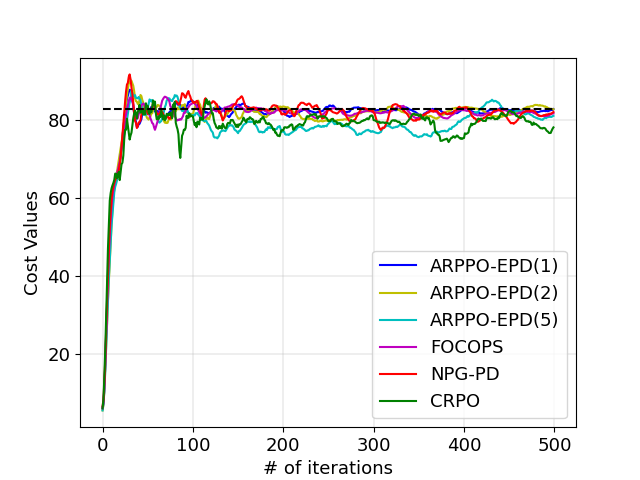}
\end{subfigure}
\caption{Last-iterate performance for sample-based ARPPO-EPD with different inner loops ($t_k=1, 2, 5$), FOCOPS, NPG-PD, and CRPO on Hopper-v3 with a constraint on the moving speed 82.748.}
\label{fig:hopper}
\end{figure}

\section{Experimental results for multi-objective MDP with scalarization}
\label{sec:app_exp_momdp}
In this section, we numerically study the performance of the ARNPG-guided algorithms under the smooth concave scalarization and max-min trade-off, respectively. 
We first consider the tabular setting for exact-gradient and sample-based scenarios with softmax policy parameterization, and then study the Acrobot-v1 scenario with neural softmax policy parameterization.

\subsection{Tabular multi-objective MDP with exact gradients} \label{sec:app_momdp_oracle}
For the tabular setting, we follow the same MDP construction with $m=2$ objectives as in Section \ref{app:tabular_cmdp}. 
The goal of considering exact gradients is to empirically study the theoretically established $O(1/K)$ convergences of the proposed ARNPG-IMD and ARNPG-OMDA algorithms suggested by Theorems \ref{thm:ARNPG-IMD} and \ref{thm:ARNPG-OMDA}. We will show that the proposed algorithms maintain $O(1/K)$ convergence in terms of the macro steps $K$, for different numbers of micro steps $t_k$.

\subsubsection{Smooth concave scalarization} \label{sec:exp_smooth}

An example of interest is the sum-logarithmic function, which for two objectives is
\begin{align}
\label{eqn:sum_log}
    F(V_{r_1}^{\pi}(\rho), V_{r_2}^{\pi}(\rho)) = \log(V_{r_1}^{\pi}(\rho) + \delta) + \log(V_{r_2}^{\pi}(\rho) + \delta),
\end{align}
where $\delta > 0$ is a small constant. This can be viewed as an approximate formulation of the proportional fairness criteria. If $\Vc$ is convex and with $v^*$ denoting the optimal value vector achieving the largest $F(v)$, it can be verified by the first-order optimality condition that $\frac{v_1 - v^*_1}{v^*_1 + \delta} + \frac{v_2 - v^*_2}{v^*_2 + \delta} \leq 0$, $\forall v_{1:2} \in \Vc$.




As mentioned in Section \ref{sec:ARNPG}, when $t_k = 1$, 
the update in \eqref{eqn:inner-NPG} reduces to an NPG update on the unregularized value function $V_{\tilde{r}_k}^{\pi_\theta}(\rho)$. In other words, when $t_k = 1$, $\alpha$ has no impact on the ARNPG-IMD algorithm. 

Theorem \ref{thm:ARNPG-IMD} suggests a lower bound on the number of micro steps $t_k$. As with many existing algorithms in the optimization literature, the theoretically chosen hyperparameters are usually too conservative, and the convergence rate is maintained over a wider range of hyperparameters. We set up the experiments as follows. 
We first fix $t_k = 1$ and conduct experiments with learning rates $\eta \in \{0.5, 1.0, \dots, 9.5, 10\}$. The hyperparameter achieving the smallest average optimality gap is $\eta = 4.5$. We then fix the learning rate $\eta = 4.5$, choose the regularization parameter $\alpha = 0.01$, and show the convergence of ARNPG-IMD with $t_k \in \{1, 2, 5, 10\}$ in Figure \ref{fig:simple_smooth}. Though the learning rate $\eta=4.5$ and the regularization parameter $\alpha$ are not chosen to favor $t_k > 1$, it can still be observed that larger $t_k$ leads to faster convergence in terms of the number of macro steps ($K$). 

\begin{figure}[ht]
\centering
\begin{subfigure}{0.45\textwidth}
\centering
\includegraphics[width=\textwidth]{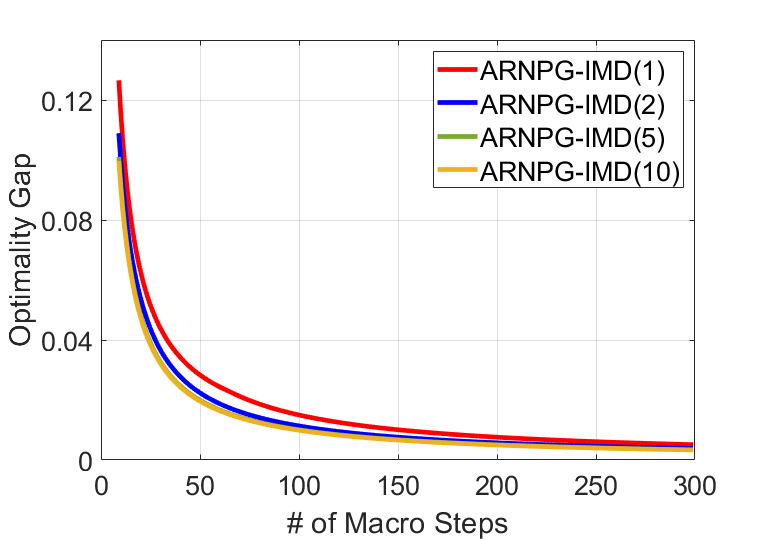}
\caption{}
\end{subfigure}
\begin{subfigure}{0.45\textwidth}
\centering
\includegraphics[width=\textwidth]{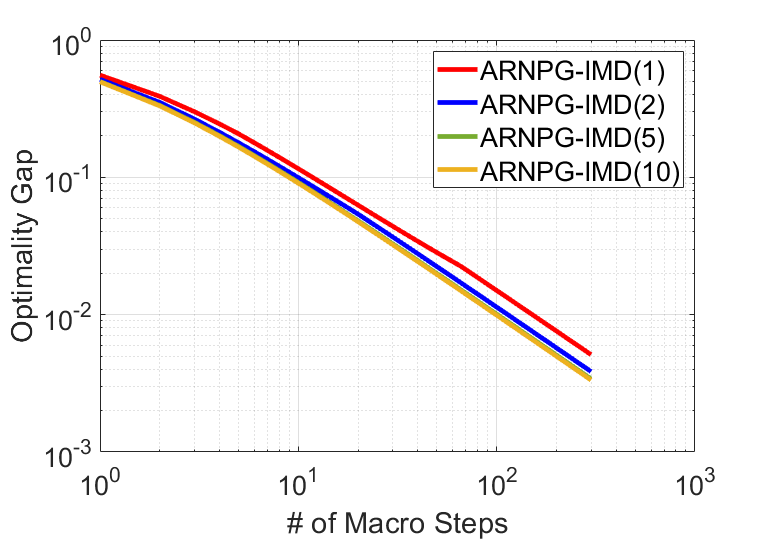}
\caption{}
\end{subfigure}
\caption{The average optimality gap versus the number of macro steps, for ARNPG-IMD with different number of inner loop iterations ($t_k = 1, 2, 5, 10$) on a randomly generated sum-logarithmic two-objective MDP.}
\label{fig:simple_smooth}
\end{figure}

Figure \ref{fig:simple_smooth}(a) illustrates that all ARNPG-IMD algorithms with different $t_k$s enjoy fast convergence rates and small optimality gaps. Additionally, the log-log plot in Figure \ref{fig:simple_smooth}(b) indicates a slope of approximately -1, which confirms the $O(1/K)$ convergence of ARNPG-IMD. 


\subsubsection{Max-min trade-off}
\label{sec:exp_non_smooth}
Another multi-objective MDP of interest is max-min fairness, corresponding to a robust scalarization function, which for the case of two objectives is
\begin{align}
\label{eqn:minimax}
F(V_{1}^{\pi}(\rho), V_{2}^{\pi}(\rho)) = \min_{i} V_i^{\pi}(\rho) = \min_{\lambda \in \Delta([2])} \langle V_{1:2}^{\pi}(\rho), \lambda\rangle.
\end{align}
The function $F$ is concave but not always differentiable. We can employ a subgradient ascent-based NPG algorithm, which calculates the subgradient of $F$ and performs one-step of NPG for the induced reward function. We call such an algorithm multi-objective NPG (MO-NPG). 

\begin{figure}[ht]
\centering
\begin{subfigure}{0.45\textwidth}
\centering
\includegraphics[width=\textwidth]{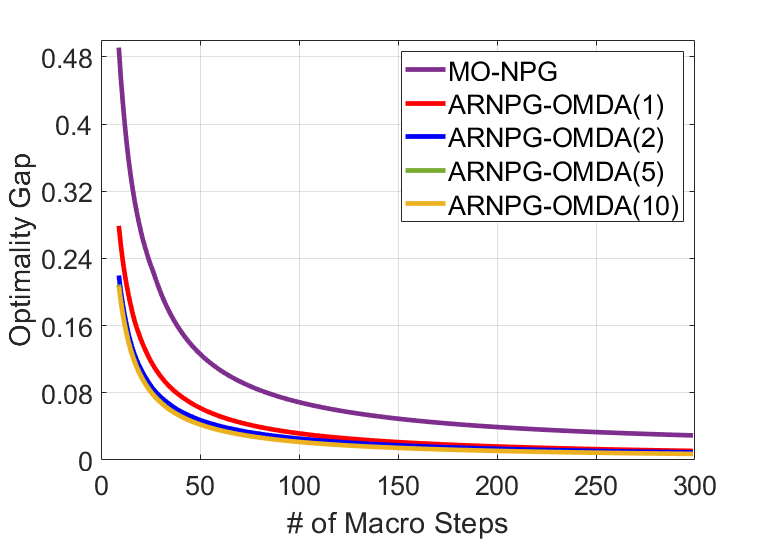}
\caption{}
\end{subfigure}
\begin{subfigure}{0.45\textwidth}
\centering
\includegraphics[width=\textwidth]{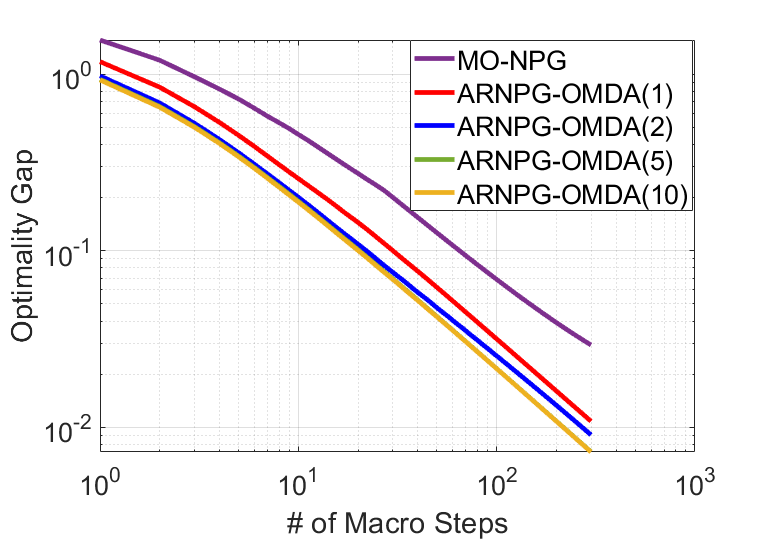}
\caption{}
\end{subfigure}
\caption{The average optimality gap with respect to the number of macro steps, for MO-NPG and ARNPG-OMDA with different number of inner loop iterations ($t_k = 1, 2, 5, 10$) on a randomly generated max-min two-objective MDP (\ref{eqn:minimax}).}
\label{fig:minimax}
\end{figure}

We conduct the experiments under learning rates $\{0.8, 0.81, \dots, 1.18, 1.19\}$ for MO-NPG and choose the best hyperparameter, 0.93. Similarly, we do a grid-search for ARNPG-OMDA ($t_k = 1$) over $\alpha \in \{1, 2, 5\}, \eta \in \{0.06, 0.08, 1.0\}, \eta' \in \{0.5, 1.0, 1.5, 2.0\}$ and select the best hyperparameters, $\alpha = 1, \eta = 0.08$, and $\eta' = 2$. Then, we fix $\alpha, \eta, \eta'$ to explore the impact of $t_k > 1$.
Figure \ref{fig:minimax} shows that the ARNPG-OMDA algorithms converge faster than MO-NPG due to the better underlying optimization algorithm OMDA compared to the subgradient ascent. Moreover, larger $t_k$ gives faster convergence in terms of macro steps $K$, even though the parameters are 
not specifically chosen to favor $t_k > 1$.

Recall that under softmax policy with exact gradients, Theorem \ref{thm:ARNPG-OMDA} guarantees $O(1/K)$ convergence of the average optimality gap. Due to the underlining subgradient ascent of MO-NPG, the convergence rate can only be guaranteed by $O(1/\sqrt{K})$. 
Figure \ref{fig:minimax}(a) shows that the optimality gap of the ARNPG-OMDA algorithms converges faster than that of the MO-NPG algorithm. The
corresponding log-log plots in Figure \ref{fig:minimax}(b) shows that the
slopes are around $-1$ for the ARNPG-OMDA algorithms, which demonstrates that 
the optimality gap of ARNPG-OMDA converges at an $O(1/K)$ rate. 


\subsection{Sample-based tabular multi-objective MDP}
We next consider the same tabular MDP with $m=2$ objectives, but with
the gradients estimated by samples from a generative model that can generate independent trajectories starting from any state and action pair. 

\subsubsection{Smooth concave scalarization}
We conduct the experiments with learning rates $\{0.5, 1.0, \dots, 4.0, 4.5\}$ for sample-based ARNPG-IMD with $t_k = 1$ and choose its best hyperparameter 1.5. To discover the impact of the number of micro steps $t_k$, we fix the learning rate $\eta = 1.5$ and the regularization parameter $\alpha = 0.005$. 
Figure \ref{fig:simple_momdp_A}(a) demonstrates that $t_k > 1$ is helpful to achieve a faster convergence (i.e., during the first 50 iterations, larger $t_k$ leads to a higher scalarized objective), though after 250 iterations they all converge to the same optimal value. 

\begin{figure}[ht]
\centering
\begin{subfigure}{0.45\textwidth}
\centering
\includegraphics[width=\textwidth]{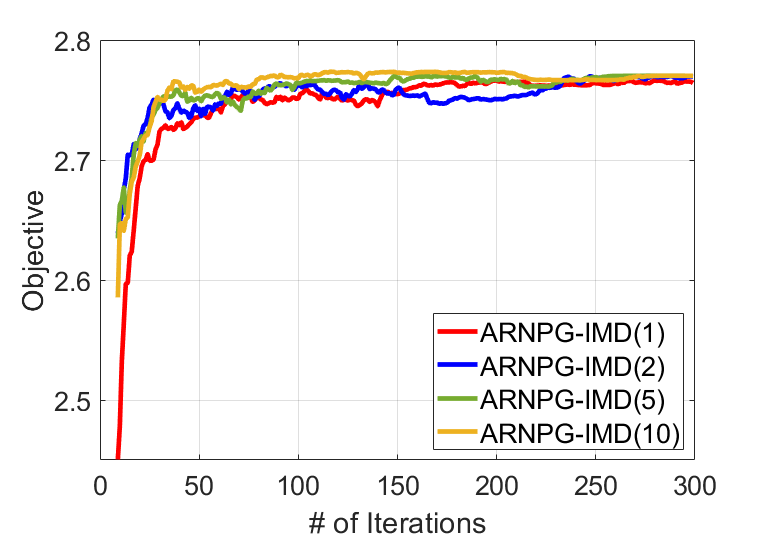}
\caption{}
\end{subfigure}
\begin{subfigure}{0.45\textwidth}
\centering
\includegraphics[width=\textwidth]{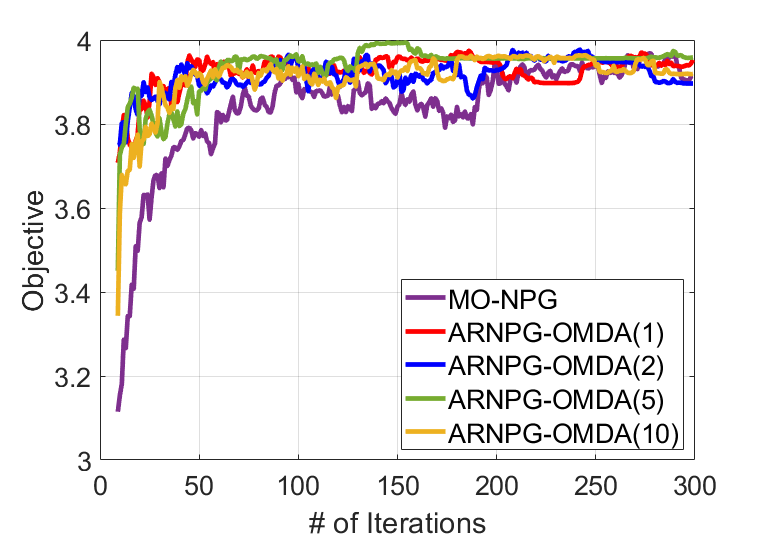}
\caption{}
\end{subfigure}
\caption{The last-iterate objective value versus the total number of iterations, for sample-based ARNPG-IMD, ARNPG-OMDA, MO-NPG on a randomly generated two-objective MOMDP with (a) sum-logarithmic (\ref{eqn:sum_log}) and (b) max-min trade-off (\ref{eqn:minimax}).}
\label{fig:simple_momdp_A}
\end{figure}

\subsubsection{Max-min trade-off}
We conduct the experiments with learning rates $\{0.1, 0.2, \dots, 1.0\}$
for sample-based MO-NPG and choose the best hyperparameter 0.5. 
We also conduct a grid-search over $\alpha \in \{0.05, 0.1, 0.2\}$, $\eta \in \{0.1, 0.2, 0.5\}$, and $\eta' \in \{0.1, 0.2, 0.5\}$ for sample-based ARNPG-OMDA with $t_k=1$ and select the best hyperparameter $\alpha = 0.1$, $\eta = 0.5$, and $\eta' = 0.2$. 
Fixing such $\alpha, \eta, \eta'$, we explore the impact of $t_k > 1$. 

Figure \ref{fig:simple_momdp_A}(b) shows that compared to the MO-NPG algorithm, ARNPG-OMDA algorithms converge faster (i.e., achieve larger scalarized objectives during the first 200 iterations), and that all algorithms approximately find the optimal value after 250 iterations. 

\subsection{Acrobot-v1}
We examine two reward value functions for MOMDP scenarios where the agent is rewarded 1 when it swings the end of the lower link to the given height ranges, i.e., greater than 0.5 and 0-0.1, respectively. The goal is to maximize the objective according to the given criteria. (In this section, we focus on sum-logarithmic scalarization.) To show the impact of $t_k$ in the ARNPG framework, we fix $\alpha = 1$ and choose InnerLoop iterations $t_k = 1, 2, 5, 10$ respectively.

\begin{figure}[ht]
\centering
\begin{subfigure}{0.3\textwidth}
\centering
\includegraphics[width=\textwidth]{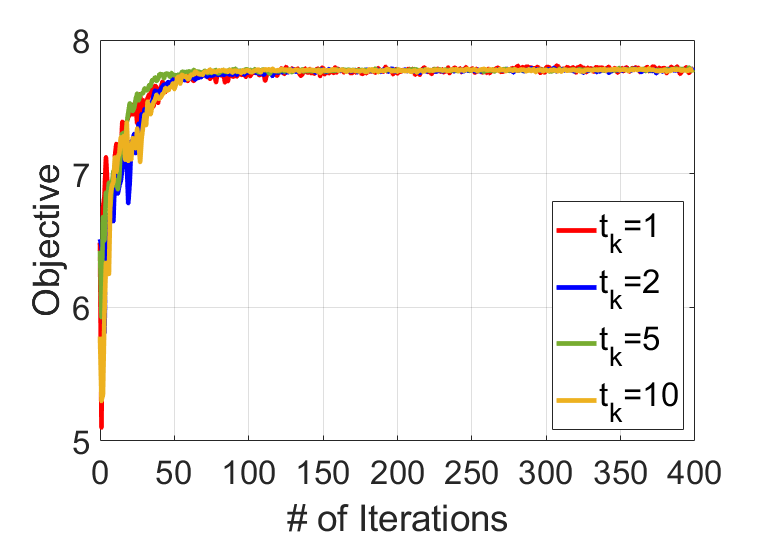}
\end{subfigure}
\begin{subfigure}{0.3\textwidth}
\centering
\includegraphics[width=\textwidth]{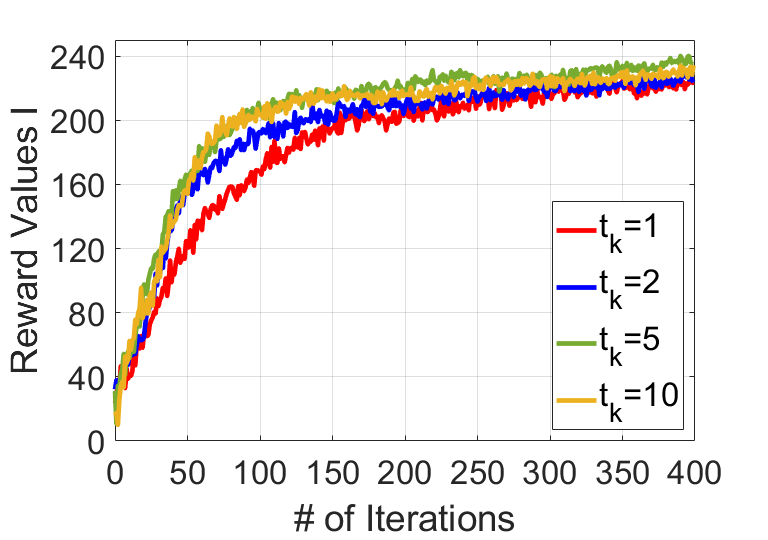}
\end{subfigure}
\begin{subfigure}{0.3\textwidth}
\centering
\includegraphics[width=\textwidth]{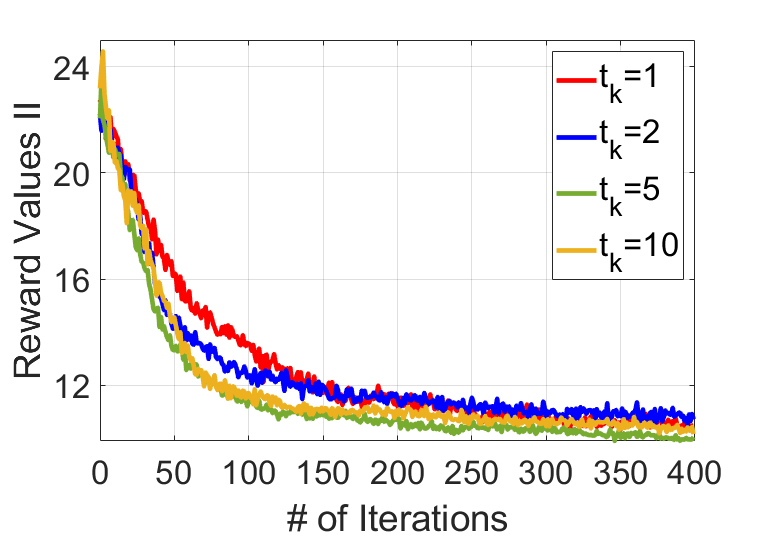}
\end{subfigure}
\caption{Last-iterate performance for sample-based ARNPG-IMD with different inner loops ($t_k = 1, 2, 5, 10$) averaged over 10 random seeds.}
\label{fig:acrobot_smooth}
\end{figure}

It can be seen from Figure \ref{fig:acrobot_smooth} that the scalarized objectives are indistinguishable because the gradient of logarithmic functions is small when the input values become large. With respect to the two reward objectives (reward values I and reward values II), we can observe some trade-offs in that $t_k = 5$ converges faster than other $t_k$s.

\section{Supporting lemmas} \label{appendix:support}
Before delving into detailed proofs for the proposition and theorems, we introduce some supporting lemmas.

Recall that the Bregman divergence generated by a convex differentiable function $h(\cdot)$ is
\begin{align*}
    B_{h}(x, y) := h(x) - h(y) - \langle \nabla h(y), x-y\rangle.
\end{align*}
The fundamental inequality \eqref{eqn:fundamental-MD} associated with mirror ascent is formally presented in the following lemma.
\begin{lemma}
\label{lem:pushback}
Let $B_h: \Xc \times \Xc \to \mathbb{R}$ be a Bregman divergence function, $\Xc \subset \Rb^n$ be a compact convex set, and $g \in \Rb^n$. Suppose $x'=\argmax_{y \in \Xc}\{ \langle g, y \rangle - \alpha B_h(y|| x)\}$ for a fixed $x \in \Xc$ and $\alpha>0$. Then for any $y \in \Xc$,
\begin{align*}
    \langle g, x' \rangle-\alpha B_h(x'|| x) \geq \langle g, y \rangle - \alpha B_h(y || x) + \alpha B_h(y || x').
\end{align*}
\end{lemma}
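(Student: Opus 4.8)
The plan is to derive the claim from the first-order optimality condition of the maximization defining $x'$, combined with the three-point identity for Bregman divergences. First I would note that the objective $\phi(y) := \langle g, y\rangle - \alpha B_h(y||x)$ is concave in $y$: the term $\langle g, y\rangle$ is affine, and $B_h(\cdot||x)$ is convex because $h$ is convex, so $-\alpha B_h(\cdot||x)$ is concave. Since $\Xc$ is compact and convex and $x'$ maximizes the differentiable concave function $\phi$ over $\Xc$, the necessary (and here sufficient) variational inequality holds:
\[
\langle \nabla_y \phi(x'), y - x'\rangle \le 0 \quad \forall y \in \Xc.
\]
Using $\nabla_y B_h(y||x) = \nabla h(y) - \nabla h(x)$, this becomes
\[
\langle g - \alpha(\nabla h(x') - \nabla h(x)), y - x'\rangle \le 0, \quad\text{i.e.}\quad \langle g, y - x'\rangle \le \alpha\langle \nabla h(x') - \nabla h(x), y - x'\rangle.
\]

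Next I would record the three-point identity, obtained by directly expanding the definitions of the three divergences and cancelling:
\[
B_h(y||x) = B_h(y||x') + B_h(x'||x) + \langle \nabla h(x') - \nabla h(x), y - x'\rangle.
\]
Solving for the inner product gives $\langle \nabla h(x') - \nabla h(x), y - x'\rangle = B_h(y||x) - B_h(y||x') - B_h(x'||x)$, which converts the optimality inequality into a statement purely about divergences.

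Substituting and rearranging then yields
\[
\langle g, x'\rangle - \alpha B_h(x'||x) \ge \langle g, y\rangle - \alpha B_h(y||x) + \alpha B_h(y||x'),
\]
which is exactly the asserted inequality. The main (and only mild) obstacle is justifying the variational inequality at a possibly boundary optimum: one must invoke that $\phi$ is concave and $h$ differentiable, so that the condition $\langle \nabla\phi(x'), y-x'\rangle \le 0$ is valid along every feasible direction $y - x'$ in the convex set $\Xc$. Once this is in place the remaining work is the routine algebraic three-point expansion, so I would keep the emphasis on stating the optimality condition cleanly and verifying the identity rather than on any delicate estimate.
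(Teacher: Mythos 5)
Your proof is correct and follows essentially the same route as the paper's: the first-order optimality (variational inequality) condition for the concave objective at $x'$, combined with the three-point identity for Bregman divergences, with the gradient $\nabla_y B_h(y\|x)=\nabla h(y)-\nabla h(x)$. Your signs in the intermediate steps are in fact stated more carefully than in the paper (which contains two compensating sign slips), but the argument and conclusion are identical.
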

Inequalities of the same form have appeared in many previous works, e.g., Lemma 3.4 in \cite{lan2020first} and a case of $\Xc$ being a probability simplex (Lemma 2.1 in \cite{wei2020online}). For completeness, we provide a proof of Lemma \ref{lem:pushback}.
\begin{proof}[Proof of Lemma \ref{lem:pushback}]
Since $h$ is proper and convex, $x':=\argmax_{y \in \Xc}\{ \langle g, y \rangle - \alpha B_h(y|| x)\}$ exists and satisfies the first order condition
\begin{align*}
    \langle g - \alpha \nabla h(x') + \alpha \nabla h(x), x' - y \rangle = \langle g - \alpha \nabla_{x'} B_h(x' || x), x' - y\rangle \geq 0, \quad \forall y \in \Xc,
\end{align*}
which implies $\langle g, x' - y \rangle \geq \alpha \langle \nabla h(x) - \nabla h(x'), x' - y \rangle$. It can be verified that
\begin{align*}
    \langle \nabla h(x) - \nabla h(x'), x' - y \rangle = B_h(x' || x) - B_h(y || x) + B_h(y || x').
\end{align*}
We can conclude the proof by substituting the equation into the previous inequality.
\end{proof}

The following lemma draws a connection between the $\ell_1$ difference of state-action visitation distributions and averaged KL-divergence.
\begin{lemma} 
\label{lem:distance}
    Let $d_\rho^{\pi'}, d^{\pi}_\rho$ be two discounted state-action visitation distributions corresponding to policies $\pi'$ and $\pi$. Then
    \begin{align*}
    \|d^{\pi'}_\rho - d^{\pi}_\rho\|_1 \leq \frac{\gamma \sqrt{2}}{1 - \gamma} \sqrt{\min \left(D_{d_\rho^{\pi'}}(\pi' || \pi), D_{d_\rho^{\pi'}}(\pi || \pi'), D_{d_\rho^{\pi}}(\pi' || \pi), D_{d_\rho^{\pi}}(\pi || \pi')\right) }.
\end{align*}
\end{lemma}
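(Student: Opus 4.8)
The plan is to reduce the $\ell_1$ distance between the two state-action visitations to a visitation-weighted average of per-state policy discrepancies $\|\pi'(\cdot|s)-\pi(\cdot|s)\|_1$, then convert that average into a KL quantity through Pinsker's inequality and Jensen's inequality; the minimum over the four variants finally comes out of a symmetry argument.

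For the reduction I would start from the Bellman flow (balance) equation for the discounted occupancy. Writing the state-action visitation as $d^\pi_\rho(s,a)=d^\pi_\rho(s)\,\pi(a|s)$ and the state visitation in resolvent form $d^\pi_\rho=(1-\gamma)\rho\,(I-\gamma P_\pi)^{-1}$, where $P_\pi(s'|s):=\sum_a\pi(a|s)P(s'|s,a)$ is the induced state-transition kernel, I would split
\[
d^{\pi'}_\rho(s,a)-d^\pi_\rho(s,a)=d^{\pi'}_\rho(s)\bigl(\pi'(a|s)-\pi(a|s)\bigr)+\bigl(d^{\pi'}_\rho(s)-d^\pi_\rho(s)\bigr)\pi(a|s).
\]
The first (degree-zero) term contributes $\sum_s d^{\pi'}_\rho(s)\|\pi'(\cdot|s)-\pi(\cdot|s)\|_1$ directly. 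For the second term I would bound the state-visitation difference using the resolvent identity $(I-\gamma P_{\pi'})^{-1}-(I-\gamma P_\pi)^{-1}=\gamma(I-\gamma P_{\pi'})^{-1}(P_{\pi'}-P_\pi)(I-\gamma P_\pi)^{-1}$, together with the two facts that $\|x(I-\gamma P_\pi)^{-1}\|_1\le\frac{1}{1-\gamma}\|x\|_1$ (since $(I-\gamma P_\pi)^{-1}=\sum_{k\ge0}\gamma^k P_\pi^k$ and each $P_\pi^k$ is row-stochastic) and $\sum_{s'}|P_{\pi'}(s'|s)-P_\pi(s'|s)|\le\|\pi'(\cdot|s)-\pi(\cdot|s)\|_1$. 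After collecting the geometric factors these combine into a bound of the form $\frac{1}{1-\gamma}\sum_s d^{\pi'}_\rho(s)\|\pi'(\cdot|s)-\pi(\cdot|s)\|_1$, i.e. the discount prefactor times a visitation-weighted policy $\ell_1$-discrepancy.

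The second stage is routine. Pinsker's inequality gives $\|\pi'(\cdot|s)-\pi(\cdot|s)\|_1\le\sqrt{2D(\pi'(\cdot|s)\|\pi(\cdot|s))}$, and since the $\ell_1$ distance is symmetric the same holds with the two arguments of the KL interchanged. Substituting and applying Jensen's inequality to the concave map $\sqrt{\cdot}$ pulls the visitation average inside the root, $\sum_s d^{\pi'}_\rho(s)\sqrt{2D(\pi'(\cdot|s)\|\pi(\cdot|s))}\le\sqrt{2\,D_{d^{\pi'}_\rho}(\pi'\|\pi)}$, giving the announced form with prefactor $\frac{\sqrt2}{1-\gamma}$. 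The minimum over the four KL terms then follows purely by symmetry: the left-hand side is invariant under swapping $\pi\leftrightarrow\pi'$, so rerunning the reduction with the roles reversed replaces the weighting $d^{\pi'}_\rho$ by $d^\pi_\rho$, while the two-sided Pinsker bound supplies the two KL orderings; each of the four resulting right-hand sides is a valid upper bound, so their minimum is too.

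I expect the reduction (the first stage) to be the crux. The delicate point is that replacing $\pi$ by $\pi'$ changes the action law \emph{and}, through it, the transition kernel $P_\pi$ simultaneously, so the per-state discrepancy cannot simply be read off pointwise but must be propagated through the discounted occupancy; this is exactly where the resolvent identity and the geometric factor $\sum_{t\ge0}\gamma^t=\frac{1}{1-\gamma}$ enter, and where one has to be careful that all operator norms are taken in the $\ell_1$ sense compatible with the visitation measures. (It is worth noting that the degree-zero term above already contributes at $\gamma=0$, so the clean reduction yields the prefactor $\frac{1}{1-\gamma}$ rather than $\frac{\gamma}{1-\gamma}$; the two coincide only in how they control the propagated part.)
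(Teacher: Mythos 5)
Your argument is correct and reaches the inequality by a genuinely different route from the paper. The paper argues probabilistically: it introduces hybrid policies $\tilde{\pi}_h$ that follow $\pi$ for the first $h$ steps and $\pi'$ thereafter, telescopes $d^{\pi'}_\rho - d^{\pi}_\rho = \sum_{h}\bigl(d^{\tilde{\pi}_h}_\rho - d^{\tilde{\pi}_{h+1}}_\rho\bigr)$, uses the data-processing inequality to push the time-$t$ discrepancy of each consecutive pair back to the switching time $h$, where it equals $\Eb_{s \sim d^{\pi}_{\rho,h}}\|\pi(\cdot|s)-\pi'(\cdot|s)\|_1$, and then applies Cauchy--Schwarz across the discounted time index followed by Pinsker. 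You instead split $d^{\pi'}_\rho(s,a)-d^{\pi}_\rho(s,a)$ into an action-law term and a state-visitation term, control the latter through the resolvent identity for $(I-\gamma P_\pi)^{-1}$ together with $\ell_1$-contractivity of row-stochastic matrices, and finish with Pinsker plus Jensen; the four-way minimum follows by symmetry in both treatments. Your route is more algebraic and cleanly separates the two sources of discrepancy (the change in the action taken now versus the propagated change in the state distribution), whereas the paper folds everything into a single data-processing step over hybrid trajectories. Both are sound strategies, and each weighted-KL variant you obtain matches one obtainable from the paper's argument after swapping roles.

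The one substantive divergence is the prefactor, and your suspicion in the closing parenthesis is justified: your derivation yields $\frac{\sqrt{2}}{1-\gamma}$, the lemma claims $\frac{\gamma\sqrt{2}}{1-\gamma}$, and the claimed constant cannot be right for \emph{state-action} visitations. At $\gamma=0$ one has $d^{\pi}_\rho(s,a)=\rho(s)\pi(a|s)$, so the left-hand side equals $\sum_s \rho(s)\|\pi'(\cdot|s)-\pi(\cdot|s)\|_1$, which is generically positive, while the right-hand side vanishes. The extra $\gamma$ in the paper enters when $\sum_{t=0}^{\infty}$ is replaced by $\sum_{t=h+1}^{\infty}$: the hybrid policies $\tilde{\pi}_h$ and $\tilde{\pi}_{h+1}$ already differ at time $t=h$ (identical state marginal, different action law), so the $t=h$ term must be kept, turning $\sum_{t\ge h+1}\gamma^t$ into $\sum_{t\ge h}\gamma^t$ and removing the leading $\gamma$. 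This is precisely your ``degree-zero'' term. The discrepancy is benign downstream --- it amounts to replacing $\gamma^2$ by $1$ in the subsequent bound on $\|V^{\pi_k}_{1:m}(\rho)-V^{\pi_{k+1}}_{1:m}(\rho)\|_\infty^2$ and in the theorem constants, and the stated conditions on $\alpha$ still suffice --- but your version of the constant is the defensible one, so this should be read as a correction to the lemma rather than a gap in your proof.
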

\begin{proof}
Let $d^{\pi}_{\rho, h}(\cdot, \cdot)$ be the state-action visitation distribution at step $h$, which implies $\frac{1}{1 - \gamma} d^{\pi}_{\rho}(\cdot,\cdot) = \sum_{h \geq 0} \gamma^h d^{\pi}_{\rho, h}(\cdot,\cdot)$. Denote $\tilde{\pi}_h$ as the policy that implements policy $\pi$ for the first $h$ steps and then commits to policy $\pi'$ thereafter. Denote its corresponding discounted state-action visitation distribution by $d^{\tilde{\pi}_h}_\rho$. It follows that
\begin{align*}
    \frac{1}{1 - \gamma}\|d^{\pi'}_\rho - d^{\pi}_\rho\|_1 & \stackrel{(a)}{=} \frac{1}{1 - \gamma} \left\| \sum_{h=0}^\infty (d^{\tilde{\pi}_h}_\rho - d^{\tilde{\pi}_{h+1}}_\rho) \right\|_1 \stackrel{(b)}{\leq} \frac{1}{1 - \gamma} \sum_{h=0}^\infty \|  d^{\tilde{\pi}_h}_\rho - d^{\tilde{\pi}_{h+1}}_\rho \|_1 \\
    & = \sum_{h=0}^\infty \left\| \sum_{t = 0}^{\infty} \gamma^t (d^{\tilde{\pi}_h}_{\rho,t} - d^{\tilde{\pi}_{h+1}}_{\rho,t}) \right\|_1 = \sum_{h=0}^\infty \left\| \sum_{t = h+1}^{\infty} \gamma^t (d^{\tilde{\pi}_h}_{\rho,t} - d^{\tilde{\pi}_{h+1}}_{\rho,t}) \right\|_1 \\
    & \stackrel{(c)}{\leq} \sum_{h=0}^\infty \sum_{t \geq h+1}^{\infty} \gamma^t \| d^{\tilde{\pi}_h}_{\rho,t} - d^{\tilde{\pi}_{h+1}}_{\rho, t} \|_1 \stackrel{(d)}{\leq} \sum_{h=0}^\infty \sum_{t \geq h+1}^{\infty} \gamma^t \| d^{\tilde{\pi}_h}_{\rho,h} - d^{\tilde{\pi}_{h+1}}_{\rho, h} \|_1 \\
    & = \frac{\gamma}{1 - \gamma} \sum_{h=0}^\infty \gamma^h \Eb_{s \sim d^{\pi}_{\rho, h}}\| \pi(\cdot|s) - \pi'(\cdot|s) \|_1 \\
    & \stackrel{(e)}{\leq} \frac{\gamma}{1 - \gamma} \sqrt{ \left(  \sum_{h \geq 0} \gamma^h \right) \left(\sum_{h = 0}^\infty \gamma^h \Eb_{s \sim d^{\pi}_{\rho, h}}\| \pi(\cdot|s) - \pi'(\cdot|s) \|_1^2 \right)} \\
    & = \frac{\gamma}{(1 - \gamma)^{2}} \sqrt{\Eb_{s \sim d^{\pi}_\rho}\| \pi(\cdot|s) - \pi'(\cdot|s) \|_1^2} \ .
\end{align*}
Above, $(a)$ holds by telescoping, $(b)$ and $(c)$ hold due to the triangle inequality of $\ell_1$-norm and the definition of $\tilde{\pi}_h$, $(d)$ hold owing to the data processing inequality for $f$-divergence $\|\cdot\|_1$, 
and $(e)$ holds due to the Cauchy-Schwarz inequality. Due to the symmetry between $\pi$ and $\pi'$, it can be similarly derived
\begin{align*}
    \|d^{\pi'}_\rho - d^{\pi}_\rho\|_1 \leq \frac{\gamma}{1 - \gamma} \sqrt{\Eb_{s \sim d^{\pi'}_\rho}\| \pi(\cdot|s) - \pi'(\cdot|s) \|_1^2} \ .
\end{align*}
We can conclude the proof by further applying Pinsker's inequality.
\end{proof}

An application of Lemma \ref{lem:distance} gives an upper bound on the difference between value function vectors as follows.
\begin{lemma}
\label{lem:kl_property}
For any $k = 0, 1, \dots, K-1$,
\begin{align*}
    \frac{1}{2} \|V^{\pi_k}_{1:m}(\rho) - V^{\pi_{k+1}}_{1:m}(\rho) \|_{\infty}^2 \leq \frac{\gamma^2}{(1 - \gamma)^4} D_{d^{\pi_{k+1}}_{\rho}}(\pi_{k+1} || \pi_k).
\end{align*}
\begin{proof}
For any $i = 1,2,\ldots, m$, we have
\begin{align*}
   \left|V^{\pi_k}_{i}(\rho) - V^{\pi_{k+1}}_{i}(\rho)\right|  & = \frac{1}{1-\gamma} \left|\sum_{(s, a) \in \Sc \times \Ac} r_i(s, a) (d_{\rho}^{\pi_k}(s, a) - d_{\rho}^{\pi_{k+1}}(s, a)) \right| \\
    &\leq \frac{1}{1 - \gamma} \|d^{\pi_k}_\rho - d^{\pi_{k+1}}_\rho\|_1 \leq \frac{\gamma \sqrt{2}}{(1 - \gamma)^2} \sqrt{D_{d^{\pi_{k+1}}_{\rho}}(\pi_{k+1} || \pi_k)},
\end{align*}
where the last inequality is due to Lemma \ref{lem:distance}.
\end{proof}
\end{lemma}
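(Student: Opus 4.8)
The plan is to reduce the value-vector difference to the $\ell_1$ distance between the two discounted state-action visitation distributions and then invoke Lemma \ref{lem:distance}, which has already done the heavy lifting. First I would use the identity $V^{\pi}_{r}(\rho) = \frac{1}{1-\gamma}\langle d^{\pi}_\rho, r\rangle$ from the preliminaries to write, for each coordinate $i \in [m]$,
\[
V^{\pi_k}_{i}(\rho) - V^{\pi_{k+1}}_{i}(\rho) = \frac{1}{1-\gamma}\langle r_i,\, d^{\pi_k}_\rho - d^{\pi_{k+1}}_\rho\rangle.
\]
Since each reward satisfies $r_i(s,a) \in [0,1]$, so that $\|r_i\|_\infty \le 1$, Hölder's inequality gives the coordinatewise bound $|V^{\pi_k}_{i}(\rho) - V^{\pi_{k+1}}_{i}(\rho)| \le \frac{1}{1-\gamma}\|d^{\pi_k}_\rho - d^{\pi_{k+1}}_\rho\|_1$.

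The second step is simply to apply Lemma \ref{lem:distance} with the identification $\pi' = \pi_{k+1}$ and $\pi = \pi_k$. Among the four KL terms inside the minimum, one is exactly $D_{d^{\pi_{k+1}}_\rho}(\pi_{k+1}\|\pi_k)$; because the minimum is no larger than any single term, I obtain
\[
\|d^{\pi_k}_\rho - d^{\pi_{k+1}}_\rho\|_1 \le \frac{\gamma\sqrt{2}}{1-\gamma}\sqrt{D_{d^{\pi_{k+1}}_\rho}(\pi_{k+1}\|\pi_k)}.
\]

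Finally I would combine the two displays. The resulting bound on $|V^{\pi_k}_{i}(\rho) - V^{\pi_{k+1}}_{i}(\rho)|$ holds uniformly in $i$, so taking the maximum over coordinates converts the left side into $\|V^{\pi_k}_{1:m}(\rho) - V^{\pi_{k+1}}_{1:m}(\rho)\|_\infty$, yielding the factor $\frac{\gamma\sqrt{2}}{(1-\gamma)^2}$ in front of the square root. Squaring both sides and dividing by $2$ produces the claimed inequality, since the $2$ arising from $(\sqrt{2})^2$ cancels against the $\frac{1}{2}$ on the left.

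There is no genuine obstacle here beyond bookkeeping: the real work—telescoping over the switch policies $\tilde\pi_h$, the data-processing step, and Pinsker's inequality—is already discharged inside Lemma \ref{lem:distance}. The only point that needs a little care is selecting the correct one of the four symmetric KL terms, so that the bound is expressed in terms of the anchor-relative divergence $D_{d^{\pi_{k+1}}_\rho}(\pi_{k+1}\|\pi_k)$ that the downstream convergence analysis (via Proposition \ref{pro:fundamental-inequality}) requires.
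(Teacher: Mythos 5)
Your proposal is correct and follows essentially the same route as the paper's proof: express each coordinate difference as $\frac{1}{1-\gamma}\langle r_i, d^{\pi_k}_\rho - d^{\pi_{k+1}}_\rho\rangle$, bound it by $\frac{1}{1-\gamma}\|d^{\pi_k}_\rho - d^{\pi_{k+1}}_\rho\|_1$ via $\|r_i\|_\infty \le 1$, and invoke Lemma~\ref{lem:distance} with the term $D_{d^{\pi_{k+1}}_\rho}(\pi_{k+1}\|\pi_k)$ from the minimum before squaring. The bookkeeping with the factor $\sqrt{2}$ also checks out.
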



\section{Proof in Section \ref{sec:ARNPG}}
\label{sec:app_proof_arnpg}

This section presents the formal proof of Proposition \ref{pro:fundamental-inequality}. We begin by presenting some properties of InnerLoop. We shall omit $\theta$ in $\pi_{\theta}$, since the policies are under softmax parameterization. 

\subsection{Linear convergence of InnerLoop}
InnerLoop($\tilde{r}_k, \pi_k, \alpha, \eta, t_k$) approximately solves the following KL-regularized MDP via natural policy gradient. Note that 
\begin{align}
    \tilde{V}_{k, \alpha}^{\pi}(s) =  \Eb\left[\sum_{t \geq 0} \gamma^t \left( \tilde{r}_k(s_t, a_t)  - \alpha \log \pi_k(a_t|s_t) + \alpha \log \pi(a_t|s_t) \right) | s_0 = s, \pi \right],
\end{align}
which can be viewed as an entropy regularized value with reward function $\tilde{r}_k(s,a) - \alpha \log \pi_k(a|s) $. The entropy-regularized state-action value function is then defined as \cite{cen2021fast}
\begin{align}
\label{eqn:regu-Q}
    \tilde{Q}_{k, \alpha}^{\pi}(s, a) = \tilde{r}_k(s, a) - \alpha \log\pi_k(a|s) + \gamma \Eb_{s' \sim P(\cdot | s, a)} [\tilde{V}_{k, \alpha}^{\pi}(s')]. 
\end{align}
The convergence of NPG in entropy-regularized MDP has been well-studied in \cite{cen2021fast}, with the key results summarized in the following lemma. 
\begin{lemma}[Linear convergence of entropy-regularized NPG, Theorem 1 in \cite{cen2021fast}]
\label{lem:cen} 
For any learning rate $0 < \eta \le (1 - \gamma)/\alpha$ and any $k = 0, 1, \dots, K-1$, the entropy-regularized NPG updates satisfy
\begin{align*}
    \left\|\tilde{Q}_{k, \alpha}^{\pi_k^*} - \tilde{Q}_{k, \alpha}^{\pi_k^{(t+1)}}\right\|_{\infty} & \leq C_{k} \gamma(1-\eta \alpha)^{t}, \\
    \left\|\log \pi_{k}^{*}-\log \pi_k^{(t+1)}\right\|_{\infty} & \leq 2 C_{k} \alpha^{-1}(1-\eta \alpha)^{t},\\
    \left\|\tilde{V}_{k, \alpha}^{\pi_k^*} - \tilde{V}_{k, \alpha}^{\pi_k^{(t+1)}}\right\|_{\infty} & \leq 3 C_{k} (1-\eta \alpha)^{t},
\end{align*}
for all $t \geq 0$, where $C_k$ satisfies $C_{k} \geq \left\|\tilde{Q}_{k, \alpha}^{\pi_k^*} - \tilde{Q}_{k, \alpha}^{\pi_k^{(0)}}\right\|_{\infty}+2 \alpha\left(1-\frac{\eta \alpha}{1-\gamma}\right)\left\|\log \pi_{k}^{*}-\log \pi_k^{(0)}\right\|_{\infty}$.
\end{lemma}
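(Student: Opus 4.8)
The plan is to recognize that the lemma is exactly Theorem 1 of \cite{cen2021fast} instantiated in the present setting, so the task reduces to recalling that argument. The key observation is that $\tilde V^{\pi}_{k,\alpha}$ is the \emph{soft} (entropy-regularized) value function of an MDP whose per-step reward is the shifted reward $\tilde r_k(s,a)-\alpha\log\pi_k(a|s)$ appearing in \eqref{eqn:regu-Q}: regularizing toward the anchor $\pi_k$ via KL is the same as entropy regularization once $-\alpha\log\pi_k$ is absorbed into the reward. Since $\pi_k$ has full support under the softmax parameterization, this shifted reward is bounded, and the entropy-regularized NPG analysis of \cite{cen2021fast} applies verbatim with temperature $\alpha$; the three displayed bounds are then the conclusion of that theorem. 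I sketch the contraction argument that underlies it.

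First I would derive the closed-form NPG step. Under the softmax parameterization one computes the Fisher information $\Fc_\rho(\theta)$ for the log-linear policy and checks that $\Fc_\rho(\theta)^{\dagger}\nabla_\theta\tilde V^{\pi^{(t)}_k}_{k,\alpha}$ equals the regularized advantage up to a state-dependent constant that cancels under normalization, so that \eqref{eqn:inner-NPG} becomes the multiplicative update
\begin{align*}
    \pi_k^{(t+1)}(a|s)\ \propto\ \big(\pi_k^{(t)}(a|s)\big)^{1-\eta\alpha}\,\exp\!\big(\eta\,\tilde Q^{(t)}_{k,\alpha}(s,a)\big).
\end{align*}
I would then identify the regularized optimal policy $\pi_k^*$ as the unique fixed point of the soft Bellman operator, characterized by $\alpha\log\pi_k^*(a|s)=\tilde Q^*_{k,\alpha}(s,a)-\tilde V^*_{k,\alpha}(s)$, and record that the soft Bellman operator is a $\gamma$-contraction in $\|\cdot\|_\infty$.

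The heart of the proof is a coupled contraction. I would track the two error quantities $\|\tilde Q^*_{k,\alpha}-\tilde Q^{(t)}_{k,\alpha}\|_\infty$ and $\|\alpha\log\pi_k^*-\alpha\log\pi_k^{(t)}\|_\infty$ simultaneously. Introducing the auxiliary value sequence defined through the log-normalizer of the multiplicative update, the closed form of the previous step yields a pair of coupled inequalities bounding the step-$(t+1)$ errors by the step-$t$ errors, in which the entropy term supplies the strong convexity that upgrades the geometric rate from $\gamma$ to $(1-\eta\alpha)$; here the hypothesis $\eta\le(1-\gamma)/\alpha$ guarantees $0<\eta\alpha\le 1-\gamma<1$ and controls the interaction with $\gamma$. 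Solving this recursion by induction on $t$, with $C_k$ chosen to dominate the $t=0$ errors (precisely the stated lower bound on $C_k$), gives the $Q$-error and log-policy bounds, and the $\tilde V$-error bound then follows from them through the soft Bellman consistency relation between $\tilde V^{\pi}_{k,\alpha}$ and $\tilde Q^{\pi}_{k,\alpha}$ together with the triangle inequality.

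The main obstacle is this coupling step: showing that the two \emph{a priori} separate error sequences contract \emph{jointly} at rate $(1-\eta\alpha)$ rather than merely at rate $\gamma$. This requires the careful construction of the log-normalizer auxiliary value and a monotonicity/performance-difference argument for the soft value, which is exactly the technical contribution of \cite{cen2021fast}; I would follow their bookkeeping rather than reconstruct it. The only setting-specific check, which is routine, is that absorbing $-\alpha\log\pi_k$ into the reward does not affect any of their boundedness hypotheses, since $\pi_k$ is a softmax policy with strictly positive entries.
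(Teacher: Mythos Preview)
Your proposal is correct and matches the paper's approach: the paper does not reprove this lemma at all but simply cites Theorem~1 of \cite{cen2021fast}, remarking only that the proof there goes through without the assumption $0\le r(s,a)\le 1$ (which is precisely your observation that absorbing $-\alpha\log\pi_k$ into the reward is harmless since $\pi_k$ is softmax). Your sketch of the underlying contraction argument is more detailed than anything the paper supplies, but it is consistent with the cited source.
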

\textit{Remark.} There is a typographical mistake in the inequality `` $\|\tilde{V}_{k, \alpha}^{\pi_k^*} - \tilde{V}_{k, \alpha}^{\pi_k^{(t+1)}}\|_{\infty} \leq 3 \textcolor{red}{\gamma} C_{k} (1-\eta \alpha)^{t}$ '' in \cite{cen2021fast}, and it has been corrected here. It is not hard to verify that the proofs of the inequalities in Lemma \ref{lem:cen} \cite{cen2021fast} hold without the assumption that $0 \leq r(s,a) \leq 1$.

Denote $\tilde{V}^{\pi}_{k}(s) := V^{\pi}_{\tilde{r}_k}(s)$. For the regularized MDP, its optimal policy is uniformly optimal, i.e., for any state $s \in \Sc$, 
\begin{align}
\label{eqn:uniform-optimality}
    \frac{1}{1 - \gamma}\|\tilde{r}_k\|_\infty \geq \tilde{V}_{k}^{\pi^*_k}(s) \geq \tilde{V}_{k}^{\pi^*_k}(s) - \frac{\alpha}{1 - \gamma}D_{d^{\pi^*}_s}(\pi^* || \pi_k) =\tilde{V}_{k, \alpha}^{\pi^*_k}(s) \geq \tilde{V}^{\pi_k}_{k, \alpha}(s) = \tilde{V}^{\pi_k}_{k}(s).
\end{align} 
It follows that $\forall (s, a) \in \Sc \times \Ac$,
\begin{align*}
\left|\tilde{Q}_{k, \alpha}^{\pi_k^*}(s, a) - \tilde{Q}_{k, \alpha}^{\pi_k}(s, a)\right| &= \gamma \sum_{s' \in \Sc} P(s'|s,a) \left|\tilde{V}_{k, \alpha}^{\pi_k^*}(s') - \tilde{V}_{k, \alpha}^{\pi_k}(s')\right| \stackrel{(a)}{\le} \frac{\gamma \|\tilde{r}_k\|_\infty}{1 - \gamma}, 
\end{align*}
where $(a)$ holds due to the relation in (\ref{eqn:uniform-optimality}). It implies $\|\tilde{Q}_{k, \alpha}^{\pi_k^*} - \tilde{Q}_{k, \alpha}^{\pi_k}\|_{\infty} \leq \frac{\gamma \|\tilde{r}_k\|_\infty}{1 - \gamma}$.
Since $1-\frac{\eta \alpha}{1-\gamma} = 0$ when $\eta = \frac{1 - \gamma}{\alpha}$, we can 
apply results in Lemma \ref{lem:cen} with $C_k = \frac{\gamma \|\tilde{r}_k\|_{\infty}}{1 - \gamma}$, which gives
\begin{align}
\label{eqn:apply_cen}
\tilde{V}^{\pi_{k+1}}_k(\rho) + \frac{\alpha}{1 - \gamma} D_{d^{\pi_{k+1}}_\rho}(\pi_{k+1} || \pi_k) \leq - \tilde{V}^{\pi_{k}^*}_k(\rho) + \frac{\alpha}{1 - \gamma} D_{d^{\pi_{k}^*}_\rho}(\pi_{k}^* || \pi_k) + 3 C_k (1 - \eta \alpha)^{t_k}.
\end{align}

\subsection{Hidden convexity in state-action visitation distribution}
Noting that the class of softmax policies is almost complete in the sense that its closure contains all stationary policies, we will omit the parameter $\theta$ in $\pi_\theta$. The set of achievable state-action visitations is $\Dc = \{d \in \Delta(\Sc\times\Ac): \gamma \sum_{s', a'} P(s | s', a') d(s', a') +  (1 - \gamma) \rho(s) =  \sum_{a} d(s, a), ~ \forall s \in \Sc\}$, which is a convex compact set.

For any policies $\pi, \pi'$, define a pseudo KL-divergence between $d_\rho^\pi, d^{\pi'}_\rho \in \Dc_\rho$ by
\begin{align}
    \tilde{D}(d_{\rho}^{\pi} || d_{\rho}^{\pi'}) : = \sum_{(s, a) \in \Sc \times \Ac} d_{\rho}^{\pi}(s, a) \log \frac{d_{\rho}^{\pi}(s, a) / d_{\rho}^{\pi}(s)}{d_{\rho}^{\pi'}(s, a) / d_{\rho}^{\pi'}(s)}. \label{eqn:pseudo-KL}
\end{align}
It is not hard to verify that
\begin{align}
\label{eqn:bregman_equ}
    D_{d_{\rho}^{\pi}}(\pi || \pi') & = \sum_{s \in \Sc} d_{\rho}^{\pi}(s) \sum_{a \in \Ac} \pi(a|s) \log \frac{\pi(a|s)}{\pi'(a|s)} = \sum_{s \in \Sc} d_{\rho}^{\pi}(s) \sum_{a \in \Ac} \frac{d_{\rho}^{\pi}(s, a)}{d_{\rho}^{\pi}(s)} \log \frac{d_{\rho}^{\pi}(s, a)/d_{\rho}^{\pi}(s)}{d_{\rho}^{\pi'}(s, a)/d_{\rho}^{\pi'}(s)} \notag \\
    &= \sum_{(s, a) \in \Sc \times \Ac} d_{\rho}^{\pi}(s, a) \log \frac{d_{\rho}^{\pi}(s, a)/d_{\rho}^{\pi}(s)}{d_{\rho}^{\pi'}(s, a)/d_{\rho}^{\pi'}(s)} = \tilde{D}(d_{\rho}^{\pi} || d_{\rho}^{\pi'}).
\end{align}
This equation bridges the state-action visitation space and the policy space. The following lemma shows that the pseudo KL-divergence defined in (\ref{eqn:pseudo-KL}) is actually a Bregman divergence between state-action visitation distributions.
\begin{lemma}\label{lem:Bregman}
    The pseudo KL-divergence $\tilde{D}(d_{\rho}^{\pi} || d_{\rho}^{\pi'})$ defined in (\ref{eqn:pseudo-KL}) is a Bregman divergence $B_h(d_{\rho}^{\pi} || d_{\rho}^{\pi'})$ generated by the convex function
\begin{align*}
    h(d_{\rho}^{\pi}) = \sum_{(s, a) \in \Sc \times \Ac} d_{\rho}^{\pi}(s, a) \log d_{\rho}^{\pi}(s, a) - \sum_{s \in \Sc} d_{\rho}^{\pi}(s) \log d_{\rho}^{\pi}(s).
\end{align*}
\end{lemma}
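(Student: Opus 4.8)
The plan is to verify the two defining requirements of a Bregman divergence directly: that the generating function $h$ is convex and differentiable on the relative interior of $\Delta(\Sc\times\Ac)$, and that the induced $B_h(d^\pi_\rho || d^{\pi'}_\rho)$ coincides with the pseudo KL-divergence $\tilde{D}$ of \eqref{eqn:pseudo-KL}.

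First I would compute the gradient of $h$. Writing $d(s) := \sum_a d(s,a)$, a linear function of the coordinates $d(s,a)$, and differentiating $h(d) = \sum_{s,a} d(s,a)\log d(s,a) - \sum_s d(s)\log d(s)$, the ``$+1$'' terms produced by the joint and the marginal pieces cancel, leaving
\[
[\nabla h(d)]_{(s,a)} = \log d(s,a) - \log d(s) = \log\frac{d(s,a)}{d(s)} = \log \pi(a|s),
\]
where $\pi(a|s) = d(s,a)/d(s)$ is the policy induced by $d$. Substituting this into $B_h(x || y) = h(x) - h(y) - \langle \nabla h(y), x-y\rangle$ with $x = d^\pi_\rho$ and $y = d^{\pi'}_\rho$, and using the identity $h(x) = \sum_{s,a} x(s,a)\log \pi(a|s)$, I would observe that the terms $\sum_{s,a} y(s,a)\log \pi'(a|s)$ cancel, yielding $B_h(x || y) = \sum_{s,a} x(s,a)\log\frac{\pi(a|s)}{\pi'(a|s)}$, which is exactly $\tilde{D}(d^\pi_\rho || d^{\pi'}_\rho)$ by \eqref{eqn:pseudo-KL}. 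This identity is a routine cancellation.

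The main obstacle is establishing convexity of $h$: it is the difference of the (convex) joint negative entropy $\sum_{s,a} d(s,a)\log d(s,a)$ and the term $\sum_s d(s)\log d(s)$, which is itself convex in $d$ (a convex function composed with the linear map $d \mapsto d(s)$), so a naive term-by-term argument fails. The key observation is that $h(d) = \sum_{s,a} d(s,a)\log\frac{d(s,a)}{d(s)}$ is the negative conditional entropy of the joint distribution $d$, which is jointly convex. I would prove this via perspective functions: each summand $d(s,a)\log\frac{d(s,a)}{d(s)}$ is the composition of the linear map $d \mapsto (d(s,a),\, d(s))$ with the kernel $(u,v)\mapsto u\log(u/v)$, and the latter is the perspective $v\,\varphi(u/v)$ of the convex function $\varphi(u) = u\log u$, hence jointly convex on the positive orthant. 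Since convexity is preserved under composition with a linear map and under summation over $(s,a)$, it follows that $h$ is convex; differentiability holds on the relative interior where all $d(s,a) > 0$, which is where the visitation distributions of softmax policies reside. This completes the verification that $\tilde{D} = B_h$ is a genuine Bregman divergence.
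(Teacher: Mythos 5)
Your proposal is correct. The identity $B_h = \tilde{D}$ is established exactly as in the paper: both compute $[\nabla h(d)]_{(s,a)} = \log d(s,a) - \log d(s)$ and observe the resulting cancellation, so that part is essentially identical. Where you genuinely diverge is the convexity of $h$. The paper computes the Hessian of $h$ explicitly, finds it is block-diagonal with blocks $H_s = \frac{1}{d_\rho^\pi(s)}\bigl(\diag(d_\rho^\pi(s)/d_\rho^\pi(s,\cdot)) - \mathbf{1}\mathbf{1}^\top\bigr)$, and verifies $x^\top H_s x \geq 0$ via the Cauchy--Schwarz inequality. You instead recognize $h(d) = \sum_{s,a} d(s,a)\log\frac{d(s,a)}{d(s)}$ as a sum of compositions of the jointly convex perspective kernel $(u,v)\mapsto u\log(u/v)$ of $\varphi(u)=u\log u$ with the linear maps $d\mapsto(d(s,a), d(s))$. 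Both arguments are valid. Your route is more conceptual and shorter: it imports a classical fact (joint convexity of the perspective, equivalently of relative entropy) and avoids second-derivative bookkeeping, and it correctly identifies why a naive term-by-term argument fails since both summands of $h$ are individually convex. The paper's route is more self-contained and elementary, requiring nothing beyond a Cauchy--Schwarz application, at the cost of an explicit Hessian computation. Both proofs share the same implicit domain restriction to visitations with $d(s,a) > 0$, which you at least make explicit by restricting to the relative interior where softmax-induced visitations live.
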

\begin{proof}[Proof of Lemma \ref{lem:Bregman}]
It can be verified by elementary algebera that
\begin{align*}
    \tilde{D}(d_{\rho}^{\pi} || d_{\rho}^{\pi'}) = h(d_{\rho}^{\pi}) - h(d_{\rho}^{\pi'}) - \langle \nabla h(d_{\rho}^{\pi'}), d_{\rho}^{\pi} - d_{\rho}^{\pi'} \rangle,
\end{align*}
where $\nabla_{(s,a)} h(d_{\rho}^{\pi'}) = \log d_{\rho}^{\pi'}(s, a) - \log d_{\rho}^{\pi'}(s)$. Hence we only need to show that $h(d_{\rho}^{\pi})$ is convex. The Hessian matrix of function $h(d_{\rho}^{\pi})$ can be calculated as $ \diag\left(H_1, H_2, \ldots, H_{|\Sc|}\right)$, where $H_{s} = \frac{1}{d_{\rho}^{\pi}(s)}\left( \diag(d_{\rho}^{\pi}(s)/d_{\rho}^{\pi}(s, \cdot)) - \mathbf{1} \mathbf{1}^T \right)$ is an $|\Ac| \times |\Ac|$ matrix corresponding to state $s$. For each $H_s$, we know for any $x_{1:|\Ac|} \in \Rb^{|\Ac|}$,
\begin{align*}
    x^T H_s x & = \frac{1}{d_{\rho}^{\pi}(s)}\left( \sum_{a \in \Ac} \frac{d_{\rho}^{\pi}(s)}{d_{\rho}^{\pi}(s, a)} x_a^2 - \left(\sum_{a \in \Ac} x_a\right)^2 \right) \\
    &= \frac{1}{d_{\rho}^{\pi}(s)}\left( \left(\sum_{a \in \Ac} \frac{d_{\rho}^{\pi}(s, a)}{d_{\rho}^{\pi}(s)} \right)\left(\sum_{a \in \Ac} \frac{d_{\rho}^{\pi}(s)}{d_{\rho}^{\pi}(s, a)} x_a^2\right) - \left(\sum_{a \in \Ac} x_a\right)^2 \right) \\
    & \stackrel{(a)}{\geq} \frac{1}{d_{\rho}^{\pi}(s)}\left( \left(\sum_{a \in \Ac} |x_a|\right)^2 - \left(\sum_{a \in \Ac} x_a\right)^2 \right) \geq 0,
\end{align*}
where $(a)$ is due to the Cauchy-Schwarz inequality. Thus the Hessian matrix of $h(d_{\rho}^{\pi})$ is positive semi-definite, which implies that $h(d_{\rho}^{\pi})$ is convex.
\end{proof}


InnerLoop of the ARNPG framework is solving a KL-regularized MDP with value as in \eqref{eqn:inner-loop-variational}, 
\begin{align*}
    \tilde{V}_{k, \alpha}^{\pi_\theta}(\rho) = V^{\pi_\theta}_{\tilde{r}_k}(\rho) - \alpha \frac{D_{d^{\pi_\theta}_\rho}(\pi_\theta || \pi_{\theta_k})}{1 - \gamma}.
\end{align*}
This optimization can be equivalently represented by viewing state-action visitation as the decision variables:
\begin{align}
\label{eqn:inner_equ}
    \max_{\pi} V^{\pi}_{\tilde{r}_k}(\rho) - \alpha\frac{D_{d^{\pi}_\rho}(\pi || \pi_k)}{1 - \gamma} \quad \Leftrightarrow \quad \max_{d \in \Dc} \langle \tilde{r}_k, d \rangle - \alpha \tilde{D}(d || d^{\pi_k}_\rho).
\end{align}
Here $\Leftrightarrow$ means that they are equivalent in the sense that the optimal policy solution $\pi^*_k$ for the former optimization and the optimal visitation solution $d^*_k$ for the latter satisfy $d^*_k = d^{\pi^*_k}_\rho$. Note that $\tilde{V}_{\tilde{r}_{k}}^{\pi}(\rho) = \frac{1}{1 - \gamma} \langle \tilde{r}_{k}, d^{\pi}_{\rho} \rangle$ is a linear function of $d^{\pi}_{\rho}$, $\tilde{D}(\cdot || \cdot)$ is a Bregman divergence, and $\Dc$ is compact. We can apply Lemma \ref{lem:pushback} on the latter optimization and have
\begin{align}
    \langle \tilde{r}_k, d^*_{k} \rangle - \alpha \tilde{D}(d^*_k || d^{\pi_k}_\rho) & \geq \langle \tilde{r}_k, d \rangle - \alpha \tilde{D}(d || d^{\pi_k}_\rho) + \alpha \tilde{D}(d || d_k^*) ,\quad \forall d \in \Dc.
\end{align}
Since the policy class and the state-action visitation class are both complete, the inequality above implies that
\begin{align}
    V^{\pi_{k+1}}_{\tilde{r}_k}(\rho) - \alpha \frac{D_{d^{\pi_{k}^*}_\rho}(\pi_{k}^* || \pi_k)}{1 - \gamma} & \geq V^{\pi}_{\tilde{r}_k}(\rho) - \alpha \frac{D_{d^{\pi}_\rho}(\pi || \pi_k) - D_{d^{\pi}_\rho}(\pi || \pi_{k}^* )}{1 - \gamma} ,\quad \forall \pi. \label{eqn:pushback_fund}
\end{align}
InnerLoop does not seek to find the precise solution $\pi^*_k$ but approximates it with $\pi_{k+1} = \pi_k^{(t_k)}$ via $t_k$ micro-step iterations. Proposition \ref{pro:fundamental-inequality} provides a quantitative bound regarding the approximation error of $\pi_{k+1}$.

\subsection{Proof of Proposition \ref{pro:fundamental-inequality}}
\begin{proof}[Proof of Proposition \ref{pro:fundamental-inequality}]
Combining (\ref{eqn:apply_cen}) and (\ref{eqn:pushback_fund}) gives
\begin{align*}
    - \tilde{V}^{\pi_{k+1}}_k(\rho) + \alpha \frac{D_{d^{\pi_{k+1}}_\rho}(\pi_{k+1} || \pi_k)}{1 - \gamma} & \leq  -\tilde{V}^{\pi}_k(\rho) + \alpha \frac{D_{d^{\pi}_\rho}(\pi || \pi_k) -D_{d^{\pi}_\rho}(\pi || \pi_{k+1}) }{1 - \gamma} \notag \\
    & \quad\quad + 3 C_k (1 - \eta \alpha)^{t_k} + \alpha \frac{D_{d^{\pi}_\rho}(\pi || \pi_{k+1}) - D_{d^{\pi}_\rho}(\pi || \pi_{k}^*)}{1 - \gamma}.
\end{align*}
Note that
\begin{align}
    & D_{d^{\pi}_\rho}(\pi || \pi_{k+1}) - D_{d^{\pi}_\rho}(\pi || \pi_{k}^*) = \left\langle d^\pi_\rho(\cdot, \cdot), \log\frac{\pi_k^*(\cdot, \cdot)}{\pi_{k+1}(\cdot, \cdot)} \right \rangle \notag \\
    \leq& \|d^{\pi}_\rho\|_1 \|\log \pi^*_k - \log \pi_{k+1} \|_{\infty} = \|\log \pi^*_k - \log \pi_{k+1} \|_{\infty} \leq 2 C_k \alpha^{-1} (1 - \eta \alpha)^{t_k}, \notag
\end{align}
where the first inequality follows from Cauchy-Schwartz, and the last inequality is due to Lemma \ref{lem:cen}. We thus have
\begin{align*}
    - \tilde{V}^{\pi_{k+1}}_k(\rho) + \alpha \frac{D_{d^{\pi_{k+1}}_\rho}(\pi_{k+1} || \pi_k)}{1 - \gamma}  & \leq - \tilde{V}^{\pi}_k(\rho) + \alpha\frac{ D_{d^{\pi}_\rho}(\pi || \pi_k) - D_{d^{\pi}_\rho}(\pi || \pi_{k+1})}{1 - \gamma} + \frac{5C_k (1 - \eta\alpha)^{t_k}}{1 - \gamma}.
\end{align*}
We then conclude the proposition, since $\frac{5C_k (1 - \eta\alpha)^t}{1 - \gamma} \leq \epsilon_k$ can be guaranteed by $t_k \geq \frac{1}{1 - \gamma} \log(\frac{5 \gamma \|\tilde{r}_k\|_\infty}{(1-\gamma)^2 \epsilon_k})$. \end{proof}


\section{Proof in Section \ref{sec:theory-app}} \label{sec:proof-theory-app}

\subsection{ARNPG-IMD for smooth scalarization}

\begin{proof}[Proof of Theorem \ref{thm:ARNPG-IMD}]
By $|\tilde{r}_k(s,a)| = |\langle \tilde{G}_k, r_{1:m}(s,a) \rangle| \leq \|\tilde{G}_k\|_{1} \|r_{1:m}(s,a)\|_\infty \leq L$, we know $\|\tilde{r}_k\|_\infty \leq L$. Recall $\alpha \geq \frac{\beta}{(1 - \gamma)^3}$. Taking $\epsilon_k = \frac{\alpha \log(|\Ac|)}{(1 - \gamma)K}$, we choose $t_k = \lceil\frac{1}{1 - \gamma} \log(\frac{5LK}{\beta \log(|\Ac|)}) + 1\rceil.$
Thus by Proposition \ref{pro:fundamental-inequality}, for any policy $\pi$, we have the fundamental inequality 
\begin{align}
\label{eqn:fund_smooth}
    V^{\pi_{k+1}}_{\tilde{r}_k}(\rho) - \alpha \frac{D_{d^{\pi_{k+1}}_\rho}(\pi_{k+1} || \pi_k)}{1 - \gamma} & \geq V^{\pi}_{\tilde{r}_k}(\rho) - \alpha \frac{D_{d^{\pi}_\rho}(\pi || \pi_k) - D_{d^{\pi}_\rho}(\pi || \pi_{k+1})}{1 - \gamma} - \epsilon_k.
\end{align}
For the RHS of (\ref{eqn:fund_smooth}), by the concavity of $F$, we have
\begin{align*}
V^{\pi}_{\tilde{r}_k}(\rho) - V^{\pi_{k}}_{\tilde{r}_k}(\rho) & = \langle \tilde{G}_k, V^{\pi}_{1:m}(\rho) - V^{\pi_k}_{1:m}(\rho) \rangle \geq F(V_{1:m}^{\pi}(\rho)) - F(V_{1:m}^{\pi_{k}}(\rho)).
\end{align*}

For the LHS of (\ref{eqn:fund_smooth}), by the fact that $F$ is $\beta$-smooth, we know
\begin{align*}
V^{\pi_{k+1}}_{\tilde{r}_k}(\rho) - V^{\pi_{k}}_{\tilde{r}_k}(\rho) & = \langle \tilde{G}_k, V^{\pi_{k+1}}_{1:m}(\rho) - V^{\pi_k}_{1:m}(\rho) \rangle \\
&\leq F(V_{1:m}^{\pi_{k+1}}(\rho)) - F(V_{1:m}^{\pi_{k}}(\rho)) + \frac{\beta}{2} \left\|V_{1:m}^{\pi_k}(\rho) - V_{1:m}^{\pi_{k+1}}(\rho)\right\|^2_\infty.
\end{align*}
From Lemma \ref{lem:kl_property} and recalling $\alpha \ge \frac{\beta}{(1 - \gamma)^3}$,
\begin{align*}
    \frac{\beta}{2} \|V^{\pi_k}_{1:m}(\rho) - V^{\pi_{k+1}}_{1:m}(\rho) \|^2_\infty 
    \leq \frac{\gamma^2 \beta}{(1 - \gamma)^4} D_{d^{\pi_{k+1}}_\rho}(\pi_{k+1} || \pi_k) \leq \alpha \frac{D_{d^{\pi_{k+1}}_\rho}(\pi_{k+1} || \pi_k)}{1 - \gamma}.
\end{align*}
Substituting these three inequalities into the fundamental inequality \eqref{eqn:fund_smooth}, telescoping from $k = 0$ to $K-1$, and selecting $\pi = \pi^*$, we can conclude that
\begin{align*}
    &\frac{1}{K} \sum_{k=1}^{K} F(V_{1:m}^{\pi_k}(\rho)) \geq F(V_{1:m}^{\pi^*}(\rho)) - \frac{\alpha D_{d^{\pi^*}_\rho}(\pi^*||\pi_0)}{(1-\gamma)K} - \frac{1}{K} \sum_{k=0}^{K-1} \epsilon_k \ge F(V_{1:m}^{\pi^*}(\rho)) - \frac{2 \alpha \log(|\Ac|)}{(1-\gamma)K}.
\end{align*}
\end{proof}

\begin{proof}[Proof of Corollary \ref{cor:smooth-scalar}]
Note that $T = \sum_{k = 0}^{K-1} t_k = \Theta(\frac{K}{1-\gamma} \log(K))$. It implies $\frac{K}{1 - \gamma} = \Theta(T / \log(T))$. Substituting this into Theorem \ref{thm:ARNPG-IMD} concludes Corollary \ref{cor:smooth-scalar}.
\end{proof}

\subsection{ARNPG-EPD for CMDP}
We first introduce the properties of the Lagrange multiplier updates \eqref {eqn:dual-update-equation} in the following lemma. 
\begin{lemma}[Properties of Lagrange multiplier updates]
\label{lem:L_property}
Based on the update of the Lagrange multipliers $\lambda_{k}$, for any $i \in [2:m]$ we have:
\begin{enumerate}
    \item At any macro step k, $\lambda_{k, i} \geq 0$.
    \item At any macro step k, $\lambda_{k, i} + \eta' (b_i - V_{i}^{\pi_k}(\rho)) \geq 0$.
    \item At macro step 0, $|\lambda_{0, i}| \leq \eta' |V_{i}^{\pi_0}(\rho) - b_i|$; at any macro step $k > 0$, $|\lambda_{k, i}| \geq \eta' | V_{i}^{\pi_k}(\rho) - b_i|$.
\end{enumerate}
\end{lemma}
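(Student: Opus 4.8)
The plan is to prove all three properties by directly unpacking the two arguments of the maximum in the dual update \eqref{eqn:dual-update-equation}, invoking induction only where the previous multiplier enters. Throughout I would fix $i \in [2:m]$ and abbreviate $\Delta_k := \eta'(V_i^{\pi_k}(\rho) - b_i)$, so that for $k \geq 1$ the update reads $\lambda_{k,i} = \max\{\Delta_k,\ \lambda_{k-1,i} - \Delta_k\}$ while the initialization reads $\lambda_{0,i} = \max\{\Delta_0,\ 0\}$. I would establish Property 1 first, by induction on $k$. The base case is immediate since $\lambda_{0,i} = \max\{\Delta_0, 0\} \geq 0$. For the inductive step, the key observation is that the two arguments of the maximum sum to $\Delta_k + (\lambda_{k-1,i} - \Delta_k) = \lambda_{k-1,i}$, which is nonnegative by the inductive hypothesis; since $\max\{a,b\} \geq (a+b)/2$, this forces $\lambda_{k,i} \geq \lambda_{k-1,i}/2 \geq 0$.

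Property 2 does not require induction. In every case $\lambda_{k,i}$ dominates the first argument $\Delta_k$ of the maximum (for $k = 0$ because $\max\{\Delta_0, 0\} \geq \Delta_0$, and for $k \geq 1$ directly from the update rule), so $\lambda_{k,i} + \eta'(b_i - V_i^{\pi_k}(\rho)) = \lambda_{k,i} - \Delta_k \geq \Delta_k - \Delta_k = 0$.

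Property 3 I would split into the base case and $k > 0$, with each case resolved by the sign of $\Delta_k$. At $k = 0$, nonnegativity from Property 1 gives $|\lambda_{0,i}| = \lambda_{0,i} = \max\{\Delta_0, 0\}$, which equals $\eta'|V_i^{\pi_0}(\rho) - b_i|$ when $\Delta_0 \geq 0$ and equals $0 \leq \eta'|V_i^{\pi_0}(\rho) - b_i|$ otherwise, giving the claimed upper bound. For $k > 0$ I would again write $|\lambda_{k,i}| = \lambda_{k,i}$ and bound it below: if $\Delta_k \geq 0$ then $\lambda_{k,i} \geq \Delta_k = \eta'|V_i^{\pi_k}(\rho) - b_i|$ from the first argument; if $\Delta_k < 0$ then $\lambda_{k,i} \geq \lambda_{k-1,i} - \Delta_k \geq -\Delta_k = \eta'|V_i^{\pi_k}(\rho) - b_i|$, using the second argument together with $\lambda_{k-1,i} \geq 0$ (Property 1). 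The main obstacle here is purely bookkeeping rather than any analytic difficulty: one must respect the ordering (Property 1 is needed inside Property 3), track the asymmetric form of $\lambda_{0,i}$ versus the recursive $k \geq 1$ case, and in each instance match the target sign to the correct argument of the maximum.
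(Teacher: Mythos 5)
Your proof is correct and follows essentially the same route as the paper's: unpack the two arguments of the $\max$ in the dual update, induct for nonnegativity, and case-split on the sign of the constraint violation for the remaining properties. The only (pleasant) difference is your Property 1 step, where noting that the two arguments of the $\max$ sum to $\lambda_{k-1,i}$ and invoking $\max\{a,b\}\geq (a+b)/2$ replaces the paper's explicit sign case analysis.
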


\textit{Remark.} The first property guarantees the feasibility of the Lagrange multipliers; the second property ensures that the Lagrangian in the inner loop can indeed maximize the constraint rewards; and the third property is a key supporting step for the analysis of the constraint violation.

\begin{proof}[Proof of Lemma \ref{lem:L_property}] Taking any $i \in [2: m]$, we prove each property respectively.
\begin{enumerate}
    \item Note that $\lambda_{0, i} = \max\{0, \eta'(V_i^{\pi_{0}}(\rho) - b_i)\} \geq 0$ by initialization. Suppose $\lambda_{k, i} \geq 0$. The update is $\lambda_{k+1, i} = \max\left\{\eta' (V^{\pi_{k+1}}_{i}(\rho) - b_{i}), \lambda_{k,i} + \eta'(b_{i} - V^{\pi_{k+1}}_{i}(\rho)) \right\}$. \\
    If $b_i - V_{i}^{\pi_{k+1}}(\rho) < 0$, then $\lambda_{k+1, i} \geq 0$ by the first component in the $\max\{\cdot, \cdot\}$. \\
    If $b_i - V_{i}^{\pi_{k+1}}(\rho) \geq 0$, then $\lambda_{k+1, i} \ge 0$ by the second component in the $\max\{\cdot, \cdot\}$. \\
    Thus, $\lambda_{k+1, i} \geq 0$, and property can be proved by induction.
    
    \item For $k= 0$, $\lambda_{0, i} + \eta' (b_i - V_{i}^{\pi_0}(\rho)) = \max\{\eta' (b_i - V_{i}^{\pi_0}(\rho)), 0\} \geq 0$. \\
    The update is $\lambda_{k+1, i} = \max\left\{\eta' (V^{\pi_{k+1}}_{i}(\rho) - b_{i}), \lambda_{k,i} + \eta'(b_{i} - V^{\pi_{k+1}}_{i}(\rho)) \right\}$. Thus for $k \geq 0$, $\lambda_{k+1, i} + \eta' (b_i - V_{i}^{\pi_{k+1}}(\rho)) = \max\left\{0, \lambda_{k,i} + 2\eta'(b_{i} - V^{\pi_{k+1}}_{i}(\rho)) \right\} \geq 0$.
    
    \item For $k=0$, the initialization is $\lambda_{0, i} = \max\{0, \eta'(V_i^{\pi_{0}}(\rho) - b_i)\}$. \\
    ~~ If $V_{i}^{\pi_0}(\rho) - b_i \leq 0$, then $\lambda_{0, i} = 0$ and $|\lambda_{0, i}| \leq \eta'| V_{i}^{\pi_0}(\rho) - b_i|$. \\
    ~~ If $V_{i}^{\pi_0}(\rho) - b_i > 0$, then $\lambda_{0, i} = \eta'( V_{i}^{\pi_0}(\rho) - b_i)$ and $|\lambda_{0, i}| = \eta' |V_{i}^{\pi_0}(\rho) - b_i|$. \\
    For $k \geq 0$, the update is $\lambda_{k+1, i} = \max\left\{\eta' (V^{\pi_{k+1}}_{i}(\rho) - b_{i}), \lambda_{k,i} + \eta'(b_{i} - V^{\pi_{k+1}}_{i}(\rho)) \right\}$. \\
    If $V_{i}^{\pi_{k+1}}(\rho) - b_i \leq 0$, then $\lambda_{k+1, i} = \lambda_{k, i} +  \eta'(b_{i} - V^{\pi_{k+1}}_{i}(\rho))$, and $|\lambda_{k+1,i}| = \lambda_{k,i} + \eta'|V_i^{\pi_{k+1}}(\rho) - b_i| \geq \eta'|V_i^{\pi_{k+1}}(\rho) - b_i|$ by the first property that $\lambda_{k,i} \geq 0$.
    
    If $V_{i}^{\pi_{k+1}}(\rho) - b_i > 0$, then $\lambda_{k+1, i} \geq \eta' (V_{i}^{\pi_{k+1}}(\rho) - b_i) > 0$. Thus $|\lambda_{k+1, i}| \ge \eta'|V_i^{\pi_{k+1}}(\rho) - b_i|$.
\end{enumerate}
\end{proof}
We now analyze the optimality gap and constraint violation separately.
\subsubsection{Optimality gap of ARNPG-EPD}
Recall the definition of the reward in the ascent direction
\begin{align}
    \tilde{r}_k(s, a) = r_1(s, a) + \sum_{i=2}^m [\lambda_{k, i} + \eta' (b_i - V_{i}^{\pi_k}(\rho)) ] r_i(s, a). 
\end{align}
Since $r_i(s,a) \leq 1$, we can verify that $|\tilde{r}_k(s,a)| \leq 1 + \frac{\eta'(m-1)}{1 - \gamma} + \sum_{i=2}^m\lambda_{k,i} =: L_k$, which implies $\|\tilde{r}_k\|_\infty \leq L_k$. Taking $\epsilon_k = \frac{\alpha \log(|\Ac|)}{(1 - \gamma)K}$, we choose $t_k = \lceil\frac{1}{1 - \gamma} \log(\frac{5 L_k K}{2 \eta' m \log(|\Ac|)}) + 1 \rceil$.

Since $\lambda_{k, i} + \eta' (b_i - V^{\pi_k}_{i}(\rho)) \geq 0$ by the second property in Lemma \ref{lem:L_property}, and $V^{\pi^*}_{i}(\rho) \geq b_i$ for any $i \in [2:m]$, taking $\pi = \pi^*$ in Proposition \ref{pro:fundamental-inequality} gives
\begin{equation}
\label{eqn:inner_analysis}
\begin{aligned}
    &V_{1}^{\pi_{k+1}}(\rho) + \sum_{i = 2}^m [\lambda_{k,i} + \eta'(b_{i} - V^{\pi_{k}}_{i}(\rho))] \cdot [V^{\pi_{k+1}}_{i}(\rho) - b_{i}] - \alpha \frac{D_{d^{\pi_{k+1}}_{\rho}}(\pi_{k+1} || \pi_k)}{1 - \gamma}\\
    \geq & V_{1}^{\pi^*}(\rho) - \alpha\frac{D_{d^{\pi^*}_{\rho}}(\pi^* || \pi_k) - D_{d^{\pi^*}_{\rho}}(\pi^* || \pi_{k+1})}{1 - \gamma} - \epsilon_k.
\end{aligned}
\end{equation}

Denote $\delta_{k,i} := b_i - V_i^{\pi_k}(\rho)$ as the constraint violation for the $i$-th constraint at macro step $k$. We thus have
\begin{align*}
    [\lambda_{k,i} + \eta'(b_{i} - V^{\pi_{k}}_{i}(\rho))] \cdot (V^{\pi_{k+1}}_{i}(\rho) - b_{i}) = - \lambda_{k,i} \delta_{k+1, i} - \eta' \delta_{k,i}\delta_{k+1,i}.
\end{align*}
We can then bound this two terms respectively. 
\begin{itemize}
\item $\lambda_{k,i} \delta_{k+1, i}$: Note that $\lambda_{k+1, i} = \max\{- \eta' \delta_{k+1,i}, \lambda_{k,i} + \eta' \delta_{k+1, i} \}$. \\
    If $\lambda_{k+1, i} = - \eta' \delta_{k+1,i}$, then
    \begin{align*}
        \frac{1}{2}\lambda_{k+1, i}^2 - \frac{1}{2}\lambda_{k, i}^2 - \eta'^2 \delta_{k+1,i}^2 = - \frac{1}{2}\lambda_{k, i}^2 - \frac{\eta'^2}{2} \delta_{k+1,i}^2 \leq \eta' \lambda_{k, i} \delta_{k+1,i},
    \end{align*}
    which implies $ - \lambda_{k, i} \delta_{k+1,i} \leq \frac{\lambda_{k, i}^2 - \lambda_{k+1, i}^2}{2\eta'} + \eta' \delta_{k+1,i}^2$.

    If $\lambda_{k+1, i} = \lambda_{k, i} + \eta' \delta_{k+1,i}$, then
    \begin{align*}
        \eta' \lambda_{k, i} \delta_{k+1,i} &= \frac{1}{2} (\lambda_{k, i} + \eta' \delta_{k+1,i})^2 - \frac{1}{2} \lambda_{k, i}^2 - \frac{\eta'^2}{2} \delta_{k+1,i}^2 \geq \frac{1}{2} \lambda_{k+1, i}^2 - \frac{1}{2} \lambda_{k, i}^2 - \eta'^2 \delta_{k+1,i}^2,
    \end{align*}
    which also implies $ - \lambda_{k, i} \delta_{k+1,i} \leq \frac{\lambda_{k, i}^2 - \lambda_{k+1, i}^2}{2\eta'} + \eta' \delta_{k+1,i}^2$.
    \item $\eta' \delta_{k,i} \delta_{k+1, i}$: Note that $\eta' \delta_{k,i} \delta_{k+1, i} = \frac{\eta'}{2}\delta_{k,i}^2 + \frac{\eta'}{2}\delta_{k+1,i} - \frac{\eta'}{2}(\delta_{k,i} - \delta_{k+1,i})^2$, and $\frac{\eta'}{2}(\delta_{k,i} - \delta_{k+1,i})^2 \leq \frac{\gamma^2 \eta'}{(1 - \gamma)^4} D_{d^{\pi_{k+1}}_\rho}(\pi_{k+1} || \pi_k)$. We thus have $-\eta' \delta_{k,i} \delta_{k+1, i} \leq -\frac{\eta'}{2}(\delta_{k,i}^2 + \delta_{k+1,i}^2) + \frac{\gamma^2 \eta'}{(1 - \gamma)^4} D_{d^{\pi_{k+1}}_\rho}(\pi_{k+1} || \pi_k)$.
\end{itemize}
Substituting the above upper bounds into (\ref{eqn:inner_analysis}) leads to
\begin{align*}
    & V_1^{\pi_{k+1}}(\rho) + \frac{ \|\lambda_k\|_2^2 - \|\lambda_{k+1}\|_2^2 }{2 \eta'} + \eta'\frac{\|\delta_{k+1}\|_2^2 - \|\delta_k\|_2^2}{2} + \left(\frac{\eta' \gamma^2 m}{(1 - \gamma)^4} - \frac{\alpha}{1 - \gamma} \right) D_{d^{\pi_{k+1}}_\rho}(\pi_{k+1} || \pi_k) \\
    \geq& V_{1}^{\pi^*}(\rho) - \alpha\frac{D_{d^{\pi^*}_{\rho}}(\pi^* || \pi_k) - D_{d^{\pi^*}_{\rho}}(\pi^* || \pi_{k+1})}{1 - \gamma} - \epsilon_k.
\end{align*}
Recall $\alpha \ge \frac{2 \eta' m}{(1 - \gamma)^3}$, it then follows from telescoping that
\begin{align}
    \sum_{k = 1}^{K} V^{\pi_k}_{1}(\rho) & \geq K V^{\pi^*}_{1}(\rho) - \alpha \frac{D_{d^{\pi^*}_\rho}(\pi^* || \pi_{0}) - D_{d^{\pi^*}_\rho}(\pi^* || \pi_{K})}{1 - \gamma}  - \sum_{k=0}^{K-1} \epsilon_k \notag \\
    &\quad + \eta'\frac{\|\delta_0\|_2^2 - \|\delta_{K}\|_2^2}{2} + \frac{\|\lambda_K\|_2^2 - \|\lambda_0\|_2^2}{2\eta'} \\
    & = K V^{\pi^*}_{1}(\rho) - \alpha \frac{D_{d^{\pi^*}_\rho}(\pi^* || \pi_{0}) - D_{d^{\pi^*}_\rho}(\pi^* || \pi_{K})}{1 - \gamma}  - \sum_{k=0}^{K-1} \epsilon_k \notag \\
    &\quad + \left(\frac{\|\lambda_K\|_2^2}{2 \eta'} - \eta'\frac{\|\delta_{K}\|_2^2}{2}\right) + \eta' \frac{\|\delta_0\|_2^2 - \|\lambda_0\|_2^2}{2} - \frac{1/\eta' - \eta'}{2}\|\lambda_0\|_2^2 \label{eqn:thm_first_init} \\
    & \stackrel{(a)}{\geq} K V^{\pi^*}_{1}(\rho)  - \alpha \frac{D_{d^{\pi^*}_\rho}(\pi^* || \pi_{0}) - D_{d^{\pi^*}_\rho}(\pi^* || \pi_{K})}{1 - \gamma} - \sum_{k=0}^{K-1} \epsilon_k - \frac{1/\eta' - \eta'}{2}\|\lambda_0\|_2^2 \notag \\
    & \stackrel{(b)}{\geq} K V_{1}^{\pi^*}(\rho) - \frac{3 \alpha \log(|\Ac|)}{1 - \gamma}. 
\label{eqn:thm_first}
\end{align}
$(a)$ holds due to the third property of Lemma \ref{lem:L_property}, and $(b)$ holds since $\pi_0$ is the uniformly distributed policy. Thus $D_{d^{\pi^*}_\rho}(\pi^* || \pi_{0}) = \sum_{s \in \Sc} d_{\rho}^{\pi^*}(s) \sum_{a \in \Ac}$ $\pi^*(a|s) \log (|\Ac| \pi^*(a|s)) \le \log(|\Ac|)$, $\sum_{k=0}^{K-1} \epsilon_k = \frac{\alpha \log(|\Ac|)}{1 - \gamma}$, and 
$\lambda_{0,i}^2 = \eta'^2 [\delta_{0, i}]_+^2$ implying
$\frac{1/\eta' - \eta'}{2}\|\lambda_0\|^2 \leq \frac{(\eta' - \eta'^3)\|\delta_{0}\|^2}{2} \leq \frac{\eta'}{2(1-\gamma)^2} \leq \frac{\alpha \log(|\Ac|)}{1 - \gamma}$. 
We now obtain the bound \eqref{eqn:PD-opt-gap}, after dividing by $K$ on both sides.

\subsubsection{Violation gap of ARNPG-EPD}
Recall that $\delta_{k,i} := b_i - V_i^{\pi_k}(\rho)$ is the constraint violation for the $i$-th constraint at macro step $k$. We aim to provide an upper bound on $\sum_{k = 1}^{K} \delta_{k,i}$ to control the constraint violation.

For any $i \in [2:m]$, since $\lambda_{k, i} = \max\{-\eta' \delta_{k,i}, \lambda_{k-1, i} + \eta' \delta_{k,i}\} \geq \lambda_{k-1, i} + \eta' \delta_{k,i}$, we have
\begin{align}
    \sum_{k = 1}^{K} \delta_{k,i} \leq \frac{\lambda_{K, i} - \lambda_{0, i}}{\eta'} \leq \frac{\lambda_{K, i}}{\eta'} \leq \frac{\|\lambda_K\|_2}{\eta'} \leq \frac{\|\lambda^*\|_2 + \|\lambda_K - \lambda^*\|_2}{\eta'}. \label{eq:cvb-st-1}
\end{align}
To upper bound the constraint violation, it therefore suffices to bound the dual variables. 

Consider the Lagrangian with optimal dual variable $\Lc(\pi, \lambda^*) = V^{\pi}_{1}(\rho) + \sum_{i=2}^m \lambda^*_i (V^{\pi}_{i}(\rho) - b_i)$, whose maximum value $V^{\pi^*}_{1}(\rho)$ is achieved by the optimal policy $\pi^*$. We know
\begin{align*}
    & K V^{\pi^*}_{1}(\rho) \stackrel{(a)}{=} K \Lc(\pi^*, \lambda^*) \geq \sum_{k = 1}^{K} \Lc(\pi_k, \lambda^*) = \sum_{k=1}^K V^{\pi_k}_{1}(\rho) + \sum_{i = 2}^m \lambda^*_i \sum_{k=1}^K (V^{\pi_k}_{i}(\rho) - b_i) \\
    =& \sum_{k=1}^K V^{\pi_k}_{1}(\rho) - \sum_{i = 2}^m \lambda^*_i \sum_{k=1}^K \delta_{k,i} \stackrel{(b)}{\geq} \sum_{k=1}^K V^{\pi_k}_{1}(\rho) - \frac{1}{\eta'} \sum_{i = 2}^m \lambda^*_i \lambda_{K, i} \\
    \stackrel{(c)}{\geq}& K V^{\pi^*}_{1}(\rho) - \alpha\frac{D_{d^{\pi^*}_\rho}(\pi^* || \pi_{0}) - D_{d^{\pi^*}_\rho}(\pi^* || \pi_{K})}{1 - \gamma} + \frac{\|\lambda_{K}\|^2}{2\eta'} - \frac{\eta' \|\delta_K\|^2}{2} - \frac{\lambda^*_i\sum_{i = 2}^m \lambda_{K, i}}{\eta'} - \Delta_K \\
    \geq& K V^{\pi^*}_{1}(\rho) - \frac{\alpha \log(|\Ac|)}{1 - \gamma}  + \frac{\alpha D_{d^{\pi^*}_\rho}(\pi^* || \pi_{K})}{1-\gamma} + \frac{\|\lambda_{K}\|_2^2}{2\eta'} - \frac{\eta' \|\delta_K\|_2^2}{2} - \frac{\lambda^*_i\sum_{i = 2}^m \lambda_{K, i}}{\eta'} - \Delta_K,
\end{align*}
where $\Delta_K := \frac{2\alpha \log(|\Ac|)}{1 - \gamma} \geq \sum_{k=0}^{K-1}\epsilon_k + \frac{1/\eta' - \eta'}{2} \|\lambda_0\|_2^2$. 
Then $(a)$ holds due to complementary slackness $\lambda^*_i(V^{\pi^*}_i(\rho) - b_i) = 0$, $(b)$ follows from \eqref{eq:cvb-st-1}, and  $(c)$ follows from (\ref{eqn:thm_first_init}) and the third property of Lemma \ref{lem:L_property}. It then follows that
\begin{align}
    \frac{\|\lambda_K\|_2^2}{2\eta'} - \frac{\lambda^*_i\sum_{i = 2}^m \lambda_{K, i}}{\eta'} \leq \frac{\alpha \log(|\Ac|)}{1 - \gamma}  - \frac{\alpha D_{d^{\pi^*}_\rho}(\pi^* || \pi_{K})}{1-\gamma} + \frac{\eta' \|\delta_K\|_2^2}{2} + \Delta_K. \label{eqn:key-violation}
\end{align}

Denoting $\delta^*_i := b_i - V_i^{\pi^*}(\rho) \leq 0$, according to Lemma \ref{lem:kl_property}, we have
\begin{align}
    \frac{\alpha D_{d^{\pi^*}_\rho}(\pi^* || \pi_{K})}{1-\gamma} \geq \frac{(1-\gamma)^3 \alpha}{2 \gamma^2} \|\delta_{K} - \delta^*\|_\infty^2 \geq \frac{(1-\gamma)^3 \alpha}{2 \gamma^2 m} \|\delta_{K} - \delta^*\|_2^2. \label{eqn:D2delta}
\end{align}
We can also obtain
\begin{align}
    - \frac{(1-\gamma)^3 \alpha}{2 \gamma^2 m} \|\delta_{K} - \delta^*\|_2^2 + \frac{\eta'}{2} \|\delta_K\|_2^2
    & = \left(\frac{\eta'}{2} - \frac{\gamma^2 m \eta'^2}{2[\gamma^2 m \eta' - (1 - \gamma)^3 \alpha]}\right) \|\delta^*\|^2 \\
    & + \frac{\gamma^2 m \eta' - (1-\gamma)^3 \alpha}{2 \gamma^2 m} \left\|\delta_K - \delta^* + \frac{\gamma^2 m \eta'}{\gamma^2 m \eta' - (1-\gamma)^3 \alpha} \delta^*\right\|^2, \notag
\end{align}
by substituting $a = \frac{(1-\gamma)^3 \alpha}{2 \gamma^2 m}, b=\frac{\eta'}{2}, x = \delta_K - \delta^*, y = \delta^*$ into the binomial equation
\begin{align*}
    -a \|x\|_2^2 + b \|x + y\|_2^2 = (b - \frac{b^2}{b-a})\|y\|_2^2 + (b-a) \|x + \frac{b}{b-a}y\|_2^2.
\end{align*}
Recalling $\alpha \geq \frac{2 \eta' m}{(1 - \gamma)^3}$, we can verify that $\frac{\gamma^2 m \eta' - (1-\gamma)^3 \alpha}{2 \gamma^2 m} \leq 0$ and $\frac{\eta'}{2} - \frac{\gamma^2 m \eta'^2}{2[\gamma^2 m \eta' - (1 - \gamma)^3 \alpha]} \leq \eta'$. It follows that
\begin{align}
    - \frac{(1-\gamma)^3 \alpha}{2 \gamma^2 m} \|\delta_{K} - \delta^*\|_2^2 + \frac{\eta'}{2} \|\delta_K\|_2^2 \leq \eta' \|\delta^*\|_2^2. \label{eqn:complicated-binomial}
\end{align}

Substituting \eqref{eqn:D2delta} and \eqref{eqn:complicated-binomial} into \eqref{eqn:key-violation} gives
\begin{align}
    \frac{1}{2\eta'}\|\lambda_K - \lambda^*\|_2^2 =& \frac{1}{2\eta'}\|\lambda^*\|^2 + \frac{1}{2\eta'} \|\lambda_{K}\|^2 - \frac{1}{\eta'} \sum_{i = 1}^m \lambda^*_i \lambda_{K, i} \notag\\
    \leq& \frac{1}{2\eta'} \|\lambda^*\|_2^2 + \frac{\alpha \log(|\Ac|)}{1 - \gamma} + \Delta_K + \eta' \|\delta^*\|^2 \notag\\
    \leq & \frac{1}{2\eta'} \|\lambda^*\|_2^2 + \frac{3 \alpha \log(|\Ac|)}{1 - \gamma} + \frac{\eta' (m-1)}{(1-\gamma)^2} \notag\\
    \leq & \frac{1}{2\eta'} \|\lambda^*\|_2^2 + \frac{4 \alpha \log(|\Ac|)}{1- \gamma}, \label{eqn:lambda_bound}
\end{align}
where the last inequality follows from $\frac{\eta' (m-1)}{(1 - \gamma)^2} \leq \frac{\alpha \log(|\Ac|)}{1 - \gamma}$. Using the above bound in \eqref{eq:cvb-st-1}, we get 
\begin{align}
    \sum_{k = 1}^{K} \delta_{k,i} & \leq \frac{\|\lambda^*\|_2}{\eta'} + \frac{\|\lambda_{K} - \lambda^*\|_2}{\eta'} \leq \frac{\|\lambda^*\|_2}{\eta'} + \sqrt{ \frac{\|\lambda^*\|_2^2}{\eta'^2} + \frac{8 \alpha \log(|\Ac|)}{(1 - \gamma)\eta'}} \notag\\
    & \leq 2 \frac{\|\lambda^*\|_2}{\eta'} + 3\sqrt{\frac{\alpha \log(|\Ac|)}{(1 - \gamma)\eta'}},
\label{eqn:thm_second}
\end{align}
from which we the  constraint violation upper bound given in (\ref{eqn:con-vio}) follows.

\begin{proof}[Proof of Theorem \ref{thm:ARNPG-EPD}]
We can conclude Theorem \ref{thm:ARNPG-EPD} from the above discussion on the optimality gap and the constraint violation.
\end{proof}

\begin{proof}[Proof of Corollary \ref{cor:ARNPG-EPD}]
Note that the number of iterations in the inner loop depends on the value of dual variables, i.e., $t_k = \lceil\frac{1}{1 - \gamma} \log(\frac{5 L_k K}{2 \eta' m \log(|\Ac|)}) + 1 \rceil$ with $L_k = 1 + \frac{\eta' (m-1)}{1-\gamma} + \sum_{i=2}^m \lambda_{k, i}$. It is easy to verify that
\begin{align*}
    \frac{1}{2\eta'}\|\lambda_k - \lambda^*\|_2^2 \leq \frac{1}{2\eta'} \|\lambda^*\|_2^2 + \frac{4 \alpha \log(|\Ac|)}{1- \gamma}
\end{align*}
in the same manner as the proof of inequality \eqref{eqn:lambda_bound}. It then follows that
\begin{align*}
    \sum_{i =2}^m \lambda_{k,i} = \|\lambda_k\|_1 &\leq \sqrt{m} \|\lambda_k\|_2 \leq \sqrt{m} (\|\lambda_k - \lambda^*\| + \|\lambda^*\|) \\
    &\leq \sqrt{2m\|\lambda^*\|_2^2 + \frac{8\eta' m \alpha \log(|\Ac|)}{1- \gamma}} = O\left( \sqrt{m}\|\lambda^*\|_2 + \frac{m \log(|\Ac|)}{(1-\gamma)^2}\right).
\end{align*}
We then have $t_k = \Theta\left(\frac{1}{1 - \gamma} \log(K) \right)$, and $T = \sum_{k=0}^{K-1} t_k = \Theta\left(\frac{K}{1 - \gamma} \log(K) \right)$. We  conclude the proof by $\frac{K}{1 - \gamma} = \Theta(T / \log(T))$.
\end{proof}

\subsection{ARNPG-OMDA for max-min trade-off}

\subsubsection{Smoothness property} 
Define $\Xc := \Vc_\rho \times \Delta([m]) \subset \Rb^{2m}$. Define a norm $\Psi$ on $\Rb^{2m}$ by $\Psi(v, \lambda) = \|v\|_\infty + \|\lambda\|_1$. Its dual norm is $\Psi^*(v, \lambda) = \|v\|_1 + \|\lambda\|_\infty$. 

Define $G^{v, -\lambda}(X) := ( \nabla_v \Phi(X), - \nabla_\lambda \Phi(X) )$ for $X \in \Xc$. Assume the function $\Phi$ is $\beta$-smooth w.r.t.~the $\Psi$-norm over its domain $\Xc$, i.e., 
\begin{align}
    \Psi^*( G^{v,-\lambda}(X) - G^{v, -\lambda}(X')) \leq \beta \Psi(X - X'), \quad \forall X, X' \in \Xc. \label{eqn:minimax-smooth}
\end{align}

Define $E_k$, which will be an auxiliary term for the convergence analysis, as follows:
\begin{align}
    E_k := & \langle \tilde{G}^v_{k} - \tilde{G}^v_{k+1}, V^{\pi_{k+1}}_{1:m}(\rho) - V^{\tilde{\pi}_{k+1}}_{1:m}(\rho) \rangle + \alpha \frac{D_{d^{\tilde{\pi}_{k+1}}_\rho}(\tilde{\pi}_{k+1} || \pi_k) + D_{d^{\pi}_\rho}(\pi_{k+1} || \tilde{\pi}_{k+1}) }{1 - \gamma} \notag \\
    & + \langle \tilde{G}_{k}^{\lambda} - \tilde{G}_{k+1}^{\lambda}, \tilde{\lambda}_{k+1} - \lambda_{k+1} \rangle + \frac{D(\lambda_{k+1} || \tilde{\lambda}_{k+1}) + D(\tilde{\lambda}_{k+1} || \lambda_k)}{\eta'}. \label{eqn:Ek}
\end{align}
\begin{lemma}[Technical lemma for smoothness]\label{lem:minimax-technical}
When $\alpha \geq \frac{6 \beta}{(1 - \gamma)^4}$ and $\eta' \leq \frac{1}{6 \beta}$, $\sum_{k = 0}^{K-1} E_k \geq 0$.
\end{lemma}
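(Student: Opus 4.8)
The plan is to exploit the optimistic (extragradient-type) structure of ARNPG-OMDA, following the template of Wei et al. \cite{wei2020linear}. First I would consolidate the two inner products in $E_k$ into a single inner product in $\Rb^{2m}$. Writing $X_k := (V^{\tilde\pi_k}_{1:m}(\rho), \tilde\lambda_k)$, the directions used at macro step $k$ are exactly $G^{v,-\lambda}(X_k) = (\tilde G^v_k, -\tilde G^\lambda_k)$, and a direct rearrangement (negating both factors in the $\lambda$-term) gives
\[
\langle \tilde G^v_k - \tilde G^v_{k+1}, V^{\pi_{k+1}}_{1:m}(\rho) - V^{\tilde\pi_{k+1}}_{1:m}(\rho)\rangle + \langle \tilde G^\lambda_k - \tilde G^\lambda_{k+1}, \tilde\lambda_{k+1} - \lambda_{k+1}\rangle = \langle G^{v,-\lambda}(X_k) - G^{v,-\lambda}(X_{k+1}), Z_{k+1}\rangle,
\]
where $Z_{k+1} := (V^{\pi_{k+1}}_{1:m}(\rho) - V^{\tilde\pi_{k+1}}_{1:m}(\rho),\, \lambda_{k+1} - \tilde\lambda_{k+1})$. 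Applying H\"older's inequality with the dual pair $(\Psi,\Psi^*)$ and the $\beta$-smoothness \eqref{eqn:minimax-smooth} lower-bounds this cross term by $-\beta\,\Psi(X_k - X_{k+1})\,\Psi(Z_{k+1})$.

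Second, I would convert all norms into the divergences that make up the remaining, manifestly nonnegative part of $E_k$. For the value coordinates, Lemma \ref{lem:kl_property} (whose proof applies to any pair of policies, not only consecutive iterates) gives $\|V^\pi_{1:m}(\rho) - V^{\pi'}_{1:m}(\rho)\|_\infty^2 \le \tfrac{2\gamma^2}{(1-\gamma)^4} D_{d^{\pi'}_\rho}(\pi'\|\pi)$; for the $\lambda$ coordinates, Pinsker's inequality gives $\|\lambda - \lambda'\|_1^2 \le 2 D(\lambda\|\lambda')$. Hence $\Psi(Z_{k+1})$ is controlled directly by $\sqrt{D_{d^{\pi_{k+1}}_\rho}(\pi_{k+1}\|\tilde\pi_{k+1})}$ and $\sqrt{D(\lambda_{k+1}\|\tilde\lambda_{k+1})}$, both of which enter $E_k$ with positive sign.

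The crux, and the main obstacle, is $\Psi(X_k - X_{k+1})$: it compares the \emph{intermediate} iterates of two consecutive macro steps and so matches no single divergence inside $E_k$. Here I would use that both sub-updates of macro step $k$ share the anchor $(\pi_k,\lambda_k)$ and split through it by the triangle inequality,
\[
\|V^{\tilde\pi_k}_{1:m}(\rho) - V^{\tilde\pi_{k+1}}_{1:m}(\rho)\|_\infty \le \|V^{\tilde\pi_k}_{1:m}(\rho) - V^{\pi_k}_{1:m}(\rho)\|_\infty + \|V^{\pi_k}_{1:m}(\rho) - V^{\tilde\pi_{k+1}}_{1:m}(\rho)\|_\infty,
\]
and likewise $\|\tilde\lambda_k - \tilde\lambda_{k+1}\|_1 \le \|\tilde\lambda_k - \lambda_k\|_1 + \|\lambda_k - \tilde\lambda_{k+1}\|_1$. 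In each line the second summand is bounded by a divergence living in $E_k$ ($D_{d^{\tilde\pi_{k+1}}_\rho}(\tilde\pi_{k+1}\|\pi_k)$, respectively $D(\tilde\lambda_{k+1}\|\lambda_k)$), while the \emph{first} summand is bounded by a divergence living in $E_{k-1}$, namely $D_{d^{\pi_k}_\rho}(\pi_k\|\tilde\pi_k)$ and $D(\lambda_k\|\tilde\lambda_k)$. This is precisely why the claim concerns the \emph{sum} $\sum_k E_k$ and not individual $E_k$: the negative cross-term at step $k$ borrows from the positive drift produced at step $k-1$. The boundary case $k=0$ is clean because the initialization sets $\tilde\pi_0 = \pi_0$ and $\tilde\lambda_0 = \lambda_0$, so these leftover pieces vanish.

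Finally I would apply Young's inequality $ab \le \tfrac12(a^2+b^2)$ to each product of square-roots of divergences, turning $\sum_{k=0}^{K-1} E_k$ into a single linear combination of the four divergence families $D_{d^{\tilde\pi_{k+1}}_\rho}(\tilde\pi_{k+1}\|\pi_k)$, $D_{d^{\pi_{k+1}}_\rho}(\pi_{k+1}\|\tilde\pi_{k+1})$, $D(\lambda_{k+1}\|\tilde\lambda_{k+1})$, $D(\tilde\lambda_{k+1}\|\lambda_k)$. Each divergence receives its positive coefficient ($\tfrac{\alpha}{1-\gamma}$ for the policy terms, $\tfrac1{\eta'}$ for the dual terms) and, after the smoothness/Young step, negative contributions of order $\beta\gamma^2/(1-\gamma)^4$ and $\beta$ gathered from both $E_k$ and $E_{k+1}$. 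The net coefficient of every divergence is nonnegative once $\alpha \ge \tfrac{6\beta}{(1-\gamma)^4}$ and $\eta' \le \tfrac1{6\beta}$, which yields $\sum_{k=0}^{K-1} E_k \ge 0$. The most error-prone part is the bookkeeping of these constants—tracking the factors of $2$ introduced by the triangle and AM-GM steps—but it is routine once the telescoping structure above is in place.
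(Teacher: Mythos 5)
Your proposal is correct and follows essentially the same route as the paper's proof: consolidate the two inner products into one $\Psi$/$\Psi^*$ Hölder bound, route the resulting $\Psi(\tilde X_k-\tilde X_{k+1})$-type term through the shared anchor $(\pi_k,\lambda_k)$ so that one piece is absorbed by the drift in $E_k$ and the other by the drift in $E_{k-1}$ (with $\tilde\pi_0=\pi_0$, $\tilde\lambda_0=\lambda_0$ handling the boundary), convert norms to divergences via Lemma~\ref{lem:kl_property} and Pinsker, and close with Young's inequality under the stated conditions on $\alpha$ and $\eta'$. The only cosmetic difference is that the paper applies the triangle inequality at the gradient level before invoking smoothness on each piece, whereas you apply smoothness first and then split the argument norm; the resulting bounds are identical.
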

\begin{proof}[Proof of Lemma \ref{lem:minimax-technical}]

Recall the definition of $E_k$ \eqref{eqn:Ek}. 
Let $X_k := ( V^{\pi_{k}}_{1:m}(\rho), \lambda_{k} ) \in \Xc$ and $\tilde{X}_k := ( V^{\tilde{\pi}_{k}}_{1:m}(\rho), \tilde{\lambda}_{k} ) \in \Xc$; $G^{v, -\lambda}_k := G^{v, -\lambda}(X_k)$ and $\tilde{G}^{v, -\lambda}_{k} := G^{v, -\lambda}(\tilde{X}_k)$. We can then rewrite $E_k$ as
\begin{align*}
    E_k = & \langle \tilde{G}^{v,-\lambda}_k - \tilde{G}^{v,-\lambda}_{k+1}, X_{k+1} - \tilde{X}_{k+1} \rangle + \alpha \frac{D_{d^{\tilde{\pi}_{k+1}}_\rho}(\tilde{\pi}_{k+1} || \pi_k) + D_{d^{\pi}_\rho}(\pi_{k+1} || \tilde{\pi}_{k+1}) }{1 - \gamma} \notag \\
    & + \frac{D(\lambda_{k+1} || \tilde{\lambda}_{k+1}) + D(\tilde{\lambda}_{k+1} || \lambda_k)}{\eta'}.
\end{align*}
We can obtain
\begin{align*}
    & \langle \tilde{G}^{v,-\lambda}_{k+1} - \tilde{G}^{v,-\lambda}_k, X_{k+1} - \tilde{X}_{k+1} \rangle \stackrel{(a)}{\leq} \Psi^*(\tilde{G}^{v,-\lambda}_{k+1} - \tilde{G}^{v,-\lambda}_k) \Psi(X_{k+1} - \tilde{X}_{k+1}) \\
    \stackrel{(b)}{\leq} & \Psi^*(\tilde{G}^{v,-\lambda}_{k+1} - G^{v,-\lambda}_k) \Psi(X_{k+1} - \tilde{X}_{k+1}) + \Psi^*(G^{v,-\lambda}_k - \tilde{G}^{v,-\lambda}_k) \Psi(X_{k+1} - \tilde{X}_{k+1}) \\
    \stackrel{(c)}{\leq} & \beta \Psi(\tilde{X}_{k+1} - X_{k}) \Psi(X_{k+1} - \tilde{X}_{k+1}) + \beta \Psi(X_{k} - \tilde{X}_{k}) \Psi(X_{k+1} - \tilde{X}_{k+1}) \\
    \stackrel{(d)}{\leq}& \frac{\beta}{\sqrt{8} - 2} \Psi(\tilde{X}_{k+1} - X_{k})^2 + \left( \frac{\beta}{\sqrt{8} + 2} + \frac{\beta}{2} \right) \Psi(X_{k+1} - \tilde{X}_{k+1})^2 + \frac{\beta}{2} \Psi(X_k - \tilde{X}_{k})^2.
\end{align*}
Inequality $(a)$ follows from the Cauchy-Schwarz inequality for the $\Psi$-norm; $(b)$ from the triangle inequality; $(c)$ from the smoothness of function $\Phi$ defined in \eqref{eqn:minimax-smooth}; and $(d)$ from $ac + bc \leq \frac{a^2}{\sqrt{8} - 2} + \frac{c^2}{\sqrt{8} + 2} + \frac{b^2}{2} + \frac{c^2}{2}$.

Since $X_0 = \tilde{X}_0$, $\frac{1}{\sqrt{8} + 2} + \frac{1}{2} + \frac{1}{2} = \frac{1}{\sqrt{8} - 2}$, and $\Psi(v, \lambda)^2 \leq 2 \|v\|_\infty^2 + 2 \|\lambda\|_1^2$, we have
\begin{align*}
    & \sum_{k = 0}^{K-1} \langle \tilde{G}^{v,-\lambda}_{k+1} - \tilde{G}^{v,-\lambda}_k, X_{k+1} - \tilde{X}_{k+1} \rangle \\
    \leq& \frac{\beta}{\sqrt{8} - 2} \sum_{k = 0}^{K-1} \Psi(\tilde{X}_{k+1} - X_{k})^2 + \frac{\beta}{\sqrt{8} - 2} \sum_{k = 1}^{K} \Psi(X_k - \tilde{X}_{k})^2 \\
    \leq& \frac{2 \beta}{\sqrt{8} - 2} \sum_{k = 0}^{K-1} \left( \|V^{\tilde{\pi}_{k+1}}_{1:m}(\rho) - V^{\pi_k}_{1:m}(\rho)\|_\infty^2 + \|\tilde{\lambda}_{k+1}-\lambda_{k}\|_1^2 \right. \notag \\
    & \left. \hspace{3cm} + \|V^{\tilde{\pi}_{k+1}}_{1:m}(\rho) - V^{\pi_{k+1}}_{1:m}(\rho)\|_\infty^2 + \|\tilde{\lambda}_{k+1} - \lambda_{k+1}\|_1^2 \right).
\end{align*}
Noting that $\frac{2 \beta}{\sqrt{8} - 2} \leq 3 \beta$, by Lemma \ref{lem:kl_property} we have 
\begin{align*}
    \frac{2 \beta}{\sqrt{8} - 2} \|V^{\tilde{\pi}_{k+1}}_{1:m}(\rho) -  V^{\pi_k}_{1:m}(\rho)\|_\infty^2 \leq \frac{6 \gamma^2 \beta}{(1 - \gamma)^4} D_{d^{\tilde{\pi}_{k+1}}_{\rho}}(\tilde{\pi}_{k+1} || \pi_k).
\end{align*}
By Pinsker's inequality, we have 
\begin{align*}
    \frac{2 \beta}{\sqrt{8} - 2} \|\tilde{\lambda}_{k+1} - \lambda_{k+1}\|_1^2 \leq 6\beta D(\lambda_{k+1} || \tilde{\lambda}_{k+1}).
\end{align*}
Since $\alpha \geq \frac{6 \beta}{(1 - \gamma)^4}$ and $\eta' \leq \frac{1}{6 \beta}$, we conclude that $\sum_{k=0}^{K - 1}E_k \geq 0$.
\end{proof}

\subsubsection{Convergence of ARNPG-OMDA}
\begin{proof}[Proof of Theorem \ref{thm:ARNPG-OMDA}]
By $|\tilde{r}_k(s,a)| = |\langle \tilde{G}_k^{v}, r_{1:m}(s,a) \rangle| \leq \|\tilde{G}_k^{v}\|_{1} \|r_{1:m}(s,a)\|_\infty \leq L$, we know $\|\tilde{r}_k\|_\infty \leq L$. Taking $\epsilon_k = \frac{\alpha \log(|\Ac|)}{(1 - \gamma)K}$, we choose $t_k = \lceil\frac{1}{1 - \gamma} \log(\frac{5 L K}{6 \beta \log(|\Ac|)}) + 1 \rceil$.

Then by Proposition \ref{pro:fundamental-inequality}, for any policy $\pi$, we have two fundamental inequalities for the updates $\tilde{\pi}_{k+1}$ and $\pi_{k+1}$ respectively:
\begin{align*}
    V^{\tilde{\pi}_{k+1}}_{\tilde{r}_k}(\rho) - \alpha \frac{D_{d^{\tilde{\pi}_{k+1}}_\rho}(\tilde{\pi}_{k+1} || \pi_k)}{1 - \gamma} & \geq V^{\pi}_{\tilde{r}_k}(\rho) - \alpha \frac{D_{d^{\pi}_\rho}(\pi || \pi_k) - D_{d^{\pi}_\rho}(\pi || \tilde{\pi}_{k+1})}{1 - \gamma} - \epsilon_k, \\
    V^{\pi_{k+1}}_{\tilde{r}_{k+1}}(\rho) - \alpha \frac{D_{d^{\pi_{k+1}}_\rho}(\pi_{k+1} || \pi_k)}{1 - \gamma} & \geq V^{\pi}_{\tilde{r}_{k+1}}(\rho) - \alpha \frac{D_{d^{\pi}_\rho}(\pi || \pi_k) - D_{d^{\pi}_\rho}(\pi || \pi_{k+1})}{1 - \gamma} - \epsilon_k.
\end{align*}
Note that $V_{\tilde{r}_k}^{\pi}(\rho) = \langle \tilde{G}^v_k,  V^{\pi}_{1:m}(\rho) \rangle $. Taking $\pi = \pi_{k+1}$ in the first inequality, and summing two inequalities gives
\begin{align}
    & \langle \tilde{G}^{v}_{k+1},  V^{\tilde{\pi}_{k+1}}_{1:m}(\rho) - V^{\pi}_{1:m}(\rho) \rangle  \geq \alpha \frac{ D_{d^{\pi}_\rho}(\pi || \pi_{k+1}) - D_{d^{\pi}_\rho}(\pi || \pi_k)}{1 - \gamma} - 2 \epsilon_k \label{eqn:fund-v} \\
    & \quad \quad \quad + \langle \tilde{G}^v_{k} - \tilde{G}^v_{k+1}, V^{\pi_{k+1}}_{1:m}(\rho) - V^{\tilde{\pi}_{k+1}}_{1:m}(\rho) \rangle + \alpha \frac{D_{d^{\tilde{\pi}_{k+1}}_\rho}(\tilde{\pi}_{k+1} || \pi_k) + D_{d^{\pi}_\rho}(\pi_{k+1} || \tilde{\pi}_{k+1}) }{1 - \gamma}.  \notag
\end{align}

We can similarly get the inequality for $\lambda$ that
\begin{align}
    \langle\tilde{G}_{k}^{\lambda}, \tilde{\lambda}_{k+1} \rangle +  \frac{D(\tilde{\lambda}_{k+1} || \lambda_k)}{\eta'} & \leq \langle\tilde{G}_{k}^{\lambda}, \lambda \rangle +  \frac{D(\lambda || \lambda_k) - D(\lambda || \tilde{\lambda}_{k+1})}{\eta'}, \\
    \langle\tilde{G}_{k+1}^{\lambda}, \lambda_{k+1} \rangle +  \frac{D(\lambda_{k+1} || \lambda_k)}{\eta'} & \leq \langle\tilde{G}_{k+1}^{\lambda}, \lambda \rangle +  \frac{D(\lambda || \lambda_k) - D(\lambda || \lambda_{k+1})}{\eta'}.
\end{align}
Taking $\lambda = \lambda_{k+1}$ in the first inequality and summing two inequalities gives
\begin{align}
    & \langle\tilde{G}_{k+1}^{\lambda}, \lambda - \tilde{\lambda}_{k+1} \rangle \geq \frac{D(\lambda || \lambda_{k+1}) - D(\lambda || \lambda_k)}{\eta'} \notag \\
    &\quad\quad  + \langle \tilde{G}_{k}^{\lambda} - \tilde{G}_{k+1}^{\lambda}, \tilde{\lambda}_{k+1} - \lambda_{k+1} \rangle + \frac{D(\lambda_{k+1} || \tilde{\lambda}_{k+1}) + D(\tilde{\lambda}_{k+1} || \lambda_k)}{\eta'}. \label{eqn:fund-lambda}
\end{align}

Recall the definition of $E_k$ in \eqref{eqn:Ek} that
\begin{align*}
    E_k = & \langle \tilde{G}^v_{k} - \tilde{G}^v_{k+1}, V^{\pi_{k+1}}_{1:m}(\rho) - V^{\tilde{\pi}_{k+1}}_{1:m}(\rho) \rangle + \alpha \frac{D_{d^{\tilde{\pi}_{k+1}}_\rho}(\tilde{\pi}_{k+1} || \pi_k) + D_{d^{\pi}_\rho}(\pi_{k+1} || \tilde{\pi}_{k+1}) }{1 - \gamma} \notag \\
    & + \langle \tilde{G}_{k}^{\lambda} - \tilde{G}_{k+1}^{\lambda}, \tilde{\lambda}_{k+1} - \lambda_{k+1} \rangle + \frac{D(\lambda_{k+1} || \tilde{\lambda}_{k+1}) + D(\tilde{\lambda}_{k+1} || \lambda_k)}{\eta'}.
\end{align*}
We then have
\begin{align*}
    & - \Phi(V^{\pi}_{1:m}(\rho) , \tilde{\lambda}_{k+1}) + \Phi(V^{\tilde{\pi}_{k+1}}_{1:m}(\rho) , \lambda) \\
    =& \Phi(V^{\tilde{\pi}_{k+1}}_{1:m}(\rho) , \tilde{\lambda}_{k+1}) - \Phi(V^{\pi}_{1:m}(\rho) , \tilde{\lambda}_{k+1}) + \Phi(V^{\tilde{\pi}_{k+1}}_{1:m}(\rho) , \lambda) - \Phi(V^{\tilde{\pi}_{k+1}}_{1:m}(\rho) , \tilde{\lambda}_{k+1}) \\
    \stackrel{(a)}{\geq}& \langle \tilde{G}^{v}_{k+1},  V^{\tilde{\pi}_{k+1}}_{1:m}(\rho) - V^{\pi}_{1:m}(\rho) \rangle + \langle\tilde{G}_{k+1}^{\lambda}, \lambda - \tilde{\lambda}_{k+1} \rangle \\
    \stackrel{(b)}{\geq}& \alpha \frac{ D_{d^{\pi}_\rho}(\pi || \pi_{k+1}) - D_{d^{\pi}_\rho}(\pi || \pi_k)}{1 - \gamma} + \frac{D(\lambda || \lambda_{k+1}) - D(\lambda || \lambda_k)}{\eta'} - 2 \epsilon_k + E_k.
\end{align*}
Inequality $(a)$ is by the concavity of $\Phi(\cdot, \tilde{\lambda}_{k+1})$ and convexity of $\Phi(V^{\tilde{\pi}_{k+1}}_{1:m}(\rho), \cdot)$. Inequality
$(b)$ is based on combining \eqref{eqn:fund-v} and \eqref{eqn:fund-lambda}.

Taking $\pi = \pi^*$ and $\lambda = \argmin_{\lambda' \in \Lambda} \Phi\left(\frac{1}{K} \sum_{k=1}^{K} V^{\tilde{\pi}_{k}}_{1:m}(\rho), \lambda' \right)$, we have
\begin{align*}
    & F\left(\frac{1}{K} \sum_{k=1}^{K} V^{\tilde{\pi}_{k}}_{1:m}(\rho)\right) = \Phi\left(\frac{1}{K} \sum_{k=1}^{K} V^{\tilde{\pi}_{k}}_{1:m}(\rho), \lambda \right) \geq \frac{1}{K} \sum_{k=1}^{K} \Phi(V^{\tilde{\pi}_{k}}_{1:m}(\rho), \lambda) \\
    \geq& \frac{1}{K} \sum_{k=0}^{K-1} \Phi(V^{\pi^*}_{1:m}(\rho), \tilde{\lambda}_{k+1}) + \alpha \frac{D_{d^{\pi^*}_\rho}(\pi^* || \pi_{K}) - D_{d^{\pi^*}_\rho}(\pi^* || \pi_0)}{(1 - \gamma)K} + \frac{D(\lambda || \lambda_K) - D(\lambda || \lambda_0)}{\eta' K} \notag \\
    \quad& - \frac{2}{K}\sum_{k=0}^{K-1} \epsilon_k + \frac{1}{K}\sum_{k=0}^{K-1} E_k \\
    \stackrel{(a)}{\geq}& F(V^{\pi^*}_{1:m}(\rho)) - \frac{3\alpha \log(|\Ac|)}{(1 - \gamma)K} - \frac{\log(m)}{\eta' K}.
\end{align*}
Inequality $(a)$ is due to $D_{d_{\rho}^{\pi^*}} (\pi^*||\pi_0) \leq \log(|\Ac|)$ and Lemma \ref{lem:minimax-technical}.
\end{proof}

\begin{proof}[Proof of Corollary \ref{cor:ARNPG-OMDA}]
Note that $T = \sum_{k = 0}^{K-1} t_k = \Theta(\frac{K}{1-\gamma} \log(K))$. It implies $\frac{K}{1 - \gamma} = \Theta(T / \log(T))$. Substituting this into Theorem \ref{thm:ARNPG-OMDA} concludes Corollary \ref{cor:ARNPG-OMDA}.
\end{proof}

\section{More related works} \label{sec:app_related_work}

There are in general two scenarios when considering problems of multi-objective Markov decision processes (cf. survey \cite{roijers2013survey}): single-policy scenario and multi-policy scenario. This work focuses on the single-policy scenario, which we will simply refer to as multi-objective MDP, and we relegate the discussion of the multi-policy scenario to the end of this section. 

\paragraph{Multi-objective MDP}
In the single-policy scenario,
the agent optimizes the reward objectives according to the user's criteria. 
The case of linear scalarization with known weights simply collapses to a canonical MDP.  Therefore, the focus of this line of research is on nonlinear scalarization. The lexicographical ordering of objectives was considered by Wray et al. \cite{wray2015multi} and Saisubramanian et al. \cite{saisubramanian2021multi}. Moffaert et al. \cite{van2013scalarized} studied a Chebyshev scalarization i.e., weighted $L_{\infty}$ scalarization via a Q-learning approach. Bai et al. \cite{bai2021joint} established an $O(1/\epsilon^4)$ sample complexity for a policy-gradient method under smooth concave scalarization. Besides optimization (pure exploration), the exploration-exploitation trade-off has also been studied, where with concave scalarization, an $O(\sqrt{T})$ regret was established under the Lipschitz continuous assumption on $F$ by \cite{agarwal2019reinforcement, wu2020accommodating}.

\paragraph{Constrained MDP} The constrained MDP (CMDP) is viewed as a special case of a multi-objective MDP in this paper. CMDPs have attracted much attention recently. 
Ding et al. \cite{ding2020natural} proposed an NPG-PrimalDual algorithm that uses a primal-dual approach with NPG and showed that it can achieve $O(1/\sqrt{T})$ global convergence for both the optimality gap and the constraint violation. Xu et al. \cite{xu2021crpo} proposed a primal approach called constrained-rectified policy optimization (CRPO) that updates the policy alternatively between optimizing objective and decreasing constraint violation, and enjoys the same $O(1/\sqrt{T})$ global convergence. Whether policy optimization for CMDP can achieve global convergence with a rate faster than $O(1/\sqrt{T})$ was stated as an open problem in \cite{ding2020natural}. There are two concurrent works \cite{ying2021dual,li2021faster} tackling the problem with positive answers, with some ergodic assumptions made to facilitate the proof of $\tilde{O}(1/T)$ global convergence. Ying et al. \cite{ying2021dual} propose an NPG-aided dual approach, where the dual function is smoothed by entropy regularization in the objective function. They show an $\tilde{O}(1/T)$ convergence rate to the optimal policy of the \textit{entropy-regularized} CMDP, but not to the true optimal policy, for which there is an established slow $O(1/\sqrt{T})$ convergence rate. They also make an additional strong assumption that the initial state distribution covers the entire state space. While such an assumption was initially used in the analysis of the global convergence of PG methods for MDPs \cite{agarwal2021theory, mei2020global}, it is not required when analyzing the global convergence of NPG methods \cite{agarwal2021theory, cen2021fast}. Moreover, this assumption does not necessarily hold for safe RL or CMDP, since the algorithm may need to avoid dangerous states even at initialization, with the consequence that the optimal policy depends on the initial state distribution. Li et al. \cite{li2021faster} propose a primal-dual approach with an $O(\log^2(T)/T)$ convergence rate to the true optimal policy by smoothing the Lagrangian with suitable regularization on both primal and dual variables. However, they assume that the Markov chain induced by any stationary policy is ergodic in order to ensure the smoothness of the dual function. This assumption, though weaker than the assumption made by \cite{ying2021dual}, will generally not hold in problems where one wishes to avoid unsafe states altogether.

\paragraph{Multi-policy scenario}
In the \textit{multi-policy} scenario, the agent is to determine not one policy but a set of policies, so that once a set of weights of a linear scalarization is subsequently provided, a policy from the determined set can be deployed \cite{roijers2013survey}. This approach essentially aims at determining the convex coverage set (CCS) of the Pareto frontier. Yang et al. \cite{yang2019generalized} proposed an envelope Q-learning algorithm based on the multi-objective Bellman optimality operator to utilize the convex envelope of the solution frontier. Zhou et al. \cite{zhou2020provable} studied the sample complexity under a generative model setting. Kyriakis et al. \cite{kyriakis2022pareto} utilized a policy gradient solver to search for a direction that is simultaneously an ascent direction for all objectives, utilizing a new loss function called Pareto Policy Adaptation. Wu et al. \cite{wu2020accommodating} considered preference-free exploration, where the agent collects samples in the exploration phase and computes a near-optimal policy for any preference-weighted reward function during the planning phase.

\end{document}